\def\VersionFinal{}

\ifdefined\VersionLong \newcommand{\LongVersion}[1]{#1}
	\newcommand{\ShortVersion}[1]{}
\else
	\newcommand{\LongVersion}[1]{}
	\newcommand{\ShortVersion}[1]{#1}
\fi
\ifdefined\VersionAnonymous \newcommand{\OnlyInAnonymousVersion}[1]{#1}
    \newcommand{\NotInAnonymousVersion}[1]{}
\else
    \newcommand{\OnlyInAnonymousVersion}[1]{}
    \newcommand{\NotInAnonymousVersion}[1]{#1}
\fi

\documentclass{article}

\usepackage{microtype}
\usepackage{graphicx}
\usepackage{subcaption}
\usepackage{booktabs} \usepackage{hyperref}

\usepackage{amsmath}
\usepackage{amssymb}
\usepackage{mathtools}
\usepackage{amsthm}
\usepackage{siunitx}
\ifdefined\VersionAnonymous \usepackage{icml2026}
\else\ifdefined\VersionFinal \usepackage[accepted]{icml2026}
\else
\usepackage[preprint]{icml2026}
\fi\fi

\usepackage[capitalize,noabbrev,nameinlink]{cleveref}
\crefname{subsection}{\text{Section}}{\text{Sections}}
\crefname{section}{\S\!}{\S\S\!}
\crefname{subsection}{\S\!}{\S\S\!}

\usepackage{paralist}

\usepackage{comment}
\usepackage{fontawesome5} \usepackage{xcolor}       \usepackage{array}        \usepackage{tabularx}
\usepackage{xurl}
\usepackage{makecell}
\usepackage{pifont}
\usepackage[normalem]{ulem}
\usepackage{xspace}
\usepackage[russian,                USenglish]{babel}
\usepackage{CJKutf8}
\usepackage{tablefootnote}
\newcommand{\ja}[1]{\begin{CJK}{UTF8}{ipxm}#1\end{CJK}}
\newcommand{\zh}[1]{\begin{CJK}{UTF8}{gbsn}#1\end{CJK}}
\newcommand{\tw}[1]{\begin{CJK}{UTF8}{bsmi}#1\end{CJK}}
\newcommand{\ru}[1]{\foreignlanguage{russian}{#1}\xspace}

\usepackage{multirow}
\usepackage[table]{xcolor}
\crefformat{footnote}{\leavevmode\nobreak\hspace{0pt}\textsuperscript{#2#1#3}}

\definecolor{teal}{RGB}{21,96,130}
\definecolor{red}{RGB}{207,43,43}

\RequirePackage[hang]{footmisc}
\newcommand{\footnotesepr}{\textsuperscript{\!,}}

\allowdisplaybreaks

\theoremstyle{plain}
\newtheorem{lemma}{Lemma}
\newtheorem{theorem}{Theorem}
\theoremstyle{definition}
\newtheorem{assumption}{Assumption}

\newtheorem{remark}{Remark}

\ifdefined\VersionWithComments \usepackage{draftwatermark}
	\SetWatermarkText{draft}
 	\SetWatermarkScale{2}
\SetWatermarkColor[gray]{0.9}
\fi

\usepackage[whole]{bxcjkjatype}

\ifdefined\VersionWithComments \usepackage[colorinlistoftodos,textsize=footnotesize]{todonotes}
\else
	\usepackage[disable]{todonotes}
\fi

\setuptodonotes{inline}

\newlength{\needcitelength}
\newcommand{\needcite}[1]{\settowidth{\needcitelength}{cites: {#1}}\todo[noinlinepar,inlinewidth=\the\needcitelength,size=\scriptsize]{cites: {#1}}}
\newcommand{\needref}[1]{\settowidth{\needcitelength}{ref: {#1}}\todo[color=yellow,noinlinepar,inlinewidth=\the\needcitelength,size=\scriptsize]{ref: {#1}}}

\usepackage{paralist}

\ifdefined\VersionAnonymous
\else
\newcommand{\ourTool}{SoftMatcha 2}
\fi

\newcommand{\Exp}[2][*]{\mathbf{E}\sqbr#1{#2}}
\newcommand{\Order}[2][*]{O\paren#1{#2}}

\DeclarePairedDelimiter{\paren}{\lparen}{\rparen}

\DeclarePairedDelimiter{\sqbr}{\lbrack}{\rbrack}
\DeclarePairedDelimiter{\abs}{\lvert}{\rvert}
\DeclarePairedDelimiter{\floor}{\lfloor}{\rfloor}

\newcommand{\Vocab}{\mathcal{V}}
\newcommand{\Corpus}{\mathcal{C}}
\newcommand{\emppat}{\text{``''}}
\newcommand{\query}{q}
\newcommand{\lenquery}{m}
\newcommand{\queryseq}{\bar{\query}}
\newcommand{\querysequence}[1][\lenquery]{\query_1 \query_2 \cdots \query_{#1}}
\newcommand{\nWanted}{K}
\newcommand{\pat}{p}

\newcommand{\patseq}{\bar{\pat}}
\newcommand{\patsequence}[1][\lenpat]{\pat_1 \pat_2 \cdots \pat_{#1}}
\newcommand{\Lenmax}{L}
\newcommand{\Bigsize}{B}
\newcommand{\thresh}{\alpha}
\newcommand{\csim}{c}
\newcommand{\csoftmin}{\beta}
\newcommand{\cinsert}{\gamma}
\newcommand{\cpreinsert}{\gamma'}
\newcommand{\simop}{\mathop{\mathsf{sim}}}
\newcommand{\word}{w}
\newcommand{\Words}{W}
\newcommand{\Cand}{R}
\newcommand{\PreCand}{R'}
\newcommand{\CorpusGram}{\bar{\Corpus}}
\newcommand{\Total}{\mathit{Total}}
\newcommand{\ExpTotal}{\Exp{\mkern-1.5mu\Total}}
\newcommand{\Sim}{S}

\newcommand{\expzipf}{\delta}
\newcommand{\Amp}{A}
\newcommand{\Coeff}{a}
\newcommand{\ratio}{\rho}
\newcommand{\rate}{r}
\newcommand{\simrate}{\sigma}
\newcommand{\myparagraph}[1]{\textbf{#1}
}

\begin{document}

\twocolumn[
  \icmltitle{SoftMatcha 2: A Fast and Soft Pattern Matcher for Trillion-Scale Corpora}
\icmlsetsymbol{equal}{*}

  \begin{icmlauthorlist}
    \icmlauthor{Masataka Yoneda}{u-tokyo,nii}
    \icmlauthor{Yusuke Matsushita}{kyoto-u}
    \icmlauthor{Go Kamoda}{sokendai,ninjal}
    \icmlauthor{Kohei Suenaga}{kyoto-u,nii}
    \\
    \icmlauthor{Takuya Akiba}{sakana-ai,tohoku-u}
    \icmlauthor{Masaki Waga}{kyoto-u,nii}
    \icmlauthor{Sho Yokoi}{ninjal,tohoku-u,riken}
  \end{icmlauthorlist}

  \icmlaffiliation{u-tokyo}{The University of Tokyo, Tokyo, Japan}
  \icmlaffiliation{kyoto-u}{Kyoto University, Kyoto, Japan}
  \icmlaffiliation{nii}{National Institute of Informatics, Tokyo, Japan}
  \icmlaffiliation{sokendai}{The Graduate University for Advanced Studies (SOKENDAI), Tokyo, Japan}
  \icmlaffiliation{ninjal}{National Institute for Japanese Language and Linguistics (NINJAL), Tokyo, Japan}
  \icmlaffiliation{sakana-ai}{Sakana AI, Tokyo, Japan}
  \icmlaffiliation{tohoku-u}{Tohoku University, Sendai, Japan}
  \icmlaffiliation{riken}{RIKEN, Tokyo, Japan}
  \icmlcorrespondingauthor{Masataka Yoneda}{yoneda-masataka234@g.ecc.u-tokyo.ac.jp}
  \icmlcorrespondingauthor{Yusuke Matsushita}{ymat@fos.kuis.kyoto-u.ac.jp}
  \icmlcorrespondingauthor{Go Kamoda}{go.kamoda@ninjal.ac.jp}
  \icmlcorrespondingauthor{Kohei Suenaga}{kohei.suenaga@acm.org}
  \icmlcorrespondingauthor{Takuya Akiba}{takiba@sakana.ai}
  \icmlcorrespondingauthor{Masaki Waga}{mwaga@fos.kuis.kyoto-u.ac.jp}
  \icmlcorrespondingauthor{Sho Yokoi}{yokoi@ninjal.ac.jp}
  \icmlkeywords{Corpus, Pattern Matching, Large Language Model}
  \vskip 0.3in
]

\printAffiliationsAndNotice{}

\begin{abstract}

We present SoftMatcha 2, an ultra-fast and flexible search algorithm that enables search over trillion-scale natural language corpora in under 0.3 seconds while allowing \emph{semantic} variations in the form of substitution, insertion, and deletion.
Our approach employs string matching based on suffix arrays that scales well with corpus size, and represents words as vectors, which underpin its semantic flexibility. To mitigate the combinatorial explosion induced by the semantic relaxation of queries, our method is built on two key algorithmic ideas: dynamic corpus-aware pruning and fast exact lookup enabled by a disk-aware design.
We theoretically analyze the efficiency of the proposed method, indicating that it can mitigate exponential growth in the search space.
Empirically, on FineWeb-Edu~\cite{Lozhkov-2024-fineweb-eduFinestCollectionEducationalContent-ef} (1.4T tokens), it achieves substantially lower search latency than existing methods: infini-gram~\cite{Liu-2024-infini-gramScalingUnboundedN-gramLanguageModelsTrillionTokens-vk}, infini-gram mini~\cite{Xu-2025-infini-gramMiniExactN-gramSearchInternetScaleFm-index-tk}, and SoftMatcha~\cite{Deguchi-2025-softmatchaSoftPatternMatcherBillion-scaleCorpusSearches-on}.
As a practical application, our method uncovers benchmark contamination in training corpora that existing approaches miss, and it also benefits information retrieval and paraphrase detection. We also provide an online demo of fast, soft search across corpora in seven languages.\NotInAnonymousVersion{\footnote{\label{footnote:demo}\faGlobe\ \textbf{Project Page \& Web Interface}: \\\href{https://softmatcha.github.io/v2/}{https://softmatcha.github.io/v2/}}}\footnotesepr\footnote{\faGithub\ \textbf{Source Code}: \href{https://github.com/softmatcha/softmatcha2}{https://github.com/softmatcha/softmatcha2}
} \end{abstract}

\begin{figure}[t]
\centering
  \centerline{\includegraphics[width=1.0\linewidth]{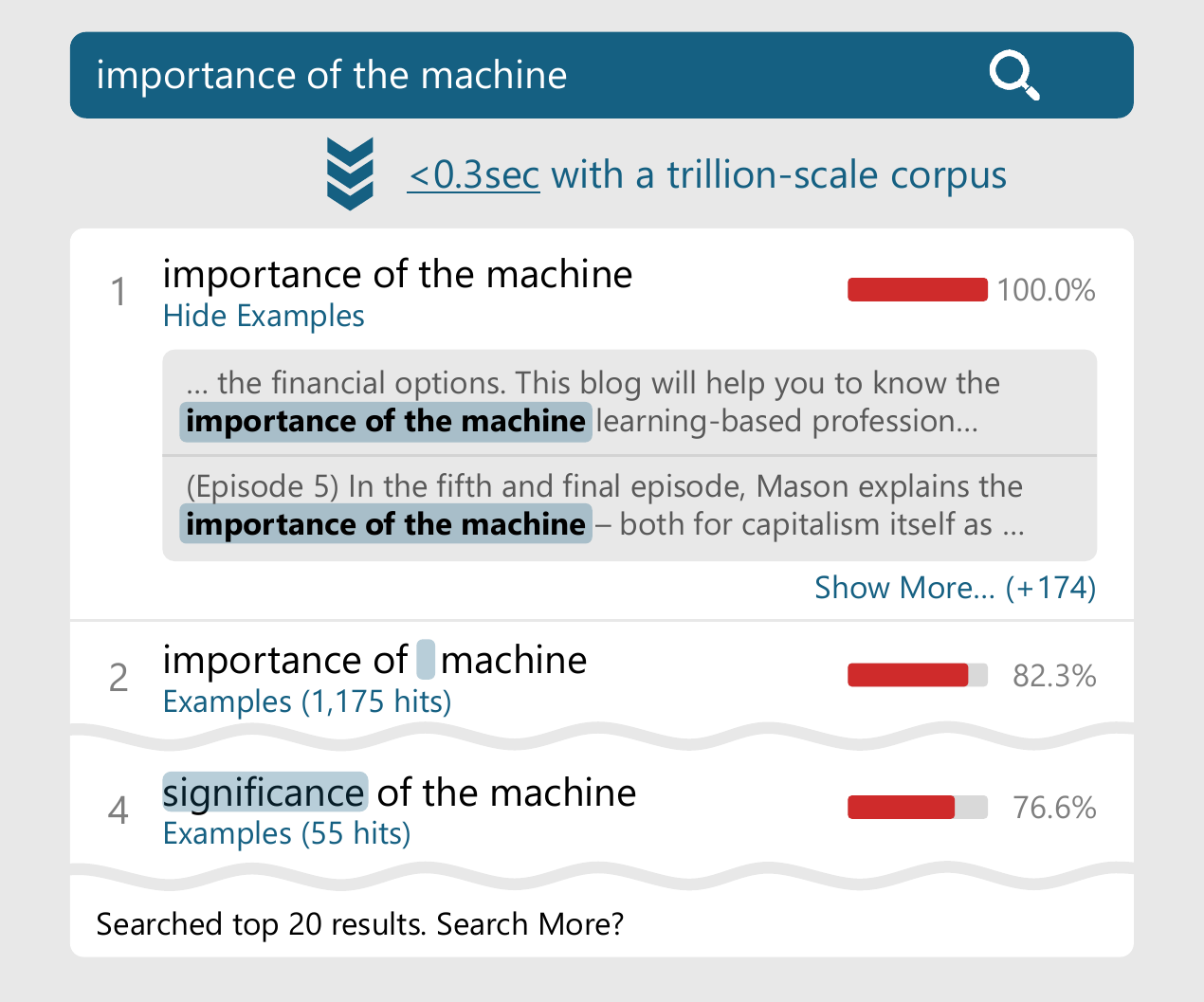}}
  \caption{
    An example of search in \NotInAnonymousVersion{\ourTool}\OnlyInAnonymousVersion{our tool}, which performs a soft search for trillion-scale corpora within 0.3 seconds, including word substitution, insertion, and deletion.
  }
  \label{fig:1-1}
\end{figure}

\section{Introduction}
\label{sec:introduction}

Large language models~\cite{Meta-2025-llama4HerdBeginningNewEraNativelyMultimodalAiInnovation-mx,GemmaTeam-2025-gemma3TechnicalReport-as,Singh-2025-openaiGpt-5SystemCard-ih}
owe their success to \emph{massive corpora}~\cite{Penedo-2024-finewebDatasetsDecantingWebFinestTextDataScale-hb,Lozhkov-2024-fineweb-eduFinestCollectionEducationalContent-ef,Soldaini-2024-dolmaOpenCorpusTrillionTokensLanguageModelPretrainingResearch-hr}, along with breakthroughs in statistical machine learning---including deep neural networks~\cite{Fukushima1980-ll,Hinton-2006-learningAlgorithmDeepBeliefNets-go,Vaswani-2017-attentionNeed-dv}, gradient descent-based optimization~\cite{Rumelhart-1986-learningRepresentationsBack-propagatingErrors-ge,Bottou-2010-large-scaleMachineLearningStochasticGradientDescent-nm,Kingma-2014-adamMethodStochasticOptimization-js}, and self-supervised learning~\cite{Bengio-2013-representationLearningReviewNewPerspectives-pz,GutmannMichael-2012-noise-contrastiveEstimationUnnormalizedStatisticalModelsApplicationsNaturalImageStatistics-kf}.
The significance of corpora in training language models has driven the demand for tools capable of quickly searching large corpora, e.g., to investigate the causes of unintended model behavior~\cite{Sun-2025-whyLlmsHallucinateConnectingDotsSubsequenceAssociations-og,Akyurek-2022-towardsTracingKnowledgeLanguageModelsBackTrainingData-jr} and to estimate the degree of contamination in benchmark data~\citep{Deng-2023-benchmarkProbingInvestigatingDataLeakageLargeLanguageModels-kk,Xu-2024-benchmarkingBenchmarkLeakageLargeLanguageModels-sn}.

Searching a corpus, or a collection of documents, has long been a topic of research in fields such as information retrieval.
Modern corpus searching, however, poses distinct challenges:
(1) The pretraining corpora for language models have grown rapidly \textbf{in scale}, reaching trillions of tokens (\cref{tab:corpus-scale}), so search methods must scale to this size;
(2) Because our language contains diverse paraphrases and orthographic variations, search must go beyond exact matching and retrieve text \textbf{similar} to the query;
(3) Because queries are natural language, as in applications such as data contamination detection and information retrieval, their \textbf{word order} carries meaning and must be respected.
Indeed, several recently developed corpus search tools~\cite{Liu-2024-infini-gramScalingUnboundedN-gramLanguageModelsTrillionTokens-vk,Xu-2025-infini-gramMiniExactN-gramSearchInternetScaleFm-index-tk,Deguchi-2025-softmatchaSoftPatternMatcherBillion-scaleCorpusSearches-on} address some of these needs, but none addresses all three (\cref{tab:method-comparison}).

\begin{table}[t]
    \centering
    \caption{The growth of text corpus sizes over time.
        }
    \label{tab:corpus-scale}
    \footnotesize
    \begin{tabular}{p{5.8cm} r}
        \toprule
        \textbf{Corpus} & \textbf{Size} \\
        \midrule
        \texttt{Brown Corpus}
          \citep{Kucera-1967-computationalAnalysisPresent-dayAmericanEnglish-up}
          & 1\textbf{M}\,tokens \\
        \texttt{Gigaword}
          \citep{Parker-2011-englishGigawordFifthEdition-rc}
          & 4\textbf{B}\,tokens \\
        \texttt{FineWeb-Edu}
          \citep{Lozhkov-2024-fineweb-eduFinestCollectionEducationalContent-ef}
          & 1.4\textbf{T}\,tokens \\
        \bottomrule
    \end{tabular}
\end{table}

\begin{table*}[t]
    \centering
    \setlength{\tabcolsep}{4.5pt} \caption{
        Overview of corpus search methods, covering closely and indirectly related approaches to clarify the position of our method.
        \\
        $^\dagger$: \checkmark{} indicates that results for an approximately 10-token query
can be retrieved from one trillion tokens within one second.
    }
    
{
    \footnotesize
        \begin{tabular}{@{}p{2.3cm} p{2.05cm} p{2.8cm} c c c c c c}
            \toprule
            \multirow{3}{*}{\textbf{Paradigm}}
            & \multirow{3}{*}{\textbf{Algorithm}}
            & \multirow{3}{*}{\textbf{Example}}
            & \multicolumn{2}{c}{\textbf{Scalability}}
            & \multicolumn{2}{c}{\textbf{Semantic Similarity}} 
            & \multirow{3}{*}{\makecell{\textbf{Word}\\\textbf{Order}} }
            & \multirow{3}{*}{\makecell{Exhaus-\\tive}}\\
            \cmidrule(lr){4-5} \cmidrule(lr){6-7}
            
& & & \rotatebox{0}{\makecell{Indexing}}
& \rotatebox{0}{\makecell{Fast\\Search$^\dagger$}}
            & \rotatebox{0}{\makecell{Lexical\\Similarity} }
            & \rotatebox{0}{\makecell{Insertion\\\& Deletion}} 
            & & \\

            \midrule
            \multirow{5}{*}{\makecell[l]{Exact string\\matching}}
            & \makecell[l]{KMP / BM /\\ Aho–Corasick}               & \makecell[l]{\parbox{2.4cm}{fgrep~\citep{UNIXManual}
            }}            & \textcolor{red}{\ding{55}}            & \textcolor{red}{\ding{55}}            & \textcolor{red}{\ding{55}}            & \textcolor{red}{\ding{55}}            & \textcolor{teal}{\ding{51}}           & \textcolor{teal}{\ding{51}}           \\
            
            & suffix array                          & \makecell[l]{infini-gram\\\citep{Liu-2024-infini-gramScalingUnboundedN-gramLanguageModelsTrillionTokens-vk}}                           & \textcolor{teal}{\ding{51}}           & \textcolor{teal}{\ding{51}}           & \textcolor{red}{\ding{55}}            & \textcolor{red}{\ding{55}}            & \textcolor{teal}{\ding{51}}           & \textcolor{teal}{\ding{51}}           \\
            
            & \makecell[l]{FM-index}                & \makecell[l]{infini-gram mini \\ \citep{Xu-2025-infini-gramMiniExactN-gramSearchInternetScaleFm-index-tk}} & \textcolor{teal}{\ding{51}}           & \textcolor{teal}{\ding{51}}           & \textcolor{red}{\ding{55}}            & \textcolor{red}{\ding{55}}            & \textcolor{teal}{\ding{51}}           & \textcolor{teal}{\ding{51}}           \\
            
            \midrule

\makecell[l]{Approximate \\ string matching}
            & \makecell[l]{DP \\ $\times$ bit-parallelism}                  & \makecell[l]{agrep \\ \cite{wu1992agrep}}              & \textcolor{red}{\ding{55}}            & \textcolor{red}{\ding{55}}            & \textcolor{red}{\ding{55}}            & \textcolor{teal}{\ding{51}}           & \textcolor{teal}{\ding{51}}           & \textcolor{teal}{\ding{51}}           \\[-.3em]

            \midrule
            \multirow{3}{*}{\makecell[l]{Soft\\ (lexical-semantic)\\ string matching}}
            & \makecell[l]{inverted index \\$\times$ word vectors}         & \makecell[l]{SoftMatcha \\ \citep{Deguchi-2025-softmatchaSoftPatternMatcherBillion-scaleCorpusSearches-on}}                           & \textcolor{teal}{\ding{51}}           & \textcolor{red}{\ding{55}}            & \textcolor{teal}{\ding{51}}           & \textcolor{red}{\ding{55}}            & \textcolor{teal}{\ding{51}}           & \textcolor{teal}{\ding{51}}           \\
            
            & \cellcolor{teal!16}\makecell[l]{suffix array \\$\times$ word vectors}           & \cellcolor{teal!16}\makecell[l]{\textbf{SoftMatcha 2} (Ours)}                          & \cellcolor{teal!16}\textcolor{teal}{\ding{51}}           & \cellcolor{teal!16}\textcolor{teal}{\ding{51}}           & \cellcolor{teal!16}\textcolor{teal}{\ding{51}}           & \multicolumn{1}{>{\centering\arraybackslash\columncolor{teal!16}}c}{\textcolor{teal}{\ding{51}}}        & \cellcolor{teal!16}\textcolor{teal}{\ding{51}}           & \cellcolor{teal!16}\textcolor{teal}{\ding{51}}           \\
            
            \midrule
            \makecell[l]{Regular expression\\ matching}
            & \makecell[l]{DFA / NFA /\\ Virtual Machine}           & \makecell[l]{\parbox{2.4cm}{grep \citep{UNIXManual}}}& \textcolor{red}{\ding{55}}            & \textcolor{red}{\ding{55}}            & \textcolor{red}{\ding{55}}            & \textcolor{teal}{\ding{51}}           & \textcolor{teal}{\ding{51}}           & \textcolor{teal}{\ding{51}}           \\[-.4em]
            
            \midrule
            \makecell[l]{Lexical similarity\\search}
& \makecell[l]{TF-IDF /\\ BM25 }        & \makecell[l]{Elastic-search\\\cite{Gormley2015-cn}}          & \textcolor{teal}{\ding{51}}             & \textcolor{teal}{\ding{51}}             & \textcolor{red}{\ding{55}}             & \textcolor{teal}{\ding{51}}             & \textcolor{red}{\ding{55}}              & \textcolor{red}{\ding{55}}              \\[-.4em]
            
            \midrule
            \makecell[l]{Dense vector \\search}
            & \makecell[l]{ANN\\$\times$ text vectors}             & \makecell[l]{FAISS\\\cite{Douze-2025-faissLibrary-hh}}  & \textcolor{teal}{\ding{51}}           & \textcolor{teal}{\ding{51}}           & \multicolumn{2}{c}{coarse topic similarity}           & \textcolor{red}{\ding{55}}            & --            \\[-.2em]
            \bottomrule
        \end{tabular}
    }
     \label{tab:method-comparison}
\end{table*}

We propose a fast full-text search algorithm that meets the three desiderata (1--3) mentioned above.
To this end, we adopt a suffix-array-based approach that preserves \textbf{word order}, with exact-match lookup that \textbf{scales} to trillions of tokens.
The challenge is that combining suffix arrays with word vectors, which enumerate candidate patterns \textbf{similar} to the query, yields a set that grows exponentially with query length.
To keep this cost manageable, our algorithm combines two ideas:
(i) \emph{dynamic corpus-aware pruning} that eliminates unnecessary candidate patterns to curb this blowup (\cref{sec:dynamic-pruning}),
and
(ii) \emph{disk-aware, staged suffix arrays}, requiring a single random disk access per exact-match lookup (\cref{sec:fast-suffix-array}).
Our contributions are summarized as follows:
\begin{compactitem}
  \item We propose \ourTool, a fast full-text soft (or lexical-semantic) search algorithm for trillion-token-scale corpora that handles semantic similarity and token-count differences between queries and results, combining dynamic corpus-aware pruning and disk-aware, staged suffix arrays. (\cref{sec:methods})
  \item We theoretically show that the proposed method mitigates the exponential growth in the search space with respect to the query length, leveraging statistical properties of natural language.  (\cref{sec:theory})
  \item We experimentally show that \ourTool\ significantly outperforms existing methods. The 95th percentile latencies are \qty{278.17}{\milli\second} for soft search and \qty{0.34}{\milli\second} for exact search on FineWeb-Edu~\cite{Lozhkov-2024-fineweb-eduFinestCollectionEducationalContent-ef} (1.4T tokens); the 95th percentile latencies for soft search are less than \qty{400}{\milli\second} in Chinese (38.3B tokens) and Japanese corpora (169B tokens). (\cref{sec:experiment})
  \item We demonstrate the usefulness of \ourTool\ in (1) information retrieval, (2) paraphrase detection, and (3) detecting contamination in the training corpora by questions in evaluation data.  (\cref{sec:application})
  \item We release an online demo running on a 100B-token-scale English corpus.\cref{footnote:demo}  \cref{fig:1-1} illustrates an example.
\end{compactitem}

\section{Text Search at Scale}

\subsection{Existing Tools and Their Limitations}

For searching trillion-scale corpora,
\textbf{infini-gram}~\citep{Liu-2024-infini-gramScalingUnboundedN-gramLanguageModelsTrillionTokens-vk} and \textbf{infini-gram mini}~\citep{Xu-2025-infini-gramMiniExactN-gramSearchInternetScaleFm-index-tk} apply suffix arrays~\citep{suffix-array} and a variant of a suffix array (i.e., FM-indexes~\citep{DBLP:journals/jacm/FerraginaM05}), which are powerful data structures for exact string matching, respectively.
Our work follows this line, significantly improving exact-match search speed (\cref{sec:fast-suffix-array} and \cref{sec:eval-exact}), while also incorporating semantic substitutions, insertions, and deletions.
\textbf{SoftMatcha}~\citep{Deguchi-2025-softmatchaSoftPatternMatcherBillion-scaleCorpusSearches-on} is, to our knowledge, the only prior work that successfully incorporates semantic elements into text search for large-scale corpora.
Note that, due to the characteristics of the inverted index~\citep{DBLP:journals/csur/ZobelM06} they adopted, it does not scale well to trillion-scale corpora.
We examine the differences between our work and these three prior studies in the experimental section.

Here, we have focused on directly related work---recent corpus search at scale for large language models.
A much broader literature exists, however, spanning information retrieval, natural language processing, and beyond.
For historical developments in corpus linguistics and natural language processing, as well as orthogonal approaches such as TF-IDF, BM25, and dense vector search, see \cref{related_work:corpus_search_for_IR_NLP_LM}.

\subsection{Problem Formulation: Soft Text Search}
\label{sec:problem-setting}

\paragraph{Notation and Inputs/Outputs}
We write $\Vocab$ for the set of vocabulary tokens, $\Vocab^*$ for the set of vocabulary token sequences (i.e., sequences over $\Vocab$), and $\Corpus$ for the corpus being searched. Given a \emph{query} $\queryseq = \querysequence \in \Vocab^*$, a corpus $\Corpus$, and a similarity threshold $\thresh > 0$,
our algorithm returns a set of patterns $\patseq = \patsequence \in \Vocab^*$ that are semantically similar to the query $\bar q$ with similarity $\thresh$ or more.
We assume a similarity function $\simop(\queryseq, \patseq) \ge 0$ that scores how similar a query and a pattern are, computed from word (or token) embeddings; larger values indicate greater similarity.

\paragraph{Indexing and Matching}
Following common practice in high-performance search, our algorithm proceeds in two stages. First, it constructs a query-independent \emph{index} from the corpus. Second, at runtime, it uses this index for efficient soft pattern \emph{matching}. This prior indexing is standard, and all baselines rely on it as well.

\section{\NotInAnonymousVersion{\ourTool: }Soft and Fast Corpus Search}
\label{sec:methods}
\subsection{Building Blocks: Suffix Arrays and Word Vectors}

We propose an algorithm for soft pattern matching (i.e., retrieving texts similar to the query, preserving the word order in the query), scalable to trillion-scale corpora.
Our algorithm \textbf{scales} to trillion-scale corpora, preserving the \textbf{word order} in the query because an exact lookup based on a \emph{suffix array} runs in logarithmic time in corpus size.
\emph{Word vectors} are also used in our direct predecessor SoftMatcha~\cite{Deguchi-2025-softmatchaSoftPatternMatcherBillion-scaleCorpusSearches-on} to search for text \textbf{similar} to the query.
Note that the inverted index runs in linear time in corpus size, and its scalability is limited.
For related work on algorithms and data structures for string search, including suffix arrays and inverted indexes, see \cref{sec:background-suffix-array}.

\subsection{Challenges and Solutions}
Unlike an inverted index, which supports ``OR queries'' that match any of several patterns at once~\citep{Deguchi-2025-softmatchaSoftPatternMatcherBillion-scaleCorpusSearches-on}, a suffix array offers fast lookup only for a single query at a time.
Therefore, a naive approach to soft search with suffix arrays proceeds in two steps: given a query $\queryseq$, it enumerates the patterns $\word$ whose similarity to $\queryseq$ is greater than or equal to $\thresh$,
and then searches for an occurrence of each such $\word$ using a suffix array.
This approach faces two challenges:
(i) the number of patterns $\word$ increases exponentially with the query length; and
(ii) lookup in a naive suffix array is slow due to many random disk accesses.

We address (i) by pruning the search space by dynamically exploiting the power-law distribution of $n$-grams in the corpus (\cref{sec:dynamic-pruning}).
We solve (ii) by speeding up exact lookups using a new \emph{disk-aware suffix array} exploiting the fact that the query length is typically short (\cref{sec:fast-suffix-array}).
The combination of the two gives a blazingly fast soft search for large corpora (see \cref{sec:experiment} for the experimental results).
We also define our proposed similarity function later in \cref{section:similarity_definition}, where word vectors enable soft substitutions, insertions, and deletions.

\begin{algorithm}[tb]
  \caption{Obtain all patterns with similarity $\thresh$ or more that occur in the corpus (simplified).
}
  \label{alg:search}
  \footnotesize
  \begin{algorithmic}
\STATE $\Cand_0 \gets \{\emppat\}$
    \FOR{$i = 1$ {\bfseries to} $\lenquery$}
        \STATE $\PreCand_i \leftarrow \varnothing$
        \FOR{$\word \in \Cand_{i-1}$}
            \STATE $\Words \leftarrow \{\word' \in \Vocab^* \mid \simop(\querysequence[i], \word \cdot \word') \geq \thresh\}$
\STATE $\PreCand_i \leftarrow \PreCand_i \cup \{\word \cdot \word' \mid \word' \in \Words\}$
        \ENDFOR
\STATE $\Cand_i \leftarrow \{ \word \in \PreCand_i \mid \word \ \text{occurs in}\ \Corpus \}$
    \ENDFOR
    \STATE \textbf{return} $\Cand_\lenquery$
  \end{algorithmic}
\end{algorithm}

\begin{figure*}[t]
\centering
    \centerline{\includegraphics[width=0.95\linewidth]{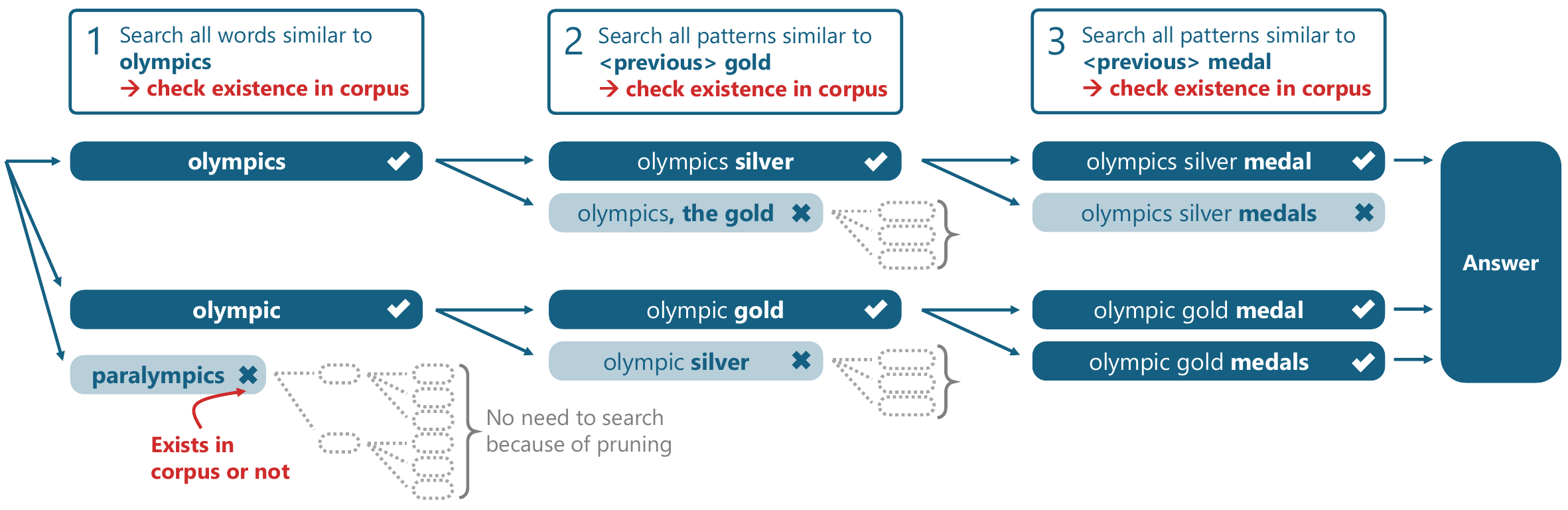}}
    \caption{A sketch of our soft searching algorithm, when the query is ``olympics gold medal''. Without iterative pruning, we must search the gray-striped zone as well as the blue zone.}
    \label{fig:3-2}
\end{figure*}

\subsection{Dynamic Corpus-Aware Pruning of Search Space}
\label{sec:dynamic-pruning}

To reduce the exponential growth of the search space in soft pattern matching,
we employ \textbf{iterative pruning}, which drastically reduces the number of
candidate patterns by exploiting the power-law distribution of $n$-grams
in the corpus.
Below, we describe our algorithm (shown in \cref{alg:search}), through an example of searching for ``olympics gold medal'' with a given initial value of $\thresh$ (see \cref{fig:3-2} for illustration).

\myparagraph{Step 1.} Enumerate all the patterns $\PreCand_1$ whose similarity to the first query token, ``olympics'', is at least $\thresh$. This step works only using the dictionary, without referencing the corpus. Then, filter $\PreCand_1$ into the patterns $\Cand_1$ that occur in the corpus, using exact lookup by the fast suffix array (\cref{sec:fast-suffix-array}). \\
\textbf{Step 2.} For each $\word \in \Cand_1$, enumerate all the patterns $\word \cdot \word'$ that have $\word$ as a prefix and whose similarity to ``olympics gold'' is at least $\thresh$. We denote this set by $\PreCand_2$. As before, filter $\PreCand_2$ into those that appear in the corpus $\Cand_2$. \\
\textbf{Step 3.} Compute $\PreCand_3$ and $\Cand_3$ analogously using the similarity to ``olympics gold medal''. The final result is $\Cand_3$.

\myparagraph{Other Techniques.}
To further improve performance, we also introduce the following two techniques.
Details are shown in \cref{sec:appendix-details-techniques}.
(i) \textbf{$k$-gram pruning}: For the 2-grams and 3-grams consisting only of high-frequency words, we check if they appear in the corpus beforehand and keep this information in RAM; at runtime, we determine if they appear without any extra disk access.
(ii) \textbf{Last-bits pruning}: When the pattern $\word \in \Cand_{i - 1}$ has only a few occurrences in the corpus, we can directly enumerate all its occurrences and check if they match the query, instead of appending some suffixes to $\word$ and looking for them in the corpus.
We use this pruning when $\word$ has less than or equal to 50 occurrences.

\subsection{Fast Disk-Aware Suffix Array}
\label{sec:fast-suffix-array}

Our soft search method is built on top of exact corpus lookup, and its efficiency is essential for scaling our method to a trillion-scale corpus.
A naive corpus lookup implementation based on a suffix array faces a critical bottleneck in index access: the index for a trillion-scale corpus is enormous ($> \qty{5}{\tera\byte}$), which prevents storing it entirely in RAM and requires storing it on disk (e.g., SSD).
Since disk access latency is significantly slower ($\approx \times 10^3$) than RAM, reducing the number of disk accesses is vital. 

To achieve this key goal, we newly design a \textbf{disk-aware, staged suffix array} that requires only \emph{one} disk access per exact match query, while a standard suffix array requires $\Order{\log \abs{\Corpus} }$ accesses, where $\abs{\Corpus}$ is the corpus size.
See \cref{sec:appendix-fast-suffix-array} for the details.
Our approach uses a two-staged indexing scheme, similar to Google's BigTable~\cite{chang2008bigtable} in the high-level approach\footnote{Although BigTable employs a two-staged indexing scheme similar to ours, it is not based on suffix arrays and is similar to ours only at a high level.}: (1) identifying the range of potential matches in the first stage and (2) determining the precise match in the second stage, focusing on the identified area.
For the maximum query length $\Lenmax$, we first construct a sorted array $X$ of all contiguous $\Lenmax$ tokens in the corpus.
Then we construct a sparse array $Y = [X_0, X_B, X_{2B}, \dots]$, where $X_i$ is the $i$-th element of the array $X$ and $\Bigsize$ is a constant. This sparse array is stored in RAM. When we search for the position of an occurrence of query $\queryseq$, we first find the approximate position (i.e., an index $i$ that $\queryseq$ occurs in $Z = [X_{iB}, X_{iB+1}, \dots, X_{(i+1)\Bigsize-1}]$ if it exists in $X$) using $Y$, and then find the exact position in $X$ using $Z$.
If $\Bigsize$ is appropriately chosen, this scheme requires only \emph{one} disk access per exact match query, and the search is about 33 times faster than infini-gram~\cite{Liu-2024-infini-gramScalingUnboundedN-gramLanguageModelsTrillionTokens-vk}, a well-known corpus search engine based on suffix arrays.

\myparagraph{Example.} An example is shown in \cref{fig:fast-suffix-array}. To find the position of ``pattern match'', we first search $Y$ and determine that the position lies between indices 3 and 6. We then search $X$ and find that the exact position is index 4.

Moreover, we reduce the index size via \textbf{run-length compression}; in our experience, this compression reduced the overall index size from \qty{56.0}{\tera\byte} to \qty{21.6}{\tera\byte} for the FineWeb-Edu corpus.
We discuss the details in \cref{section:disk-usage-discussion}.

\begin{figure}[t]
  \vskip 0.2in
    \centering
    \centerline{\includegraphics[width=\columnwidth]{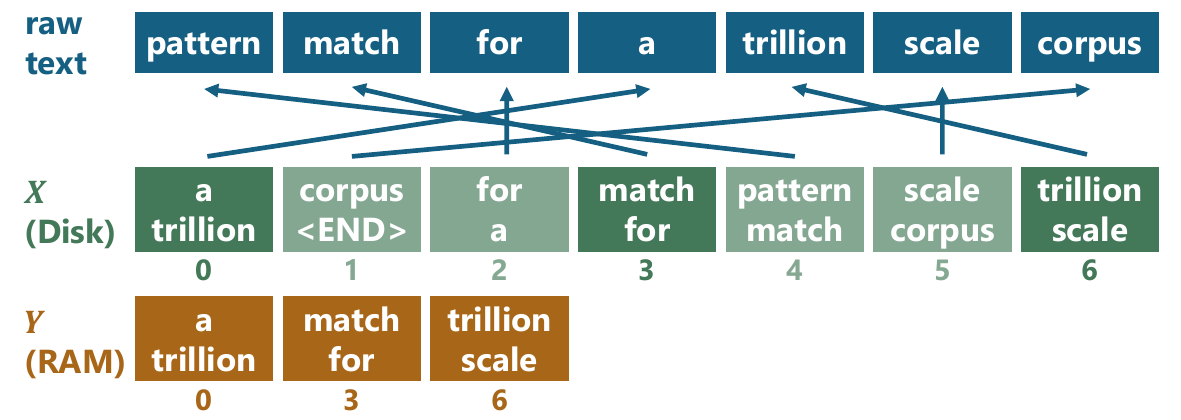}}
    \caption{A sketch of our fast suffix array for the text ``pattern match for a trillion scale corpus'', with $\Lenmax = 2$ and $\Bigsize = 3$.}
    \label{fig:fast-suffix-array}
\end{figure}

\subsection{Soft Similarity Measure}
\label{section:similarity_definition}

We propose a new definition of pattern similarity $\simop(\queryseq, \patseq)$ to improve search quality.

\myparagraph{Substitutions: Smooth Minimum.}
First, the basic similarity (without deletion or insertion) between a query $\queryseq = \querysequence$ and a pattern $\patseq = \patsequence[\lenquery]$ is defined as follows, letting $\csim_i$ ($\leq 1$) be the cosine similarity between the word vectors of $q_i$ and $\pat_i$:

$$
1 - \log_\csoftmin \paren*{\sum_i (\csoftmin^{1 - \csim_i} - 1) + 1}.
$$

This is a \emph{smooth} minimum of $\csim_1, \csim_2, \ldots, \csim_\lenquery$, with a parameter $\csoftmin$, which is set to $\csoftmin = 10^4$ in our experiments.
This improves over the plain minimum $\min(\csim_1, \dots, \csim_m)$ (i.e., the limit $\csoftmin \to \infty$ of the smooth minimum) used by SoftMatcha, which problematically ignores the similarities among all words except the one with the lowest score.\footnote{
For example, the similarity between ``gold medal'' and ``bronze medals'' is approximately 0.62, if the cosine similarity of ``gold'' and ``bronze'' is 0.63 and that of ``medal'' and ``medals'' is 0.85.}
See \cref{section:rationale_softmin} for further discussion.

\myparagraph{Insertions and Deletions: Norm-Based Penalties.}
We extend the similarity above to allow insertions and deletions by scaling the similarity by some factor less than $1$ for each insertion or deletion.
To keep the penalty for low-information words such as ``the'' and ``of'' small, we use the word vector norm after Zipfian whitening~\cite{Yokoi-2024-zipfianWhitening-qe}, which correlates strongly with information entropy~\cite{Oyama-2023-normWordEmbeddingEncodesInformationGain-nh}.
Letting $v$ be the squared norm of the inserted or deleted token, the scaling factor is $\exp\, (-v / \cinsert)$.
Here, $\lenquery$ is the length of the original search query given to our algorithm, rather than the length $i$ of the prefixes given to $\simop$ in \cref{alg:search}.
Thus, $\lenquery$ does not change throughout the algorithm.
In our experiments, we set $\cinsert = \lenquery \cpreinsert$ for $\cpreinsert$ such that the scaling factor is equal to $1/e$ when $\lenquery = 5$ and the inserted word has the 50th-lowest norm (e.g., $\cpreinsert \approx 21.70$ for FineWeb-Edu).\footnote{Precisely speaking, we take the maximum similarity over the multiple ways to perform insertions and deletions.}

\subsection{Additional Design Choice: Adaptive Similarity Thresholding}
Practically, it is often hard to find a similarity threshold $\thresh$ that is neither too tight nor too loose.
To mitigate this, \ourTool\ adaptively sets the similarity threshold as follows.
First, we set the number $\nWanted$ of matching patterns that we want.
\ourTool\ first enumerates the soft matches (i.e., solves the above problem) with some pre-determined initial $\thresh$.
If fewer than $\nWanted$ soft matches are found, \ourTool\ relaxes the threshold $\thresh$ (i.e., decreases its value) by some amount and enumerates the soft matches again.
\ourTool\ repeats this until at least $\nWanted$ soft matches are obtained or the threshold $\thresh$ becomes too low.

\begin{table*}[t]
    \centering
\tabcolsep 5.25pt
    \footnotesize
    \caption{Corpora used in our experiments. For German, Italian, and French, we used subsampled datasets.}
    \label{tab:corpus-list}
    \begin{tabular}{p{2cm}lrll}
    \toprule
        \textbf{Lang.} & \textbf{Corpus} & \textbf{\# Tokens} & \textbf{Tokenizer} & \textbf{Embedding} \\
    \midrule
        \multirow{2}{*}{English (EN)} & \multirow{2}{*}{\makecell[l]{FineWeb-Edu \\ ~\cite{Lozhkov-2024-fineweb-eduFinestCollectionEducationalContent-ef}}} & \textbf{1,375,278,465,557} & Moses~\cite{koehn-etal-2007-moses} & GloVe~\cite{Pennington-2014-gloveGlobalVectorsWordRepresentation-qt} \\
                                 & & 11,186,491,048 & LLaMA~\cite{touvron2023llama} & LLaMA~\cite{touvron2023llama} \\

        Japanese (JA) & C4~\cite{raffel2020exploring} & 168,756,125,544 & MeCab~\cite{kudo2004mecab}\footnotemark & fastText~\cite{Grave-2018-learningWordVectors157Languages-yr} \\

        Chinese (ZH) & C4~\cite{raffel2020exploring} & 38,313,934,910 & ICU~\cite{icu} & fastText~\cite{Grave-2018-learningWordVectors157Languages-yr} \\

        German (DE) & C4~\cite{raffel2020exploring} & 1,099,885,019 & ICU~\cite{icu} & fastText~\cite{Grave-2018-learningWordVectors157Languages-yr} \\

        Italian (IT) & C4~\cite{raffel2020exploring} & 1,070,947,829 & ICU~\cite{icu} & fastText~\cite{Grave-2018-learningWordVectors157Languages-yr} \\

        French (FR) & C4~\cite{raffel2020exploring} & 978,329,060 & ICU~\cite{icu} & fastText~\cite{Grave-2018-learningWordVectors157Languages-yr} \\

        Russian (RU) & C4~\cite{raffel2020exploring} & 1,006,500,018 & ICU~\cite{icu} & fastText~\cite{Grave-2018-learningWordVectors157Languages-yr} \\

    \bottomrule
    \end{tabular}
\end{table*}

\section{Theoretical Analysis}
\label{sec:theory}

Why is our algorithm (\cref{sec:methods}) so efficient, as observed in \cref{sec:experiment}?
How robust is it across several parameters?
To answer these questions, we theoretically analyze our algorithm, drawing on general observations about natural languages.
Concretely, we
(1) prove that the expected number of total exact lookups conducted by \cref{alg:search} is sublinear with respect to the corpus size $\abs{\Corpus}$ (\cref{theorem:sublinear-corpus}) and
(2) estimate it more in detail (\cref{theorem:estimate}),
under several assumptions on the corpus and natural language statistics.

To this end, we analyze a single invocation of \cref{alg:search}.
The details, including the full proofs, can be found in
\ifdefined\LongVersion{}\cref{app:theory}.
\else the long version.
\fi
Let $\Corpus$ be the corpus and $\queryseq = \querysequence$ the query, as specified in \cref{sec:problem-setting}.
For simplicity, we fix the similarity threshold $\thresh$ and the insertion/deletion parameter $\cinsert$.
Let $\Cand_i$ and $\PreCand_i$, computed in the $i$-th iteration of \cref{alg:search}, be the sets $\{ \word \in \PreCand_i \mid \word \ \text{occurs in}\ \Corpus \}$ and $\{\word \cdot \word' \mid \word \in \Cand_{i-1}, \word' \in \Vocab^*, \simop(\querysequence[i], \word \cdot \word') \geq \thresh\}$ respectively.

Now we analyze the total number of exact lookups in \cref{alg:search}, denoted $\Total$.
Crucially, it satisfies $\Total \,=\, \sum_{i = 1}^\lenquery \abs{\PreCand_i}$.
We also let $\CorpusGram_i$ be $\{ \word \in \Vocab^i \mid \word \ \text{occurs in}\ \Corpus \}$, the set of vocabulary $i$-grams in the corpus.

For analysis, we assume the following bound on $\abs{\PreCand_i}$: \begin{assumption}[Bound $\abs{\PreCand_i}$ by $\abs{\Cand_{i - 1}}$]
\label{assumption:precand-amp}
    There exists a constant $\Amp$, independent of $\Corpus$, such that, for any $i > 0$, $\Exp{\abs{\PreCand_i}} \le \Amp\mkern2mu \Exp{\abs{\Cand_{i - 1}}}$.
\end{assumption}
Intuitively, this is justified by assuming that the average number of $\word' \in \Vocab^*$ we explore for each $\word \in \Cand_{i - 1}$ in calculating $\PreCand_i$ is bounded by $\Amp$.
Empirically, the constant $\Amp$ depends considerably on the similarity threshold $\thresh$.

First, if we assume Zipf's law of the $n$-gram distribution, we can analyze the effect of the corpus size $\abs{\Corpus}$.
\begin{theorem}[Sublinearity over the corpus size $\abs{\Corpus}$]
\label{theorem:sublinear-corpus}
Fix $\lenquery$.
Assume that the frequency distribution of $n$-grams satisfies Zipf's law with exponent $\expzipf > 1$ for each $n$.
    Assume also that $\Exp{\abs{\Cand_i}} = \Order{\Exp{\abs{\CorpusGram_i}}}$ with respect to $\abs{\Corpus}$.
    Then the expected number of exact lookups in \cref{alg:search}
    is approximately $\Order{\abs{\Corpus}^{1/\expzipf}}$ with respect to the corpus size $\abs{\Corpus}$.
\end{theorem}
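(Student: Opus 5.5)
The argument has three steps. First, reduce $\ExpTotal$ to the expected numbers of distinct $i$-grams. Second, bound those counts using Zipf's law. Third, sum over the fixed $\lenquery$ iterations.

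\textbf{Reduction to distinct $i$-grams.} By the identity $\Total = \sum_{i=1}^{\lenquery} \abs{\PreCand_i}$ and linearity of expectation,
\[
\ExpTotal = \sum_{i=1}^{\lenquery} \Exp{\abs{\PreCand_i}} \le \sum_{i=1}^{\lenquery} \Amp\, \Exp{\abs{\Cand_{i-1}}}
\]
by \cref{assumption:precand-amp}. The term $\abs{\Cand_0} = 1$ is a constant. By the hypothesis $\Exp{\abs{\Cand_i}} = \Order{\Exp{\abs{\CorpusGram_i}}}$, each remaining term is $\Order{\Exp{\abs{\CorpusGram_{i}}}}$ for $i \le \lenquery - 1$. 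Since $\lenquery$ is fixed and $\Amp$ does not depend on $\Corpus$, it suffices to show $\Exp{\abs{\CorpusGram_n}} = \Order{\abs{\Corpus}^{1/\expzipf}}$ for each fixed $n$.

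\textbf{Bounding the number of distinct $n$-grams.} This is a Heaps'-law-type estimate derived from Zipf's law. I will model the corpus as $N \approx \abs{\Corpus}$ samples of $n$-grams, drawn from a distribution in which the $r$-th most frequent $n$-gram has probability $p_r = r^{-\expzipf}/\zeta(\expzipf)$. Here $\expzipf > 1$ makes the normalizer finite. The $r$-th $n$-gram occurs at least once with probability $1-(1-p_r)^N \le \min(1, N p_r)$. Hence
\[
\Exp{\abs{\CorpusGram_n}} \le \sum_{r \ge 1} \min\bigl(1, N r^{-\expzipf}/\zeta(\expzipf)\bigr).
\]
I will split this sum at $r_0 \approx N^{1/\expzipf}$.
\begin{itemize}
\item The head $r \le r_0$ contributes at most $r_0 = \Order{N^{1/\expzipf}}$.
\item The tail contributes at most $N \sum_{r > r_0} r^{-\expzipf} = \Order{N r_0^{1-\expzipf}} = \Order{N^{1/\expzipf}}$.
\end{itemize}
This is the main step. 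The wording ``approximately'' in the statement reflects the modelling: positions within a corpus are not independent samples, and the number of $n$-gram positions is $\abs{\Corpus}-n+1$ rather than exactly $\abs{\Corpus}$. I would state the independent-sampling idealization explicitly as the interpretation of ``$n$-grams satisfy Zipf's law'' and note that the constants may depend on $n$. That dependence is harmless because $n \le \lenquery$ is fixed.

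\textbf{Conclusion.} Combining the two steps gives
\[
\ExpTotal \le \Amp\Bigl(1 + \sum_{i=1}^{\lenquery-1} \Order{\abs{\Corpus}^{1/\expzipf}}\Bigr) = \Order{\abs{\Corpus}^{1/\expzipf}},
\]
which is sublinear since $1/\expzipf < 1$. The main obstacle is not the algebra but justifying the $\Order{N^{1/\expzipf}}$ estimate under a faithful reading of Zipf's law. I would handle this by adopting the sampling model above and making the infinite-vocabulary Zipf tail explicit. If the vocabulary is finite, the bound only improves, since $\abs{\CorpusGram_n} \le \abs{\Vocab}^n$.
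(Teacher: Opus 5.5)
Your proposal is correct and follows essentially the same route as the paper. You bound $\ExpTotal$ by $\Amp\sum_{i<\lenquery}\Exp{\abs{\Cand_i}}$ via \cref{assumption:precand-amp}, then apply the Zipf-to-Heaps estimate, bounding each occurrence probability by $\min(1, N p_k)$ and splitting the sum at $k \approx \abs{\Corpus}^{1/\expzipf}$. The only differences are cosmetic: you use an exactly normalized Zipf distribution with i.i.d.\ sampling where the paper assumes $p_{n,k}\le c_n k^{-\expzipf}$ with approximate per-position independence and a union bound, and you treat $\abs{\Cand_0}=1$ separately.
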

Empirically, $\expzipf$ is typically $1.5$ or so for many languages \cite{bellina2025cognitivelimitsshapelanguage, 10.1007/978-3-030-47436-2_63-ngram-frequency}.
The assumption $\Exp{\abs{\Cand_i}} = \Order{\Exp{\abs{\CorpusGram_i}}}$ is justified because most patterns in $\Cand_i$ are in $\CorpusGram_i$.
Indeed, it always holds (i.e., we do not need this as an assumption) if there are no insertions and deletions (by $\cinsert \downarrow 0$), since then $\Cand_i \subseteq \CorpusGram_i$ holds.

\noindent
\textit{Proof Sketch.}
From Zipf's law, we can show that $\Exp{\abs{\CorpusGram_i}}$, the expected number of distinct $i$-grams observed in $\Corpus$, is approximately
$\Order{\abs{\Corpus}^{1/\expzipf}}$ (Heaps' law).
Then, since $\abs{\Cand_i}$ is bounded using $\abs{\CorpusGram_i}$, $\abs{\PreCand_i}$ is bounded using $\abs{\Cand_{i - 1}}$ by \cref{assumption:precand-amp}, and $\Total$ equals the sum of $\abs{\PreCand_i}$, the claim follows.
See \cref{app:theory:proof-sublinear} for the full proof.
\hfill$\square$

Second, we estimate $\Total$ more in detail.
We assume an exponential bound on $\abs{\Cand_i}$:
\begin{assumption}[Exponential bound on $\abs{\Cand_i}$]
\label{assumption:cand-bound}
There exist constants $\Coeff > 0$ and $\rate > 0$, independent of $i$, such that $\Exp{\abs{\Cand_i}} \le \Coeff \rate^i$ for any $i$.
\end{assumption}
Typically, the rate $\rate$ is expected to be near $1$, but it depends on the choice of the similarity threshold $\thresh$ and the insertion/deletion parameter $\cinsert$ (see \cref{experiment:assumption}).

\begin{remark}\label{remark:sim}
The model of \cref{assumption:cand-bound} is intuitively justified by the following discussion from a linguistic perspective.
Let $\Sim_i$ be $\{ \word \in \Vocab^* \mid \simop(\querysequence[i], \word) \ge \thresh \}$, the set of vocabulary patterns that are similar to the $i$-gram prefix of the query (according to the threshold $\thresh$).
We can suppose that its expected size $\Exp{\abs{\Sim_i}}$ is bounded from above by $\Coeff' \simrate^i$ for some constants $\Coeff'$ and $\simrate$, because $\Sim_i$ is approximately the direct product of the sets of tokens similar to each token in the query.
Also, let $\ratio_i$ be $\abs{\CorpusGram_i} / \abs{\Vocab}^i$, the corpus occurrence ratio over all vocabulary $i$-grams,
which is typically very small for large $i$ since $\abs{\CorpusGram_i} \le \abs{\Corpus}$.
We can further suppose that $\abs{\Cand_i}$ is approximately the order of $\ratio_i \abs{\Sim_i}$ because, since $\Cand_i = \{\word \in \Sim_i \mid \word \text{ occurs in } \Corpus\}$, the occurrence of the query-similar patterns $\Sim_i$ in the corpus $\Corpus$ is expected to be close to uniform sampling according to $\ratio_i$.
However, we empirically observe that, as $i$ grows, the actual $\abs{\Cand_i}$ becomes much larger than $\ratio_i \abs{\Sim_i}$
\ifdefined\LongVersion{}(see \cref{experiment:assumption} for details);
\else (see the full version for details);
\fi
this is probably because $\CorpusGram_i$ is governed by linguistic constraints stronger than $\Cand_i$, such as grammar and collocations.
\end{remark}

By \cref{assumption:cand-bound}, we can make a more detailed estimate:
\begin{theorem}[Estimate under \cref{assumption:cand-bound}]
\label{theorem:estimate}
  The expected number of exact lookups in \cref{alg:search} is bounded from above by $\Amp \Coeff (1 - \rate^\lenquery) / (1 - \rate)$ when $r \neq 1$.
\end{theorem}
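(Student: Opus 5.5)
The plan is a direct chaining of the two assumptions through the identity $\Total = \sum_{i=1}^\lenquery \abs{\PreCand_i}$ stated before \cref{assumption:precand-amp}. First, apply linearity of expectation to get
\[
\ExpTotal = \sum_{i=1}^{\lenquery} \Exp{\abs{\PreCand_i}}.
\]
Second, bound each summand by \cref{assumption:precand-amp}, giving $\Exp{\abs{\PreCand_i}} \le \Amp\, \Exp{\abs{\Cand_{i-1}}}$ for every $i \ge 1$. Third, apply \cref{assumption:cand-bound} with index $i-1$, giving $\Exp{\abs{\Cand_{i-1}}} \le \Coeff \rate^{i-1}$. Combining these yields
\[
\ExpTotal \le \Amp \Coeff \sum_{i=1}^{\lenquery} \rate^{i-1} = \Amp \Coeff \sum_{j=0}^{\lenquery-1} \rate^{j}.
\]
Finally, evaluate the geometric series for $\rate \neq 1$ to obtain $\Amp\Coeff(1-\rate^\lenquery)/(1-\rate)$, which is the claimed bound.

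The one step needing care is the boundary index $i=1$. Here \cref{assumption:precand-amp} involves $\Cand_0 = \{\emppat\}$, so the chain requires $\abs{\Cand_0} = 1 \le \Coeff \rate^0 = \Coeff$. I will read \cref{assumption:cand-bound} as holding \emph{for any $i$}, including $i=0$, as the statement literally says, which makes the $j=0$ term legitimate. If one insists that the assumption only covers $i \ge 1$, the first term is instead $\Exp{\abs{\PreCand_1}} \le \Amp \cdot 1$, and the bound still holds as stated provided $\Coeff \ge 1$. I will remark on this, but it is essentially the only subtlety.

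I do not expect a genuine obstacle, since every inequality runs in the same direction and all quantities are nonnegative, so summing term-wise bounds is valid. For completeness I will also note the behavior in the two regimes. When $\rate<1$ the bound is at most $\Amp\Coeff/(1-\rate)$ uniformly in $\lenquery$. When $\rate>1$ it equals $\Amp\Coeff(\rate^\lenquery-1)/(\rate-1)$, and both forms are the same expression. I will also note that the excluded case $\rate=1$ gives $\Amp\Coeff\lenquery$ as the limit.
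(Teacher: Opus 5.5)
Your proposal is correct and follows the paper's proof: the paper packages your first two steps as \cref{lem:total-bound}, namely $\ExpTotal \le \Amp \sum_{i=0}^{\lenquery-1}\Exp{\abs{\Cand_i}}$, then applies \cref{assumption:cand-bound} termwise and sums the geometric series. Like you, the paper applies \cref{assumption:cand-bound} at $i=0$, which implicitly requires $\abs{\Cand_0}=1\le\Coeff$, so your remark on the boundary index clarifies that step without departing from the paper's argument.
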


If $\rate < 1$, the bound remains $O(1)$ with respect to $m$. Compared to $\Sim_i$, the pruned candidate set $\Cand_i$ dramatically mitigates exponential growth.

\noindent
\textit{Proof Sketch.}
By straightforward calculation.
See \cref{app:theory:proof-estimate} for the full proof.
\hfill$\square$

\footnotetext{\url{https://taku910.github.io/mecab/}}

\section{Empirical Evaluation}
\label{sec:experiment}

\subsection{Experimental Setup}

\myparagraph{Corpus, Tokenizers, Word Vectors.} We used corpora of 7 different languages listed in \cref{tab:corpus-list}. For English, we used FineWeb-Edu~\cite{Lozhkov-2024-fineweb-eduFinestCollectionEducationalContent-ef}, one of the largest publicly available corpora, comprising over one trillion tokens. Details are provided in \cref{sec:appendix-tokenizers}.

\myparagraph{Search Query.} We used (1) semi-automatically generated datasets using Gemini 3.0 Pro, and (2) real-world search queries \cite{craswell2020orcas}. The main text only includes the results of (1). See \cref{appendix:real-data} for the results of (2). For the Gemini-generated dataset, we used 400 (English) and 100 queries (other languages) of token lengths of 10 or less. The queries covered various genres: see \cref{sec:appendix-experiments} for details, including the query generation method.

\myparagraph{Environment.} We conducted all experiments on an AWS \texttt{i4i.32xlarge} instance (128 vCPU, 1024GB RAM, 30TB NVMe SSD).

\myparagraph{Baseline.} We selected state-of-the-art implementations for exact lookup and soft search as of this writing as baselines. For exact lookup, we used infini-gram~\cite{Liu-2024-infini-gramScalingUnboundedN-gramLanguageModelsTrillionTokens-vk} and infini-gram mini~\cite{Xu-2025-infini-gramMiniExactN-gramSearchInternetScaleFm-index-tk}; for soft search, we used SoftMatcha~\cite{Deguchi-2025-softmatchaSoftPatternMatcherBillion-scaleCorpusSearches-on}.

\myparagraph{Hyperparameters.} In the soft-search experiments, we retrieved the top $\nWanted = 20$ patterns, ranked by similarity to the query.\footnote{For SoftMatcha (baseline), we also retrieved the top $\nWanted=20$ sequences. Note that SoftMatcha's implementation outputs all patterns with similarity above a specified threshold $\thresh$. To ensure a fair comparison, we first computed the similarity of the $\nWanted$-th ranked result as the threshold $\thresh$ \textbf{prior to the timed search}, and then used $\thresh$ to perform the search.}  We only returned patterns with similarity $\ge 0.45$, because matches of very low similarity are usually irrelevant in practice (we denote the threshold $0.45$ as $\thresh$). Only 38 of the 400 English queries ($\approx 10\%$) returned fewer than $\nWanted$ results. Note that in our experiment, we used many hyperparameters: $\nWanted, \thresh, \csoftmin,$ and $\cinsert$ (some are written in \cref{section:similarity_definition}), but the rationales for the values we set are shown in \cref{section:rationale_hyperparameters}.

\myparagraph{Web Application.}
In this study, we also developed a web tool that actually searches corpora for up to hundred-billion-scale corpora. See \cref{sec:appendix-web-interface} for details.

\begin{table*}[t]
    \centering
    \footnotesize
    \tabcolsep 5.5pt
    \caption{Example search results for FineWeb-Edu (English, 1.4T tokens) using the proposed method. Numbers in parentheses indicate hit counts; *1 denotes a SoftMatcha hit; *2 denotes an infini-gram (including infini-gram mini) hit. See \cref{sec:appendix-comparison-with-exact-match} for the details on *2.}
    \label{tab:search-result}
    \begin{tabular}{rlrrrlrrr}
        \toprule
        & \multicolumn{4}{c}{\textbf{Query: olympics gold medalist}} & \multicolumn{4}{c}{\textbf{Query: importance of the machine learning}}\\
        \# & \textbf{Pattern} & \textbf{Sim.} & \textbf{*1} & \textbf{*2} & \textbf{Pattern} & \textbf{Sim.} & \textbf{*1} & \textbf{*2} \\
        \midrule

        1 & olympics gold medalist (838) & 100.0\% & \textcolor{teal}{\ding{51}} & \textcolor{teal}{\ding{51}} & importance of the machine learning (103) & 100.0\% & \textcolor{teal}{\ding{51}} & \textcolor{teal}{\ding{51}} \\
        2 & olympics gold \textbf{medallist} (456) & 89.1\% & \textcolor{teal}{\ding{51}} & \textcolor{red}{\ding{55}} & importance of \sout{\textcolor{lightgray}{the}} machine learning (12,662) & 85.4\% & \textcolor{red}{\ding{55}} & \textcolor{red}{\ding{55}} \\
        3 & \textbf{olympic} gold medalist (95,816) & 86.4\% & \textcolor{teal}{\ding{51}} & \textcolor{red}{\ding{55}} & importance \textcolor{magenta}{\textbf{,}} of the machine learning (14) & 84.2\% & \textcolor{red}{\ding{55}} & \textcolor{red}{\ding{55}} \\
        4 & olympics \textbf{silver} medalist (500) & 84.8\% & \textcolor{teal}{\ding{51}} & \textcolor{red}{\ding{55}} & \textbf{significance} of the machine learning (14) & 76.6\% & \textcolor{teal}{\ding{51}} & \textcolor{red}{\ding{55}} \\
        5 & \textbf{olympic} gold \textbf{medallist} (22,506) & 82.0\% & \textcolor{teal}{\ding{51}} & \textcolor{red}{\ding{55}} & importance \sout{\textcolor{lightgray}{of}} the machine learning (1) & 75.1\% & \textcolor{red}{\ding{55}} & \textcolor{red}{\ding{55}}  \\[-.3em]
        \rotatebox[origin=c]{90}{$\cdot\mkern-4mu\cdot\mkern-4mu\cdot$} & & & & \\
        20 & \textbf{paralympics} gold medalist (196) & 65.6\% & \textcolor{teal}{\ding{51}} & \textcolor{red}{\ding{55}} & importance of \textbf{all} machine learning (12) & 56.2\% & \textcolor{teal}{\ding{51}} & \textcolor{red}{\ding{55}} \\

        \bottomrule
    \end{tabular}
\end{table*}

\subsection{Qualitative and Linguistic Evaluation}

\cref{tab:search-result} shows the results when we searched FineWeb-Edu for (1) ``olympics gold medalist'' and (2) ``importance of the machine learning''. In the first example, \ourTool\ correctly found similar word sequences, such as ``olympics silver medalist,'' and captured subtle differences in both meaning and spelling. In the second example, \ourTool\ also retrieved sequences whose token count differs from the query, such as ``importance of machine learning''. Such variable-length matches were not supported by\NotInAnonymousVersion{ the original} SoftMatcha. The results for other languages (e.g., Chinese) appear in \cref{sec:appendix-other-languages-results}. In addition, we measured the output quality compared to baseline methods using LLM-as-a-judge. Compared to SoftMatcha, our method performed better in 61.3\% of the cases. Details are shown in \cref{section:llm-as-a-judge}.

\subsection{Disk Access in Exact Lookup}
\label{sec:eval-exact}

First, our method performed well in exact lookup, due to the low number of disk accesses. As shown in \cref{fig:graph1-2}, the p95 (95th percentile) latency for FineWeb-Edu (1.4T tokens) was only 0.34 ms. Compared to infini-gram (11.05 ms), our method was 33 times faster. For infini-gram mini, we were unable to conduct experiments with the full corpus (1.4T tokens) because index construction timed out, \footnote{At 273B tokens, the duration was 22.2 hours.} but at 273B tokens, our method was 475 times faster (0.32 ms vs. 151.84 ms). This result demonstrates that our fast suffix array is useful on its own, in addition to our soft search.

\subsection{Latency for Soft Search}
\label{sec:eval-soft}

\begin{figure}[t]
    \centering
    \centerline{\includegraphics[width=0.94\columnwidth]{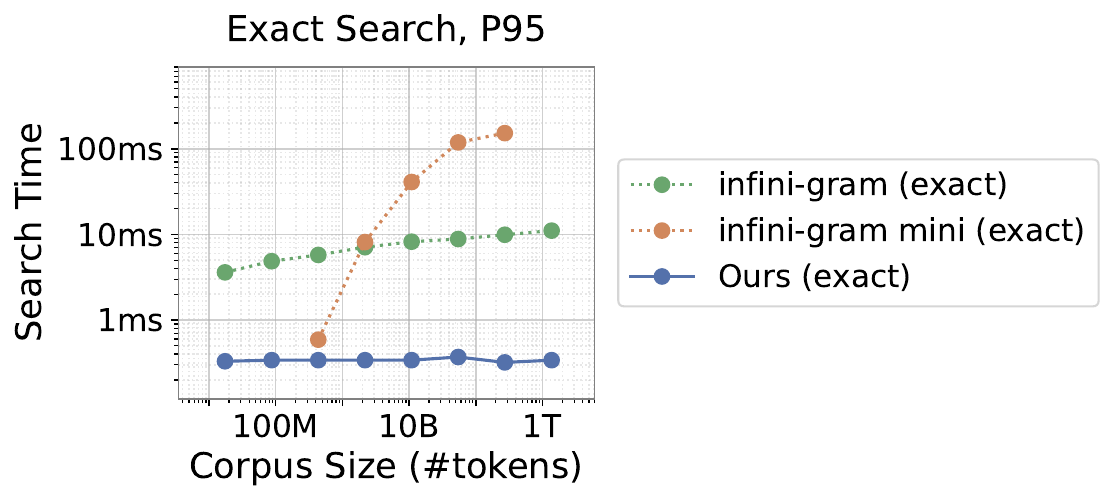}}
    \caption{The p95 (95th-percentile) latency of exact search for FineWeb-Edu dataset (1.4T tokens). infini-gram mini timed out during index construction for larger corpora, and an error occurred for smaller corpora.}
    \label{fig:graph1-2}
\end{figure}

\begin{figure}[t]
    \centering
    \centerline{\includegraphics[width=0.99\columnwidth]{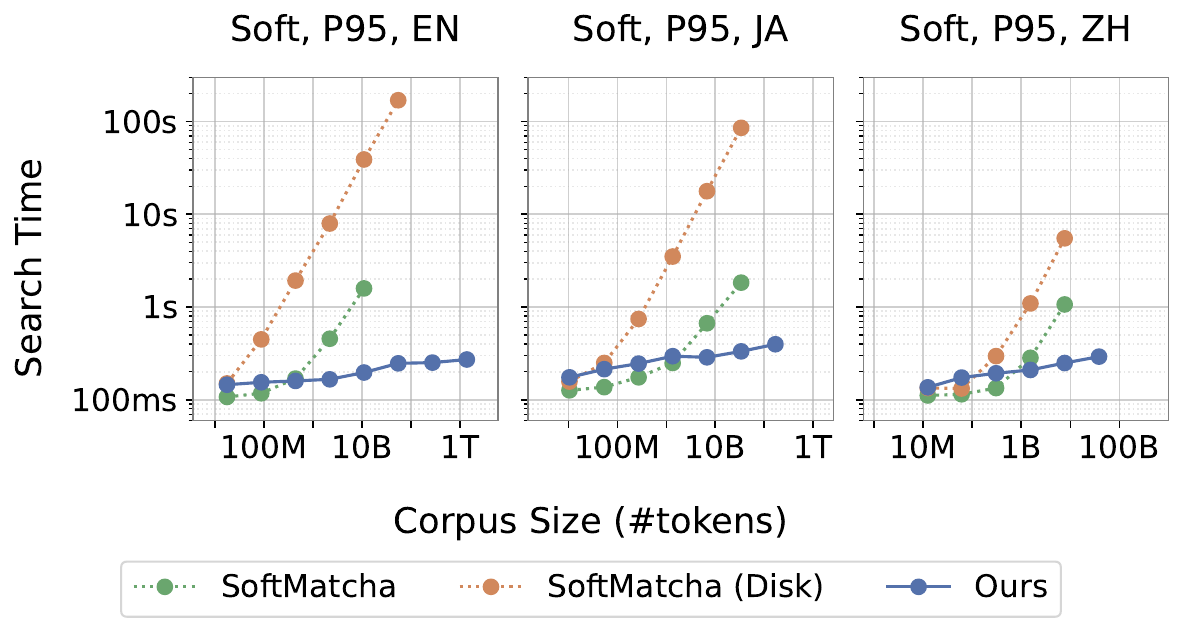}}
    \caption{The p95 (95th-percentile) latency of soft search for EN (FineWeb-Edu, 1.4T tokens), JA (C4 Japanese, 169B tokens), and ZH (C4 Chinese, 38.3B tokens). SoftMatcha hit a memory limit, timed out during index construction, or encountered errors for larger corpora.}
    \label{fig:graph1-3}
\end{figure}

\cref{fig:graph1-3} shows the p95 latency in corpora of various languages and sizes. For FineWeb-Edu (English), our method achieved a median latency of 89.59 ms and a p95 latency of 278.17 ms even for 1.4T tokens, which is sufficiently practical for use as a corpus search tool. In addition, our method also achieves p95 latencies of less than 400 ms for Japanese and Chinese. Compared with SoftMatcha, latency was nearly the same up to 500M tokens, but our method scaled better to larger corpora. Beyond 1B tokens, the latency of SoftMatcha increases almost linearly\footnote{
    SoftMatcha was unable to handle corpora exceeding 50B tokens due to memory limits and timeout. See \cref{sec:appendix-experiments} for details.
}. In contrast, our method showed almost no increase. In addition, our method also works well on real-world queries, not only on Gemini-generated queries (e.g., p95 latency for ORCAS was 433.50 ms). The detailed results, including (1) median latencies, (2) latencies for various $\thresh$ and $\nWanted$, and (3) latencies in some real-world queries, appear in \cref{sec:appendix-latency}.

\subsection{Disk Usage and Index Construction}
For the full corpus of FineWeb-Edu (1.4T tokens, 6.7TB), the index of our \ourTool\ is 21.6TB, \emph{including} raw text.
This is only 2.2 times greater than infini-gram's index size (9.9TB, \emph{excluding} raw text), even though infini-gram supports only exact search.
For index construction, our method took 53.8 hours, which was faster than infini-gram (61.2 h), infini-gram mini (timeout), and SoftMatcha (timeout).
Although index construction is time-consuming, it is required only once, and we believe that it would not be a major problem in practice. Details are shown in \cref{section:disk-usage-discussion}.

\begin{figure}[t]
    \centering
    \centerline{\includegraphics[width=\columnwidth]{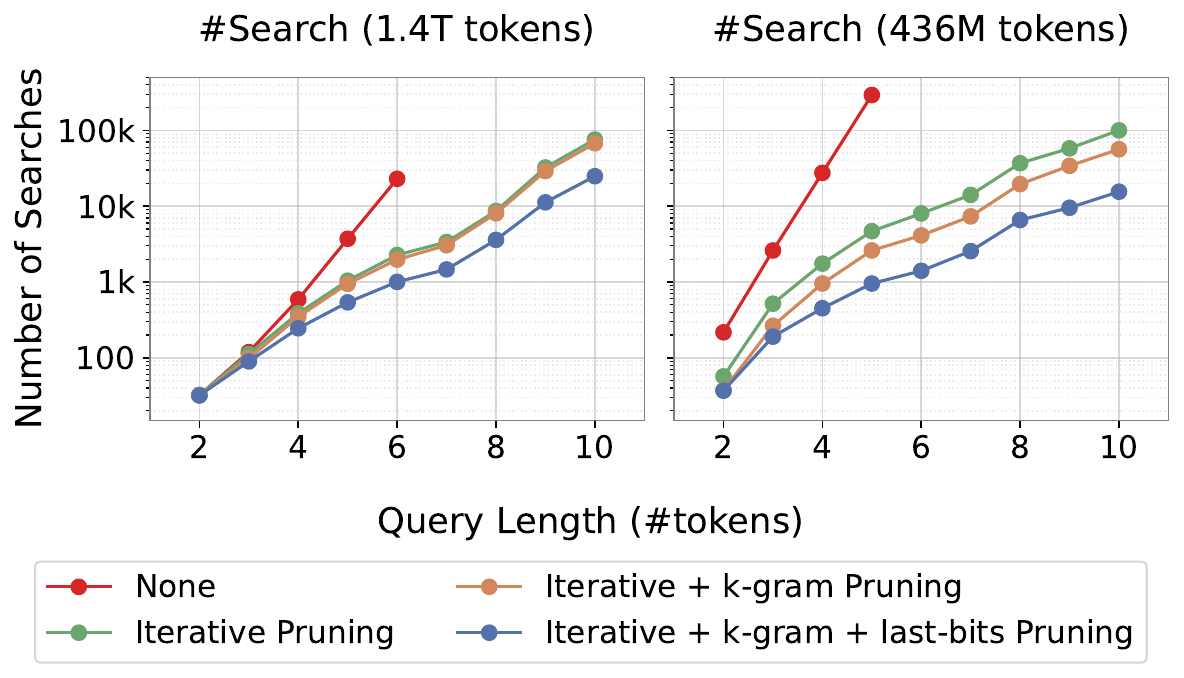}}
    \caption{The geometric mean number of exact string matching lookups with and without enabling the pruning techniques over the FineWeb-Edu dataset (1.4T tokens and subsampled 436M tokens). No data is displayed if a timeout (10 sec.) occurred.}
    \label{fig:graph2-1}
\end{figure}

\subsection{Ablation and Sensitivity Analysis}
We evaluated the effectiveness of the three pruning techniques explained in \cref{sec:dynamic-pruning}, i.e., (1) iterative-pruning, (2) $k$-gram pruning, and (3) last-bits pruning.
\cref{fig:graph2-1} shows the geometric mean number of exact string matching lookups for FineWeb-Edu (1.4T-token dataset and 436M-token subsampled dataset).
On average, it increases 2.30 times per token with all three pruning techniques, while 5.17 times with none of them, demonstrating the effectiveness of the proposed techniques.
See \cref{sec:appendix-details-effects-pruning} for further discussion.

We also analyze the sensitivity of search quality to the choice of hyperparameter value $\csoftmin$. The details are shown in \cref{section:ablation_beta}.

\section{Practical Applications}
\label{sec:application}
\subsection{Contamination Detection}
Evaluating LLMs intrinsically is extremely difficult due to their complexity; therefore, extrinsic evaluation using a benchmark dataset of hard problems is essential for the research and development of LLMs. Therefore, contamination of benchmark (i.e., test set) data by information from the pre-training corpus (i.e., training set) is harmful for the research and development of LLMs.

As a practical case study, this section demonstrates the effectiveness of our method in benchmark contamination detection.
Although infini-gram mini \cite{Xu-2025-infini-gramMiniExactN-gramSearchInternetScaleFm-index-tk} has also been applied for the same purpose and identified contaminations in 24 popular benchmarks, it relies solely on exact matching. By applying soft search, we can detect two critical forms of leakage that are overlooked by exact matching: (1) \textbf{Semantic contamination}, in which a token sequence that is semantically similar to a benchmark task is found in a pre-training corpus; and (2) \textbf{Template leakage}, in which a token sequence that is obtained by substituting a different numerical expression into a benchmark task is found in a pre-training corpus (e.g., ``Alice has 8 pens'' and ``Alice has 5 pens'').

\myparagraph{Contamination Detection Method.}
Inspired by the experiments in infini-gram mini~\cite{Xu-2025-infini-gramMiniExactN-gramSearchInternetScaleFm-index-tk}, we extract all 10-token substrings (instead of 50 characters in infini-gram mini), compute
$$
\eta = \frac{1}{\abs{S}} \sum_{s \in S} \mathrm{softmatch}(s),
$$
where $\mathrm{softmatch}(s)$ is 1 if a substring with similarity $\ge 0.6$ exists in the corpus, and 0 otherwise.
If $\eta \geq 0.8$, then a benchmark task is marked \emph{dirty}.

\myparagraph{Benchmarks and Corpus.} We analyze 7 benchmarks that are also used in infini-gram mini experiments ~\cite{Xu-2025-infini-gramMiniExactN-gramSearchInternetScaleFm-index-tk}. The benchmarks consist of problems taken from various categories: (1) knowledge and reasoning, (2) math, (3) coding, and (4) common-sense understanding. The full list of benchmarks is given in \cref{sec:appendix-contamination-details}.
For large test sets, we randomly sampled 400 entries. For benchmarks with questions and answers, we use only the question part as patterns. As a pre-trained corpus, we used the full FineWeb-Edu dataset (1.4T tokens).

\myparagraph{Verifying True Contamination.} We manually inspected each problem flagged as ``dirty'' by the soft search but not by the exact match (i.e., maximum similarity $<1.0$ but $\geq 0.6$), to determine whether the problem is truly contaminated or not. Specifically, we classified each problem as (1) semantic contamination, (2) template leakage, or (3) false positive.

\begin{figure}[t]
    \centering
    \centerline{\includegraphics[width=\linewidth]{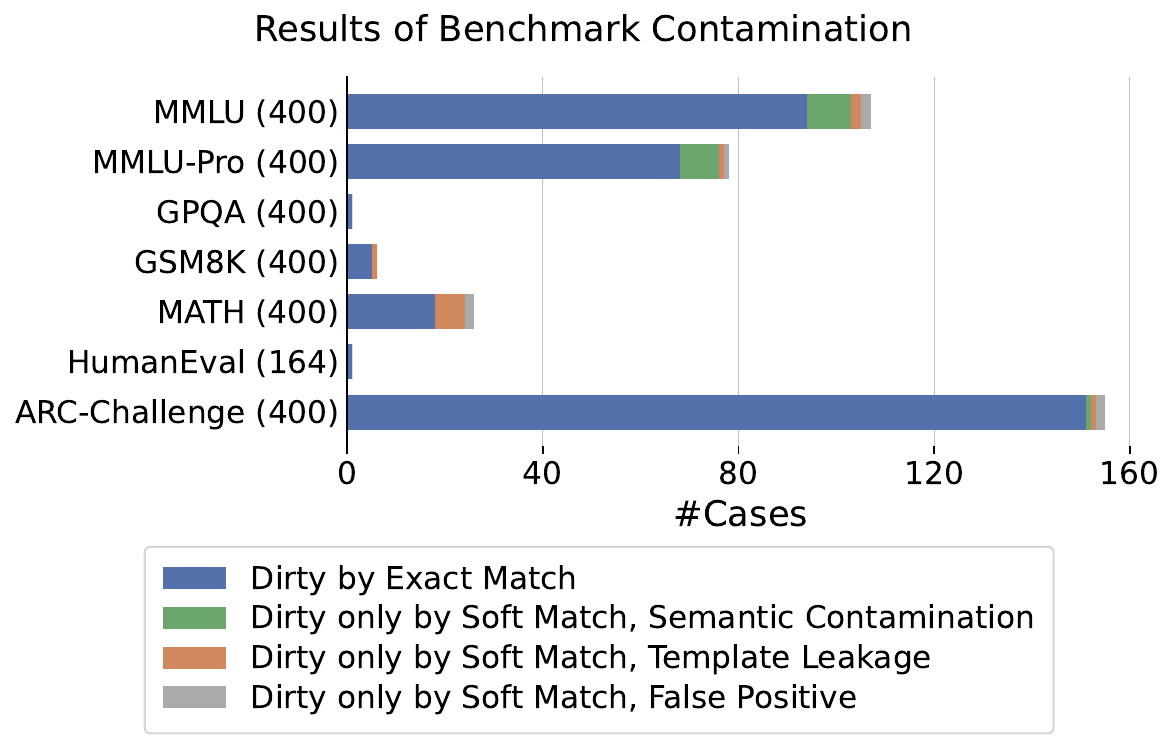}}
    \caption{The number of problems flagged as dirty, including the breakdown of contamination. The numbers in brackets are the sample counts per dataset. For a dataset of more than 400 samples, we randomly subsampled to 400 samples.}
    \label{fig:graph3-1}
\end{figure}

\myparagraph{Quantitative Results.} \cref{fig:graph3-1} shows the number of problems flagged as dirty by exact match and soft search, along with the breakdown of semantic contamination. In total, the standard exact match flagged 338 out of 2,564 samples (13.2\%) as dirty. Our method detected an additional 36 samples (1.4\%) that the exact match overlooked. Importantly, this manageable volume allowed for an exhaustive manual inspection,
which revealed a high precision rate of 81\% (29/36); 18 were classified as semantic contamination (e.g., comma insertions), and 11 were classified as template leakage (e.g., numerical substitution). The complete list, including the classification results, is shown in \cref{sec:appendix-contamination-details}.

Below, we show two interesting examples of contamination that we discovered.
\textbf{Example 1}: In the MMLU benchmark, one question asks ``The difference between dc and ac in electric circuits is that in dc, charges flow''. In the corpus, we found that the text ``The difference between DC and AC \sout{\textcolor{lightgray}{in}} electric circuits is that in DC, charges flow in one direction.'' where ``in'' is simply deleted from the question text. This is an example of semantic contamination that can affect the fairness of the evaluation.
\textbf{Example 2}: In the MATH benchmark, one question reads ``The sum of two numbers is 25 and their difference is 11. What is the larger of the two numbers?''. In the corpus, we found the text ``The sum of two numbers is \textbf{139} while their difference is \textbf{19}. What is the larger of two numbers?'' in the corpus; one is obtained by numerical substitution from the other.
Furthermore, the solution to this problem immediately follows this text in the pre-trained corpus. This is an example of template leakage.

\begin{table}[t]
    \centering
    \small
\caption{The performance in information retrieval and paraphrase detection tasks.}
    \label{tab:qualitative-tasks}
    \begin{tabular}{lrrrrr}
        \toprule
        \multirow{3}{*}{\textbf{Method}} & \multicolumn{4}{c}{\textbf{Information Retrieval}} & \multirow{3}{*}{\textbf{\makecell{Para-\\phrase}}} \\
        & \multicolumn{2}{c}{TREC-COVID} & \multicolumn{2}{c}{SciFact} & \\
        & P@20 & R@1000 & P@1 & R@5 & \\
        \midrule
        BM25 & 33.8 & 14.9 & \textbf{42.7} & 58.1 & 82.8 \\
        infini-gram & --- & --- & --- & --- & 65.5 \\
        SoftMatcha & 35.4 & 17.7 & 42.3 & 60.5 & 74.5 \\
        Ours & \textbf{36.0} & \textbf{18.0} & \textbf{42.7} & \textbf{60.7} & \textbf{84.3} \\ \bottomrule
    \end{tabular}
\end{table}

\subsection{Other Tasks}
We also conducted some evaluations for practical tasks: (1) information retrieval, and (2) paraphrase detection. The results are shown in \cref{tab:qualitative-tasks}, and our method outperformed or matched the baselines, including BM25, infini-gram, and SoftMatcha. The details are shown in \cref{sec:retrieval} and \cref{section:paraphrase}.

\section{Conclusion}
\label{sec:conclusion}
We developed \ourTool, an ultra-fast and flexible algorithm for matching queries against trillion-scale natural language corpora under semantic variations, centered on dynamic pruning for semantic query relaxation, built on top of a newly designed, disk-aware suffix-array index.
We theoretically analyze the efficiency of the proposed method, indicating how it can mitigate exponential growth in the search space by leveraging the statistical properties of natural language.
Empirically, our method achieves soft matching at the 0.1-second level and substantially reduces latency compared with infini-gram, infini-gram mini, and SoftMatcha. As a practical outcome, we show that the proposed method enables information retrieval, paraphrase detection, and the detection of contamination between training corpora and evaluation data.

We also note several limitations of the proposed method. First, it operates at the word level and cannot capture semantic variations that span multiple words, such as the similarity between ``U.S.'' and ``United States''. The theoretical analysis in \cref{sec:theory} presupposes statistical regularities of natural language to effectively suppress the combinatorial explosion induced by semantic query relaxation.  In addition, the proposed method requires memory and disk resources that scale with corpus size, which can make deployment and system setup demanding. 

An important future direction is to strengthen the theoretical grounding of our analysis by collecting broader empirical evidence and relating it to known statistical regularities and universals of natural language. Another promising direction is to incorporate compositional representations, such as sums of word vectors, to better handle multi-word semantic variations; however, doing so would substantially increase the number of substitution candidates, making it challenging to preserve low-latency search.
Supporting more general queries, such as wildcards, is also a future direction.
Finally, we plan to extend the online demo to additional languages and corpora, enabling broader analysis of large-scale training data and facilitating community use.

\clearpage

\ifdefined\VersionAnonymous\else \section*{Acknowledgements}
This work was supported by
JSPS KAKENHI
    JP24KJ0133, JP23K24910, JP25H01113; JST CREST JPMJCR2012; JST FOREST JPMJFR2331; JST PRESTO JPMJPR22CA; JST BOOST
    JPMJBY24H8, JPMJBS2412; the Hakubi Project at Kyoto University; and the NINJAL BCCWJ2 Project commissioned by the Agency for Cultural Affairs, Japan. The second author is also hosted by MPI-SWS as a JSPS Overseas Research Fellow. 

We are grateful to Hiroyuki Deguchi for his helpful remarks on the details of SoftMatcha~\cite{Deguchi-2025-softmatchaSoftPatternMatcherBillion-scaleCorpusSearches-on}. We would like to thank Zhenya Zhang, \'Etienne Andr\'e, Benjamin Heinzerling, Paolo Arcaini, and Anna Lukina for commenting on the query examples in Chinese, French, German, Italian, and Russian, respectively. We would like to thank Yoshitaka Haribara for his support with the computing environment. We are grateful to Reina Akama for her advice on the public release of our tool. Finally, we thank the anonymous reviewers and meta-reviewers for their thoughtful and constructive feedback, which helped us improve the theory, experiments, and presentation of this paper. \fi

\bibliography{ref}
\ifdefined\VersionNonICML
\bibliographystyle{abbrvnat}

\else
\bibliographystyle{icml2026}
\fi

\clearpage
\appendix
\crefalias{section}{appendix}

\section{Additional Related Work}
\subsection{Corpus Search for Information Retrieval, NLP, and Language Models} \label{related_work:corpus_search_for_IR_NLP_LM}

In corpus linguistics and quantitative linguistics, contextually grounded examples of words or phrases, known as KWIC
\citep[Key Word In Context;][]{Luhn-1960-keyWord-in-contextIndexTechnicalLiteraturekwicIndex-ds}, are used with concordancers to analyze patterns of actual language usage~\citep{Computing-2015-brownCorpus-cc,NINJAL-2011-ZhongNaYan-nq,DWDS-2001}.
In traditional natural language processing, corpus searching serves as a general subroutine for extracting world knowledge from text corpora (information extraction) or as a preprocessing step for analyzing product reviews (sentiment analysis).
In addition, language models are fundamentally built from large-scale corpora, making the study of language models inseparable from corpus research~\citep{Dodge-2021-documentingLargeWebtextCorporaCaseStudyColossalCleanCrawledCorpus-wx}.
For instance, crucial studies that identify harmful biases and inappropriate expressions in training corpora, or detect benchmark contamination, are grounded in corpus search~\cite{Guo-2022-surveyAutomatedFact-checking-gi,Ippolito-2023-preventingGenerationVerbatimMemorizationLanguageModelsGivesFalseSensePrivacy-xl,Deng-2023-benchmarkProbingInvestigatingDataLeakageLargeLanguageModels-kk,Xu-2024-benchmarkingBenchmarkLeakageLargeLanguageModels-sn}.
Moreover, with the discovery of scaling laws \citep{Kaplan-2020-scalingLawsNeuralLanguageModels-wa} and the exponential growth of training corpora
\citep{Gokaslan-2019-openwebtextCorpus-za,raffel2020exploring,Penedo-2024-finewebDatasetsDecantingWebFinestTextDataScale-hb}, corpora have reached sizes that are no longer manually verifiable, thereby increasing the necessity for efficient corpus searching methods in recent years \citep{Liu-2024-infini-gramScalingUnboundedN-gramLanguageModelsTrillionTokens-vk,Xu-2025-infini-gramMiniExactN-gramSearchInternetScaleFm-index-tk,Deguchi-2025-softmatchaSoftPatternMatcherBillion-scaleCorpusSearches-on}.

It is also worth mentioning methods that share the objective of ``retrieving documents from queries'' but differ somewhat in their goals and primary purposes.
In corpus linguistics and classical natural language processing, regular expressions have been used as versatile and powerful pattern matchers~\cite{Knight-2013-corpusLinguisticsMethodsTheoryPracticeTonyMceneryAndrewHardie-vd,Manning-1999-foundationsStatisticalNaturalLanguageProcessing-ww}.
However, natural language frequently involves synonymous variants, phrase-level paraphrases, and various insertions and deletions of pronouns and adverbs, making it challenging to use regular expressions for large-scale corpus searches, such as contamination detection~\citep{Deng-2023-benchmarkProbingInvestigatingDataLeakageLargeLanguageModels-kk,Xu-2024-benchmarkingBenchmarkLeakageLargeLanguageModels-sn} that we discuss in~\cref{sec:application}.
In the field of information retrieval, lexical search methods such as TF-IDF~\citep{Salton-1988-term-weightingApproachesAutomaticTextRetrieval-iv} and BM25~\citep{Robertson1994-au} have been widely used~\cite{Ceri-2013-introductionInformationRetrieval-pe}.
These methods use a form of set similarity between a query and a document to retrieve ``related'' documents, rather than performing strict text searches.
Dense vector search~\citep{Karpukhin-2020-densePassageRetrievalOpen-domainQuestionAnswering-io} is a more modern approach for finding ``related'' documents to a query.
Technically, it uses text vectors~\citep{Reimers-2019-sentence-bertSentenceEmbeddingsUsingSiameseBert-networks-ne,Gao-2021-simcseSimpleContrastiveLearningSentenceEmbeddings-xw} and approximate nearest neighbor (ANN) search~\cite{Indyk-1998-approximateNearestNeighborsTowardsRemovingCurseDimensionality-ll,Andoni-2008-near-optimalHashingAlgorithmsApproximateNearestNeighborDimensions-ea,Malkov-2020-efficientRobustApproximateNearestNeighborSearchUsingHierarchicalNavigableSmallWorldGraphs-kf}, and has recently been particularly popular in Retrieval-Augmented Generation~\citep[RAG;][]{Lewis-2020-retrieval-augmentedGenerationKnowledge-intensiveNlpTasks-cw}.

\subsection{String Matching Algorithms} \label{sec:background-suffix-array}

\emph{String matching}, or matching a pattern string against a target string, has been a fundamental research topic in computer science with a long history; see, e.g.,~\citep{Adjeroh2008} for the details.
Traditionally, string matching has been designed to match strings at the letter level; however, most of the string-matching algorithms can be naturally applied to token-level search rather than letter-level search.

A popular approach for scalable string matching is to build an index of the target text in advance and use it at runtime.
\emph{Suffix arrays}~\citep{suffix-array} and \emph{inverted indexing}~\citep{DBLP:journals/csur/ZobelM06} are representative indexing schemes.
Using a suffix array, one can locate the occurrences of the query text in $\Order{\log \abs{\Corpus}}$ time, where $\abs{\Corpus}$ is the corpus size.
In contrast, a search with an inverted index takes linear time \word.r.t. the frequency of the rarest word in the target text---prohibitively costly for a massive corpus.

String matching is generalized for \emph{approximate} matching (i.e., matching a pattern to a target text modulo substitutions, insertions, and deletions of letters), where the distance from the query text is measured by an edit distance \citep{DBLP:journals/jacm/WagnerF74}.
In addition to relaxation based on the \emph{semantic} distance, our method supports matching with word insertions and deletions, inspired by this line of work.

\section{Web Interface}
\label{sec:appendix-web-interface}

As noted in the abstract, we provide a web interface (demo tool) that works for corpora of up to hundreds of billions of tokens; see \cref{fig:xa-1}. \Cref{fig:xa-2} shows another screenshot of our demo. You can use the demo via the following link:

\begin{quote}
    \url{https://softmatcha.github.io/v2/} \end{quote}

\begin{figure*}[htbp]
\begin{center}
    \centerline{\includegraphics[width=2\columnwidth]{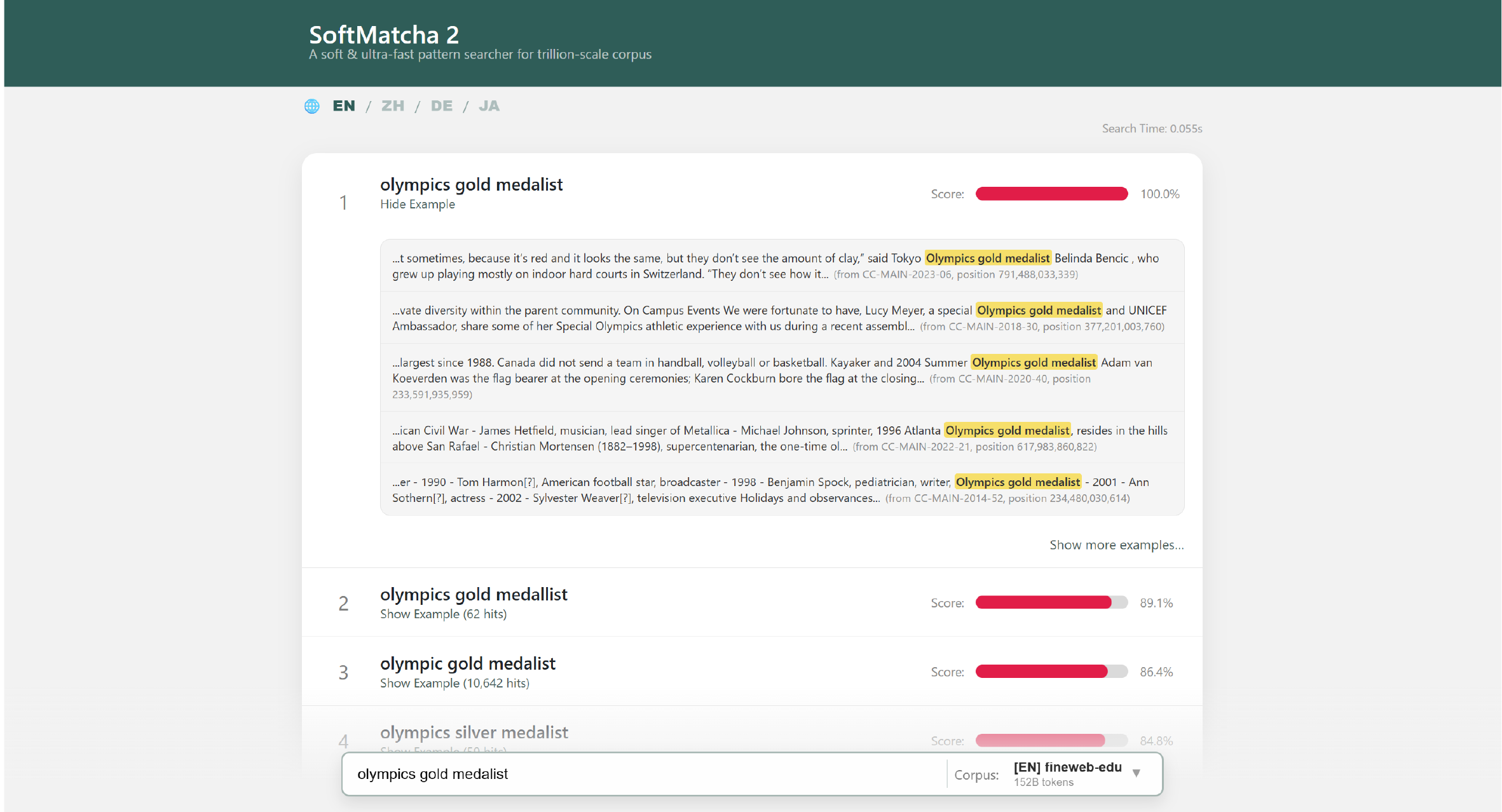}}
    \caption{Screenshot of a search for ``olympics gold medalist'', showing example matches. In this example, the demo returns the top 20 patterns in 55 ms.}
    \label{fig:xa-1}
  \end{center}
\end{figure*}

\begin{figure*}[htbp]
\begin{center}
    \centerline{\includegraphics[width=2\columnwidth]{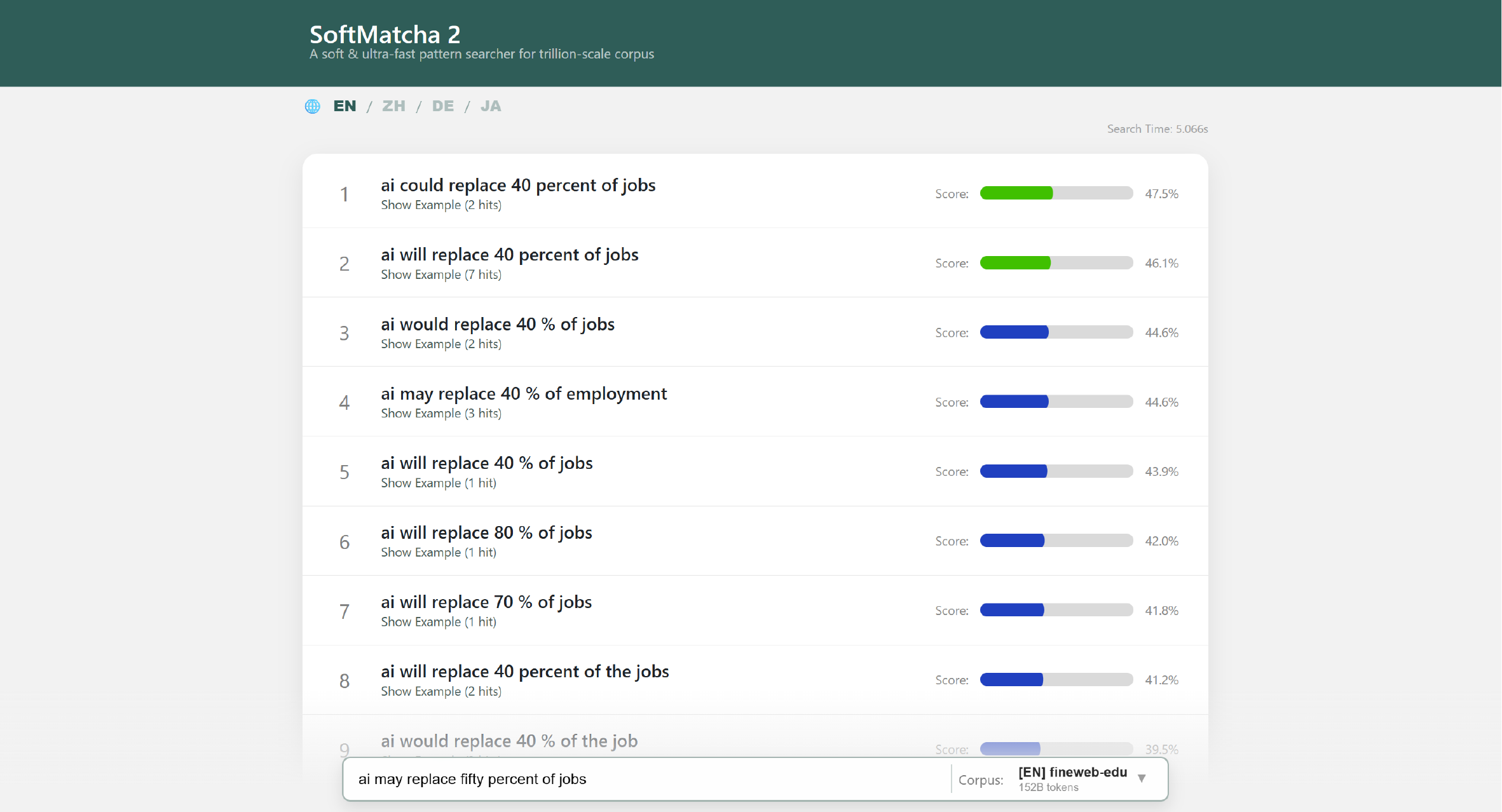}}
    \caption{Screenshot of a search for ``ai may replace fifty percent of jobs''. Using extended search mode, the demo returns 40 patterns with similarity $\ge 0.27$.}
    \label{fig:xa-2}
  \end{center}
\end{figure*}

\section{Tokenizers and Embeddings}
\label{sec:appendix-tokenizers}

\myparagraph{Details of Embeddings.}
For English, we use \texttt{glove-wiki-gigaword-300}~\cite{Pennington-2014-gloveGlobalVectorsWordRepresentation-qt} with a vocabulary size of 400{,}000.
For other languages, we use \texttt{facebook/fasttext-\{ja, zh, fr, de, it, ru\}-vectors}~\cite{Grave-2018-learningWordVectors157Languages-yr} with a vocabulary size of 524{,}288.
Although the original fastText models contain 2{,}000{,}000 words, we restrict the vocabulary to the top 524{,}288 entries to improve retrieval latency.

\myparagraph{Subword Tokenizer.} We also conducted an experiment with a model with subword tokenizer (see \cref{sec:appendix-subword} for details). We selected \texttt{meta-llama/Llama-2-7b-hf} with a vocabulary size of 32,000 in this experiment.

\myparagraph{Compatibility.} The tokenizers and embeddings we used (e.g., Moses and Glove) are not completely compatible, but empirically they are compatible in most of the cases.

\section{Rationales of the Hyperparameters}\label{section:rationale_hyperparameters}\label{section:rationale_softmin}

\myparagraph{Maximum Query Length $\Lenmax$.}
In the experiments, we set the maximum query length ($\Lenmax=12$), which is long enough for most use cases.
Studies on real search workloads show that the vast majority of queries are short: in concordancer logs, more than 87\% of queries are less than or equal to five words \citep{valli-2012-long}; in general online search, 99.9\% of queries are less than or equal to twelve words \citep{DBLP:conf/wsdm/BenderskyC09a}.
\citet{DBLP:conf/amta/MacklovitchLG08} report that 97.8\% of search queries are 5 words or fewer. In our preliminary user study, more than 99\% of queries were shorter than 11 words.
Since the index size scales linearly with $\Lenmax$, using a larger $\Lenmax$ would substantially increase storage costs without providing significant benefit to most users.

\myparagraph{Number $K$ of Initial Search Results.}
Our system presents the top $K$ search results. Thresholding by the number of results rather than the similarity follows the common practice in general retrieval/search systems such as Google Search, where users expect the top results to be returned quickly. We set $K=20$ to balance retrieval efficiency with result coverage.
We discuss the influence of the choice of $K$ on the efficiency in \cref{section:disklatency-evaluation}.

\myparagraph{Smooth Minimum Function.}
We used the smooth minimum (softmin) function parameterized by $\csoftmin$ to balance two conflicting goals: focusing on the most deviating words while still accounting for the others.
On the one hand, plain \emph{minimum} (as in SoftMatcha) ignores non-minimum similarities. For example, ``gold medal'' and ``gold medals'' have exactly the same score for the query ``silver medal''.
On the other hand, plain \emph{summation} over-penalizes multiple mismatches.
For example, ``Theorem 1'' has a score closer to ``Player 1'' than to ``Lemma 2''. Our smooth minimum interpolates between these extremes via the parameter $\csoftmin$.
See \cref{section:ablation_beta} for the ablation on the value of $\csoftmin$.

\myparagraph{Similarity Threshold $\thresh$.}
We set $\thresh = 0.45$ so that approximately 90\% of queries return the full $K=20$ results.
As stated in \cref{sec:experiment}, only 38 of the 400 queries returned fewer than $K$ results.
Lower thresholds produce unrelated matches.
For example, the similarity of ``natural language processing'' and ``natural language software'' is already $0.446$.

\section{Multilingual Applicability}
\label{sec:appendix-other-languages-results}

We also conducted some qualitative evaluations for Japanese, Chinese, French, German, Italian, and Russian. \cref{tab:search-result-japanese,tab:search-result-chinese,tab:search-result-french,tab:search-result-german,tab:search-result-italian,tab:search-result-russian} show the results along with translations to English, and all of them were able to capture semantic similarities and insertions/deletions.

\subsection{Japanese}

\begin{table*}[htbp]
  \centering
  \small
  \caption{Example search results for C4 (Japanese, subsampled 2.14B tokens) using the proposed method. Numbers in parentheses indicate hit counts.}
  \label{tab:search-result-japanese}

  \begin{subtable}[t]{\textwidth}
    \centering
    \caption{Query 1: \ja{大阪までの 2 時間半} (meaning: ``the two and a half hours to Osaka'')}
    \label{tab:search-result-japanese-query1}
    \begin{tabular}{rp{5.5cm}p{6.5cm}r}
      \toprule
      \# & Pattern & Translation & Similarity \\
      \midrule
        1 & \ja{大阪 まで 4 時間 半} (1)       & it takes four hours and a half to Osaka     & 75.0\% \\
        2 & \ja{大阪 から 7 時間 半} (1)       & it takes seven and a half hours from Osaka  & 58.9\% \\
        3 & \ja{京都 まで が ２ 時間 半} (1)   & it takes two and a half hours to Kyoto      & 57.4\% \\
        4 & \ja{神戸 から 、 2 時間 半} (1)    & it takes two and a half hours from Kobe     & 55.2\% \\
        5 & \ja{東京 まで 1 時間 半} (1)       & it takes one and a half hours to Tokyo      & 53.0\% \\[-.3em]

        \rotatebox[origin=c]{90}{$\cdot\mkern-4mu\cdot\mkern-4mu\cdot$} & & \\

        20 & \ja{名古屋 から 2 時間 半} (1)    & it takes two and a half hours from Nagoya   & 47.9\% \\[-.3em]
      \bottomrule
    \end{tabular}
  \end{subtable}
\end{table*}

\cref{tab:search-result-japanese} shows example search results for a Japanese corpus (C4 with 2.14B tokens) with a query ``\ja{大阪までの2時間半}'' meaning ``the two and a half hours to Osaka''.
We observe that all the search results have a similar meaning: they are about the time from or to a city in Japan (e.g., Kyoto).
Notably, we observe that token removal also occurs in Japanese: in \ja{大阪まで4時間半} (\#1), the token ``\ja{の}'' is removed, which makes ``\ja{4時間半}'' a noun phrase.

\subsection{Chinese}

\begin{table*}[htbp]
  \centering
  \small
  \caption{Example search results for C4 (Chinese, subsampled 2.24B tokens) using the proposed method. Numbers in parentheses indicate hit counts.}
  \label{tab:search-result-chinese}

  \begin{subtable}[t]{\textwidth}
    \centering
    \caption{Query 1: \zh{从中国到美国的旅游} (meaning: ``travel from China to the United States'')}
    \label{tab:search-result-chinese-query1}
    \begin{tabular}{rp{5.5cm}p{6.5cm}r}
      \toprule
      \# & Pattern & Translation & Similarity \\
      \midrule
      1  & \zh{从\ 中国\ 到\ 美国\ 旅游} (1)           & travel from China to the United States              & 78.3\% \\
      2  & \zh{从\ 欧洲\ 去\ 美国\ 的\ 游客} (2)      & tourists traveling from Europe to the United States & 46.2\% \\
      3  & \zh{从\ 武汉\ 到\ 韩国\ 的\ 游客} (1)      & tourists traveling from Wuhan to South Korea        & 45.4\% \\
      4  & \zh{从\ 中国\ 和\ 美国\ 的\ 机票} (1)      & air tickets between China and the United States     & 44.2\% \\
      5  & \zh{从\ 中国\ 到\ 美国\ 的\ 高铁} (1)      & high-speed rail from China to the United States     & 44.1\% \\
      6  & \zh{从\ 中国\ 到\ 欧洲\ 的\ 运输} (1)      & transportation from China to Europe                 & 43.8\% \\
      7  & \zh{从\ 日本\ 到\ 美国\ 的\ 机票} (2)      & air tickets from Japan to the United States         & 43.7\% \\
      \bottomrule
    \end{tabular}
  \end{subtable}\hfill
  \begin{subtable}[t]{\textwidth}
    \centering
    \caption{Query 2-1: \tw{我\ 中午\ 吃飯\ 的}  (meaning: ``what I eat at noon'')}
    \label{tab:search-result-chinese-query2-1}
    \begin{tabular}{rp{5.5cm}p{6.5cm}r}
      \toprule
      \# & Pattern & Translation & Similarity \\
      \midrule
      1  & \tw{我\ 中午\ 吃飯} (1)        & I eat lunch                          & 69.3\% \\
      2  & \zh{我\ 中午\ 吃饭\ 的} (9)    & what I eat at noon                   & 63.2\% \\
      3  & \zh{你\ 中午\ 吃饭\ 的} (1)    & what you eat at noon                 & 60.7\% \\
      4  & \tw{你\ 晚上\ 睡覺\ 的} (1)    & what you do at night: sleep          & 59.2\% \\
      5  & \tw{我\ 晚上\ 睡覺\ ，} (1)    & I sleep at night,                    & 58.1\% \\[-.3em]
      \rotatebox[origin=c]{90}{$\cdot\mkern-4mu\cdot\mkern-4mu\cdot$} & & & \\
      20 & \zh{您\ 晚上\ 吃饭\ ，} (2)    & you eat dinner, (polite)             & 49.9\% \\
      \bottomrule
    \end{tabular}
  \end{subtable}\hfill
  \begin{subtable}[t]{\textwidth}
    \centering
    \caption{Query 2-2: \zh{我\ 中午\ 吃饭\ 的} (meaning: ``what I eat at noon'')}
    \label{tab:search-result-chinese-query2-2}
    \begin{tabular}{rp{5.5cm}p{6.5cm}r}
      \toprule
      \# & Pattern & Translation & Similarity \\
      \midrule
      1  & \zh{我\ 中午\ 吃饭\ 的} (9)      & what I eat at noon                 & 100.0\% \\
      2  & \zh{你\ 中午\ 吃饭\ 的} (1)      & what you eat at noon               & 76.8\% \\
      3  & \zh{我\ 中午\ 吃饭} (9)          & I eat lunch at noon                & 69.3\% \\
      4  & \zh{我\ 晚上\ 睡觉\ 的} (3)      & what I do at night: sleep          & 64.0\% \\
      5  & \zh{你\ 晚上\ 睡觉\ 的} (2)      & what you do at night: sleep        & 61.4\% \\[-.3em]
      \rotatebox[origin=c]{90}{$\cdot\mkern-4mu\cdot\mkern-4mu\cdot$} & & & \\
      20 & \zh{自己\ 正午\ 吃饭\ 的} (1)    & what oneself eats at noon          & 53.9\% \\
      \bottomrule
    \end{tabular}
  \end{subtable}

\end{table*}

\cref{tab:search-result-chinese} shows example search results for a Chinese corpus (C4 with 2.24B tokens).
For Query~1 (\zh{从中国到美国的旅游}, meaning ``travel from China to the United States''), most search results follow the same template:
\zh{从 \{place\} 到 \{place\} (的) \{travel-related noun\}}, where \zh{从} means ``from,'' \zh{到} means ``to,'' \zh{的} marks a modifier, and \zh{旅游}/\zh{游客} mean ``travel''/``tourists'',
with small variations such as replacing the origin/destination places (e.g., \zh{从中国到美国的高铁} (\#5), meaning ``high-speed rail from China to the United States'')
or switching to a related relation such as ``between'' by using \zh{和} (``and'') (e.g., \zh{从中国和美国的机票} (\#4), meaning ``air tickets between China and the United States'').

For Queries~2-1 and~2-2 (\tw{我\ 中午\ 吃飯\ 的} and \zh{我\ 中午\ 吃饭\ 的}, both meaning ``what I eat at noon''), most search results follow the same template:
\textit{\{person\} \{time\} \{eat/sleep\} (的)}
(where \zh{我}/\zh{你}/\zh{您} mean ``I/you/polite you,'' \zh{中午}/\zh{晚上} mean ``at noon/at night,''
\zh{吃饭}/\tw{吃飯} mean ``eat (a meal),'' and \zh{睡觉}/\tw{睡覺} mean ``sleep''),
with small variations such as changing the person (e.g., \zh{你\ 中午\ 吃饭\ 的} (\#2) of \cref{tab:search-result-chinese-query2-2}, meaning ``what you eat at noon''),
changing the time and verb (e.g., \zh{我\ 晚上\ 睡觉\ 的} (\#4) of \cref{tab:search-result-chinese-query2-2}, meaning ``what I do at night: sleep''),
or omitting the final particle \zh{的} (e.g., \tw{我\ 中午\ 吃飯} (\#1) of \cref{tab:search-result-chinese-query2-1}, meaning ``I eat lunch at noon'').

Queries~2-1 and~2-2 are identical in meaning but use different scripts: Query~2-1 uses traditional characters (e.g., \tw{吃飯}, \tw{睡覺}), whereas Query~2-2 uses simplified characters (e.g., \zh{吃饭}, \zh{睡觉}).
This difference is reflected in the results: most matches for Query~2-1 are written in traditional characters, while most matches for Query~2-2 are written in simplified characters.
At the same time, cross-script matching still occurs, for example, \#2 of \cref{tab:search-result-chinese-query2-1} matches a traditional-character query with a simplified-character pattern (\zh{我\ 中午\ 吃饭\ 的}, meaning ``what I eat at noon'').

\subsection{French}

\begin{table*}[tbph]
  \centering
  \small
  \caption{Example search results for C4 (French, subsampled 978M tokens) using the proposed method. Numbers in parentheses indicate hit counts.}
  \label{tab:search-result-french}

  \begin{subtable}[t]{\textwidth}
    \centering
    \caption{Query 1: \textit{je mange de la viande} (meaning: ``I eat meat'')}
    \label{tab:search-result-french:query1}
    \begin{tabular}{rp{6cm}p{6cm}r}
      \toprule
      \# & Pattern & Translation & Similarity \\
      \midrule
      1  & \textit{je mange de la viande} (17)    & I eat meat & 100.0\% \\
      2  & \textit{ne mange de la viande} (1)     & (does) not eat meat & 69.7\% \\
      3  & \textit{je remange de la viande} (1)   & I eat meat again & 68.9\% \\
      4  & \textit{je consomme de la viande} (1)  & I consume meat & 63.2\% \\[-.3em]
      \rotatebox[origin=c]{90}{$\cdot\mkern-4mu\cdot\mkern-4mu\cdot$} & & & \\
      13 & \textit{je mange viande} (1)           & I eat meat (with a grammatical error) & 58.0\% \\[-.3em]
      \rotatebox[origin=c]{90}{$\cdot\mkern-4mu\cdot\mkern-4mu\cdot$} & & & \\
      20 & \textit{on mange de la nourriture} (1) & we eat food & 53.8\% \\
      \bottomrule
    \end{tabular}
  \end{subtable}

  \begin{subtable}[t]{\textwidth}
    \centering
    \caption{Query 2: \textit{de la vérification formelle} (meaning: ``of formal verification'')}
    \label{tab:search-result-french:query2}
    \begin{tabular}{rp{6cm}p{6cm}r}
      \toprule
      \# & Pattern & Translation & Similarity \\
      \midrule
      1  & \textit{de la vérification formelle} (4)    & of formal verification        & 100.0\% \\
      2  & \textit{la vérification formelle} (10)      & the formal verification       & 79.9\% \\
      3  & \textit{sa vérification formelle} (1)       & its formal verification       & 64.5\% \\
      4  & \textit{de la vérification rigoureuse} (1)  & of rigorous verification      & 55.8\% \\
      5  & \textit{vérification formelle} (16)         & formal verification           & 50.6\% \\[-.3em]
      \rotatebox[origin=c]{90}{$\cdot\mkern-4mu\cdot\mkern-4mu\cdot$} & & & \\
      15 & \textit{et la vérification formels} (1)     & and the formal verification(s) & 45.1\% \\[-.3em]
      \bottomrule
    \end{tabular}
  \end{subtable}
\end{table*}

\cref{tab:search-result-french} shows example search results for a French corpus (C4 with 978M tokens).
For Query~1 (\textit{je mange de la viande}, meaning ``I eat meat''), most search results follow the same template:
\textit{\{subject\} (ne) \{eat\} (pas) de la \{food\}},
where \textit{je}/\textit{on} mean ``I''/``we,'' \textit{mange} means ``eat,'' \textit{ne~$\cdots$~pas} expresses negation, and \textit{viande}/\textit{nourriture} mean ``meat''/``food'',
with small variations such as adding negation (e.g., \textit{ne mange de la viande} (\#2), meaning ``(someone) does not eat meat'')
or replacing the verb with a near-synonym (e.g., \textit{je consomme de la viande} (\#4), meaning ``I consume meat'').
We also observe a grammatical error in \#13: \textit{je mange viande}; the grammatical form is \textit{je mange de la viande}.

For Query~2 (\textit{de la vérification formelle}, meaning ``of formal verification''), most search results follow the same template:
\textit{(\{determiner\}) vérification \{modifier\}},
where \textit{de la}/\textit{la}/\textit{sa} mean ``of the''/``the''/``its,'' and \textit{formelle} means ``formal'',
with small variations such as dropping the determiner (e.g., \textit{vérification formelle} (\#5), meaning ``formal verification'')
or replacing the modifier with a semantically close one (e.g., \textit{de la vérification rigoureuse} (\#4), meaning ``of rigorous verification'').

Token insertion and removal occur in French, such as \textit{de la vérification formelle} (``of formal verification''; Query~2) vs.\ \textit{vérification formelle} (``formal verification''; \#5 in \cref{tab:search-result-french:query2}).

\subsection{German}

\begin{table*}[tbph]
  \centering
  \small
  \caption{Example search results for C4 (German, subsampled 1.10B tokens) using the proposed method. Numbers in parentheses indicate hit counts.}
  \label{tab:search-result-german}

  \begin{subtable}[t]{\textwidth}
    \centering
    \caption{Query 1: \textit{der Bus von Graz nach Wien} (meaning: ``the bus from Graz to Vienna'')}
    \label{tab:search-result-german:query1}
    \begin{tabular}{rp{6cm}p{6cm}r}
      \toprule
      \# & Pattern & Translation & Similarity \\
      \midrule
      1  & \textit{der Bus von Wien nach München} (1)        & the bus from Vienna to Munich & 54.5\% \\
      2  & \textit{dem Taxi von Graz nach Wien} (1)          & the taxi from Graz to Vienna & 52.4\% \\
      3  & \textit{den Bus von Wien nach Berlin} (1)         & the bus from Vienna to Berlin & 51.9\% \\
      4  & \textit{dem Bus von Augsburg nach Wien} (1)       & the bus from Augsburg to Vienna & 51.1\% \\
      5  & \textit{der Bus von Innsbruck nach Karlsruhe} (1) & the bus from Innsbruck to Karlsruhe & 48.7\% \\[-.3em]
      \rotatebox[origin=c]{90}{$\cdot\mkern-4mu\cdot\mkern-4mu\cdot$} & & & \\
      20 & \textit{dem Flug von Graz nach Innsbruck} (1)     & the flight from Graz to Innsbruck & 41.9\% \\[-.3em]
      \bottomrule
    \end{tabular}
  \end{subtable}

  \begin{subtable}[t]{\textwidth}
    \centering
    \caption{Query 2: \textit{Ich liebe Bücher aber} (meaning: ``I love books, but'')}
    \label{tab:search-result-german:query2}
    \begin{tabular}{rp{6cm}p{6cm}r}
      \toprule
      \# & Pattern & Translation & Similarity \\
      \midrule
      1  & \textit{ich liebe Bücher , also} (1)    & I love books, so & 63.7\% \\
      2  & \textit{ich liebe Bücher ,} (6)         & I love books, & 62.4\% \\
      3  & \textit{ich liebe Bücher die} (1)       & I love books that & 56.5\% \\
      4  & \textit{ich liebe Bücher und auch} (1)  & I love books and also & 54.5\% \\
      \rotatebox[origin=c]{90}{$\cdot\mkern-4mu\cdot\mkern-4mu\cdot$} & & & \\
      6  & \textit{ich hasse Bücher ,} (1)         & I hate books, & 53.8\% \\[-.3em]
      \rotatebox[origin=c]{90}{$\cdot\mkern-4mu\cdot\mkern-4mu\cdot$} & & & \\
      20 & \textit{ich liebe Katzen auch} (2)      & I love cats too & 48.7\% \\[-.3em]
      \bottomrule
    \end{tabular}
  \end{subtable}

\end{table*}

\cref{tab:search-result-german} shows example search results for a German corpus (C4 with 1.10B tokens).
For Query~1 (\textit{der Bus von Graz nach Wien}, meaning ``the bus from Graz to Vienna''), the search results largely preserve the same template: \textit{(article) \{Bus/Taxi/Flug\} von \{city\} nach \{city\}}
(\textit{Bus/Taxi/Flug} mean ``bus/taxi/flight,'' \textit{von} means ``from,'' and \textit{nach} means ``to''),
while substituting the vehicle type and the origin/destination cities (e.g., \#1 is ``the bus from Vienna to Munich,'' and \#20 is ``the flight from Graz to Innsbruck'').

For Query~2 (\textit{Ich liebe Bücher aber}, meaning ``I love books, but''), most results keep the prefix
\textit{ich liebe Bücher} (``I love books'') and vary the continuation by inserting or replacing a few short words:
\textit{also} (``so''; \#1), \textit{die} (``that/which''; \#3), and \textit{und auch} (``and also''; \#4).
In contrast, \#6 changes the polarity by replacing \textit{liebe} (``love'') with \textit{hasse} (``hate''),
yielding \textit{ich hasse Bücher} (``I hate books'').

Token insertion and removal occur in German, such as \textit{ich liebe Bücher aber} (``I love books, but''; Query~2) vs.\ \textit{ich liebe Bücher , also} (``I love books, so''; \#1 in \cref{tab:search-result-german:query2}).

\subsection{Italian}

\begin{table*}[tbph]
  \centering
  \small
  \caption{Example search results for C4 (Italian, subsampled 1.07B tokens) using the proposed method. Numbers in parentheses indicate hit counts.}
  \label{tab:search-result-italian}

  \begin{subtable}[t]{\textwidth}
    \centering
    \caption{Query 1: \textit{di roma e di napoli} (meaning: ``of Rome and of Naples'')}
    \label{tab:search-result-italian:query1}
    \begin{tabular}{rp{6cm}p{6cm}r}
      \toprule
      \# & Pattern & Translation & Similarity \\
      \midrule
      1  & \textit{di roma e di napoli} (13)     & of Rome and of Naples & 100.0\% \\
      2  & \textit{di roma , e di napoli} (1)    & of Rome, and of Naples & 87.5\% \\
      3  & \textit{roma e di napoli} (13)        & Rome and of Naples & 79.5\% \\
      4  & \textit{di roma e napoli} (70)        & of Rome and Naples & 79.5\% \\
      5  & \textit{di roma , di napoli} (4)      & of Rome, of Naples & 79.3\% \\[-.3em]
      \rotatebox[origin=c]{90}{$\cdot\mkern-4mu\cdot\mkern-4mu\cdot$} & & & \\
      20 & \textit{di napoli e di roma} (7)   & of Naples and of Rome & 68.6\% \\[-.3em]
      \bottomrule
    \end{tabular}
  \end{subtable}
  \begin{subtable}[t]{\textwidth}
    \centering
    \caption{Query 2: \textit{generalmente di proprietà di} (meaning: ``generally owned by'')}
    \label{tab:search-result-italian:query2}
    \begin{tabular}{rp{6cm}p{6cm}r}
      \toprule
      \# & Pattern & Translation & Similarity \\
      \midrule
      1  & \textit{generalmente di proprietà di} (2) & generally owned by & 100.0\% \\
      2  & \textit{generalmente di proprietà} (284)  & generally owned & 75.0\% \\
      3  & \textit{spesso di proprietà di} (6)       & often owned by & 71.3\% \\
      4  & \textit{normalmente di proprietà} (1)     & normally owned & 60.4\% \\
      5  & \textit{generalmente proprietà} (4)       & usual characteristic & 56.3\% \\[-.3em]
      \rotatebox[origin=c]{90}{$\cdot\mkern-4mu\cdot\mkern-4mu\cdot$} & & & \\
      20 & \textit{sempre di proprietà del} (4)       & always owned by the & 46.3\% \\[-.3em]
      \bottomrule
    \end{tabular}
  \end{subtable}

\end{table*}

\begin{table*}[tbph]
  \centering
  \small
  \caption{Example search results for C4 (Russian, subsampled 1.01B tokens) using the proposed method. Numbers in parentheses indicate hit counts.}
  \label{tab:search-result-russian}

  \begin{subtable}[t]{\textwidth}
    \centering
    \caption{Query 1: \ru{золотую медаль на зимних олимпийских играх} (meaning: ``a gold medal at the Winter Olympic Games'')}
    \label{tab:search-result-russian:query1}
    \begin{tabular}{rp{6cm}p{6cm}r}
      \toprule
      \# & Pattern & Translation & Similarity \\
      \midrule
      1 & \ru{золотую медаль на зимних олимпийских играх} (3)   & a gold medal at the Winter Olympic Games   & 100.0\% \\
      2 & \ru{бронзовую медаль на зимних олимпийских играх} (1) & a bronze medal at the Winter Olympic Games & 68.7\% \\
      3 & \ru{золотые медали на зимних олимпийских играх} (1)   & gold medals at the Winter Olympic Games    & 45.2\% \\
      \bottomrule
    \end{tabular}
  \end{subtable}

  \begin{subtable}[t]{\textwidth}
    \centering
    \caption{Query 2: \ru{Я очень люблю кофе} (meaning: ``I really love coffee'')}
    \label{tab:search-result-russian:query2}
    \begin{tabular}{rp{6cm}p{6cm}r}
      \toprule
      \# & Pattern & Translation & Similarity \\
      \midrule
      1  & \ru{я очень люблю кофе} (3)       & I really love coffee   & 100.0\% \\
      2  & \ru{я очень люблю чай} (1)        & I really love tea      & 77.8\% \\
      3  & \ru{я очень люблю какао} (1)      & I really love cocoa    & 74.2\% \\
      4  & \ru{я очень люблю шоколад} (1)    & I really love chocolate & 65.2\% \\
      5  & \ru{я очень люблю лимонад} (1)    & I really love lemonade & 56.3\% \\[-.3em]
      \rotatebox[origin=c]{90}{$\cdot\mkern-4mu\cdot\mkern-4mu\cdot$} & & & \\
      20 & \ru{но очень люблю пиво} (1)      & but I really love beer & 45.8\% \\
      \bottomrule
    \end{tabular}
  \end{subtable}

\end{table*}

\cref{tab:search-result-italian} shows example search results for an Italian corpus (C4 with 1.07B tokens).
For Query~1 (\textit{di roma e di napoli}, meaning ``of Rome and of Naples''), all search results follow the same template:
\textit{(di) \{city\} (,) e (di) \{city\}}, where \textit{di} means ``of'' and \textit{e} means ``and'',
with small variations such as optional punctuation insertion (e.g., \textit{di roma , e di napoli} (\#2), meaning ``of Rome, and of Naples'')
or dropping the first \textit{di} (e.g., \textit{roma e di napoli} (\#3), meaning ``Rome and of Naples'').

For Query~2 (\textit{generalmente di proprietà di}, meaning ``generally owned by''), most search results follow the same template:
\textit{\{adverb of frequency\} di proprietà (di)},
where \textit{generalmente} means ``generally,'' \textit{spesso} means ``often,'' \textit{normalmente} means ``normally,'' \textit{sempre} means ``always'' and the final optional \textit{di} means ``by'',
with small variations such as replacing the adverb (e.g., \textit{spesso di proprietà di} (\#3), meaning ``often owned by'') or omitting the final \textit{di} (e.g., \textit{generalmente di proprietà} (\#2), meaning ``generally owned'').
However, for \#5, the meaning is a little bit different.
The word ``\emph{proprietà}'' has two meanings (``ownership'' and ``characteristic'') and is used in this context in the latter sense.

Token insertion and removal also occur in Italian, such as \textit{di roma , e di napoli} (\#2 of \cref{tab:search-result-italian:query1}) and \textit{generalmente di proprietà} (\#2 of \cref{tab:search-result-italian:query2}).

\subsection{Russian}

\cref{tab:search-result-russian} shows example search results for a Russian corpus (C4 with 1.01B tokens).
For Query~1 (\ru{золотую медаль на зимних олимпийских играх}, meaning ``a gold medal at the Winter Olympic Games''), all search results follow the same template:
\textit{\{medal type\} \ru{медаль на зимних олимпийских играх}}
(where \ru{золотую}/\ru{бронзовую} mean ``gold''/``bronze,'' and \ru{медаль} means ``medal''),
with small variations such as changing the medal type (e.g., \ru{бронзовую медаль на зимних олимпийских играх}, \#2, meaning ``a bronze medal at the Winter Olympic Games'')
or changing number (e.g., \ru{золотые медали на зимних олимпийских играх}, \#3, meaning ``gold medals at the Winter Olympic Games'').

For Query~2 (\ru{Я очень люблю кофе}, meaning ``I really love coffee''), most search results follow the same template:
\ru{я очень люблю \{drink/food\}}
(where \ru{я} means ``I,'' \ru{очень} means ``really/very,'' \ru{люблю} means ``love,'' and \ru{кофе}/\ru{чай}/\ru{какао} mean ``coffee''/``tea''/``cocoa''),
with small variations such as substituting the object (e.g., \ru{я очень люблю чай}, \#2, meaning ``I really love tea'')
or replacing it with another beverage/food item (e.g., \ru{я очень люблю лимонад}, \#5, meaning ``I really love lemonade'').
In contrast, \#20 adds a discourse marker \ru{но} (``but'') and drops the explicit subject, yielding \ru{но очень люблю пиво} (meaning ``but (I) really love beer'').

\section{Comparison with Exact Match}
\label{sec:appendix-comparison-with-exact-match}

Exact-match algorithms such as infini-gram and infini-gram mini often fail to find any matches, especially for long query sequences. In contrast, our soft searcher can still return some matches in these situations. \cref{fig:graph4-1} shows the zero-hit ratio (i.e., the percentage of test data where no match was found) for exact and soft search with a similarity threshold of 0.45 on the FineWeb-Edu dataset. At 1.4T tokens, the zero-hit ratio for soft search was only 2.3\% (9/400), compared with 5.8\% (23/400) for exact search.\footnote{5.8\% refers to exact searches using our method. Since infini-gram is case-sensitive and distinguishes word-initial tokens from word-internal tokens, infini-gram's zero-hit ratio was considerably higher (e.g., 12.3\% for 1.4T tokens).}

\begin{figure}[t]
\centering
    \centerline{\includegraphics[width=0.84\linewidth]{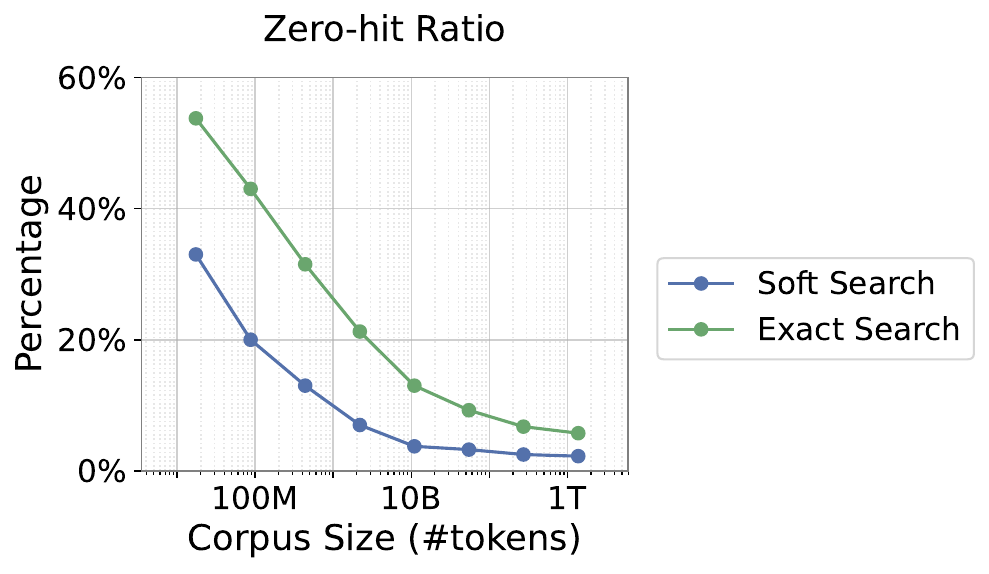}}
    \caption{The zero-hit ratio (i.e., the percentage of test data where no match was found) for the FineWeb-Edu dataset on soft search and exact search, out of 400 English queries.}
    \label{fig:graph4-1}
\end{figure}

\section{Additional Details of Experiments}
\label{sec:appendix-experiments}

\myparagraph{Executions.} To reproduce the same environment as the demo interface, for each experiment, we first clear the OS page cache and then sequentially run 400 search queries (or 100 queries for languages other than English). After finishing each query, we \textbf{did not} clear the OS page cache. This means that the operating system may retain cached data from previous queries.

\myparagraph{Search Query.} We generated 400 English search queries using Gemini 3.0 Pro. We set the distribution of query lengths (in tokens) as shown in \cref{tab:bench-length}. For the English dataset, 70\% of queries have 5 tokens\footnote{Note that 5 tokens $\approx$ 25 characters.} or less, while 20\% of queries have 7 tokens or more. We chose this distribution because, while many practical example searches are likely to be short, searches for standard documents, such as stock-price articles, require handling longer queries. For Chinese, we used slightly shorter queries because the information density per token is higher than in English or Japanese. In terms of genres, we generated queries covering a wide range of topics, as shown in \cref{tab:bench-ratio}. The full list of queries is available at the following link:
\begin{quote}
    \url{https://github.com/softmatcha/softmatcha2} \end{quote}
The prompt used to generate queries with Gemini 3.0 Pro (translated from Japanese) is as follows:
\begin{quote}
    Please generate 400 English patterns that can be used for testing SoftMatcha [URL\footnote{We used the previous SoftMatcha URL for generating queries (https://huggingface.co/spaces/softmatcha/wikitext-en).}]. Please follow the following distribution of query length: length 2: 5\%, length 3: 20\%, length 4: 25\%, length 5: 20\%, length 6: 10\%, length 7: 7\%, length 8: 5\%, length 9: 5\%, length 10: 3\%. The topic should be as diverse as possible.
\end{quote}

In addition, if the distribution is not as desired (e.g., too few basic phrases), we adjusted the distribution using follow-up prompts such as ``please include more basic phrases''.

\begin{table}[t]
    \centering
    \caption{Percentage of queries in each topic category, as classified by Gemini 3.0 Pro. A query may belong to multiple categories.}
    \small
    \label{tab:bench-ratio}
    \begin{tabular}{p{4.8cm}rr} \hline
        Topic & Count & \% \\ \hline
        Basic Phrases & 79 & 19.8 \\
        Proper Nouns & 231 & 57.8 \\
        Science \& Technology & 115 & 28.8 \\
        Culture (Geography, History, etc.) & 213 & 53.3 \\
        Politics & 158 & 39.5 \\ \hline
    \end{tabular}
\end{table}

\begin{table}[t]
    \centering
    \caption{Distribution of token counts (in GloVe tokenizer) in queries used in the experiments.}
    \small
    \label{tab:bench-length}
    \begin{tabular}{p{2.4cm}rrr} \hline
        Query length & English & Japanese & Chinese \\ \hline
        1 tokens &  0\% & 0\% & 3\% \\
        2 tokens &  5\% & 5\% & 22\% \\
        3 tokens &  20\% & 20\% & 18\% \\
        4 tokens &  25\% & 25\% & 17\% \\
        5 tokens &  20\% & 20\% & 19\% \\
        6 tokens &  10\% & 10\% & 8\% \\
        7 tokens &  6\% & 7\% & 6\% \\
        8 tokens &  6\% & 5\% & 6\% \\
        9 tokens &  5\% & 5\% & 0\% \\
        10 tokens &  3\% & 3\% & 1\% \\ \hline
    \end{tabular}
\end{table}

\myparagraph{Raw Text.} To speed up index construction (including tokenization), we removed line breaks from the raw text so that each article occupies a single line, for all experiments, including SoftMatcha and infini-gram.

\myparagraph{Notes on SoftMatcha.} SoftMatcha was unable to handle corpora exceeding 50B tokens due to memory limits. For example, a 1T-token corpus would theoretically require 20.0 TB of memory, making it unrealistic to handle such corpora. In addition, SoftMatcha also offers an execution mode that reads the index directly from SSD rather than expanding it in memory. However, if SoftMatcha is executed in this mode, the p95 latency was 169.89 seconds even with English 55B tokens, which was 688x slower than our proposed method. Note that for the SSD modes of SoftMatcha, we were unable to conduct experiments with larger corpora, because of a timeout. Specifically, the duration of index construction was projected to exceed 72 hours for a 273B-token corpus; at 55B tokens, the duration was 20.7 hours.

\section{Additional Quantitative Evaluations}
\label{sec:appendix-latency}

\subsection{Additional Latency Evaluations}\label{section:disklatency-evaluation}

\cref{fig:graph4-5-exact,fig:graph4-5-soft} show the median latencies for exact match and soft search. Note that p95 latencies are already discussed in \cref{sec:experiment}. In addition, we also measured latency for different numbers of outputs $\nWanted \in \{5, 10, 20, 40, 80\}$ and different similarity thresholds on the FineWeb-Edu dataset. \cref{fig:graph4-2} shows the results. Even with $\nWanted = 80$ and a minimum similarity of 0.2, we achieved a median latency of 127.20 ms and a p95 latency of 5697.39 ms.

\subsection{Disk Usage}\label{section:disk-usage-discussion}

As described in \cref{sec:fast-suffix-array}, our implementation of the fast disk-aware suffix array applies the run-length compression to an index (array $X$ in \cref{sec:appendix-fast-suffix-array}) to reduce disk space. Without run-length compression, for FineWeb-Edu (1.4T tokens), it would require 40.0 TB of disk space to store the index and 56.0 TB in total. However, by applying run-length compression, the index size was reduced by 85.9\% (7.1x; 40.0 TB $\to$ 5.6 TB), reducing total disk usage to 21.6 TB. \cref{fig:graph4-4} shows the compression ratio of various languages and sizes.

Moreover, among the index of total size 21.6 TB, only 5.6 TB is accessed frequently to count soft-matched sequences, and the remaining 16 TB is accessed only to present example instances, which requires only  about 10–20 random accesses per query. One possible system configuration is to store the hot part of the index (5.6 TB) on a high-speed NVMe SSD and store the remaining part (16 TB) on a RAID0 array of HDDs, which is realistic for a server-class machine. Assuming the random-access latency of the RAID0 array is about 10 ms, the additional latency would be only about 0.1--0.2 seconds per query.

\begin{figure*}[p!]
    \centering

\begin{minipage}[t]{0.35\textwidth}
        \vspace{0pt}
        \centering
        \includegraphics[width=0.98\linewidth]{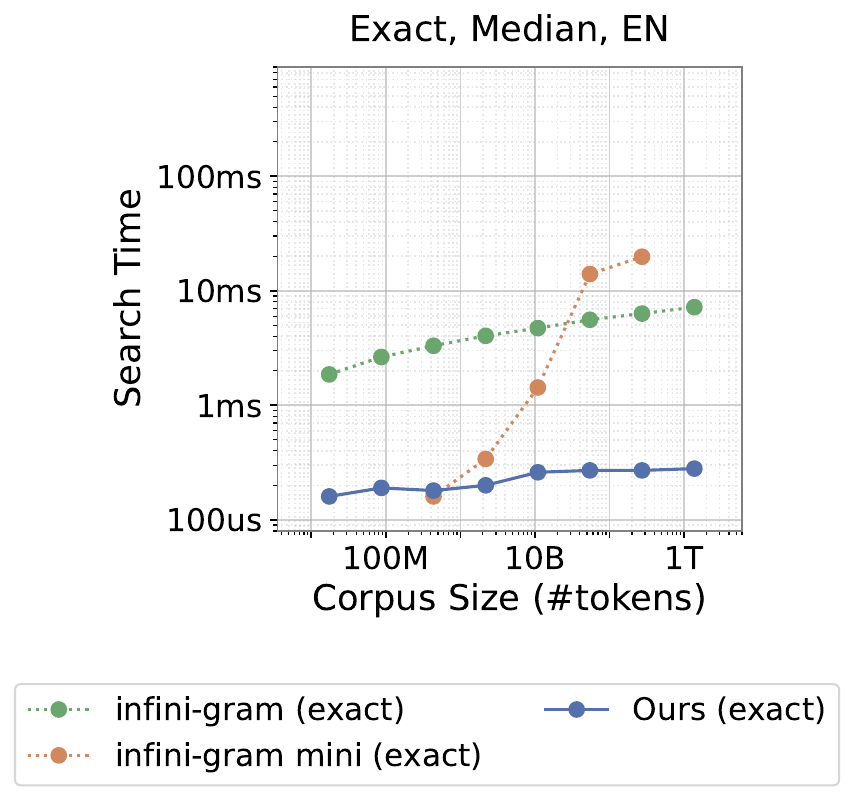}
        \caption{The median latency of exact match for EN (FineWeb-Edu, 1.4T tokens) dataset. infini-gram mini timed out during index construction for larger corpora, and an error occurred for smaller corpora.}
        \label{fig:graph4-5-exact}
    \end{minipage}
    \hfill
    \begin{minipage}[t]{0.6\textwidth}
        \vspace{0pt}
        \centering
        \includegraphics[width=\linewidth]{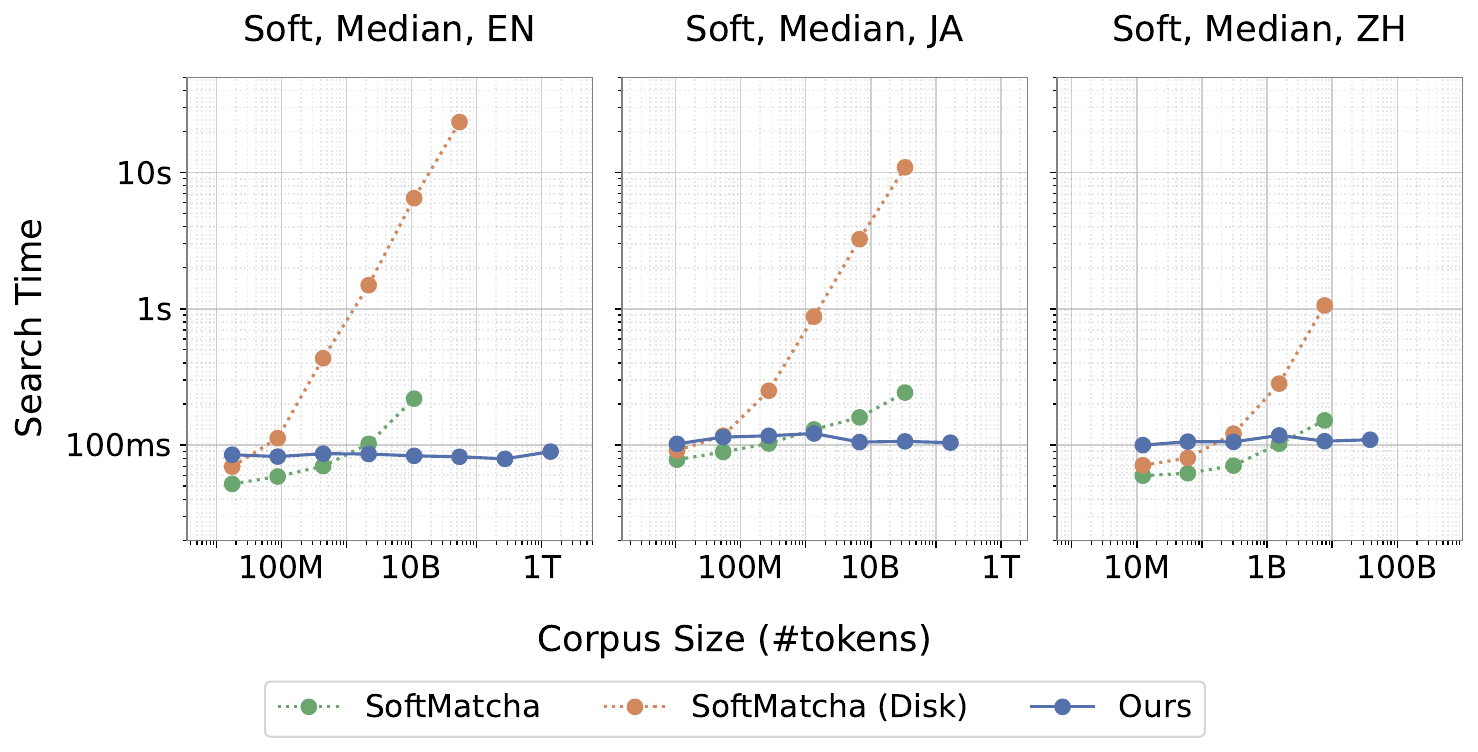}
        \caption{The median latency of soft search for EN (FineWeb-Edu, 1.4T tokens), JA (C4 Japanese, 169B tokens), and ZH (C4 Chinese, 38.3B tokens) datasets. SoftMatcha hit a memory limit, timed out during index construction, or encountered errors for larger corpora.}
        \label{fig:graph4-5-soft}
    \end{minipage}

    \vspace{1.0em} 

\begin{minipage}[t]{0.48\textwidth}
        \vspace{0pt}
        \centering
        \includegraphics[width=0.91\linewidth]{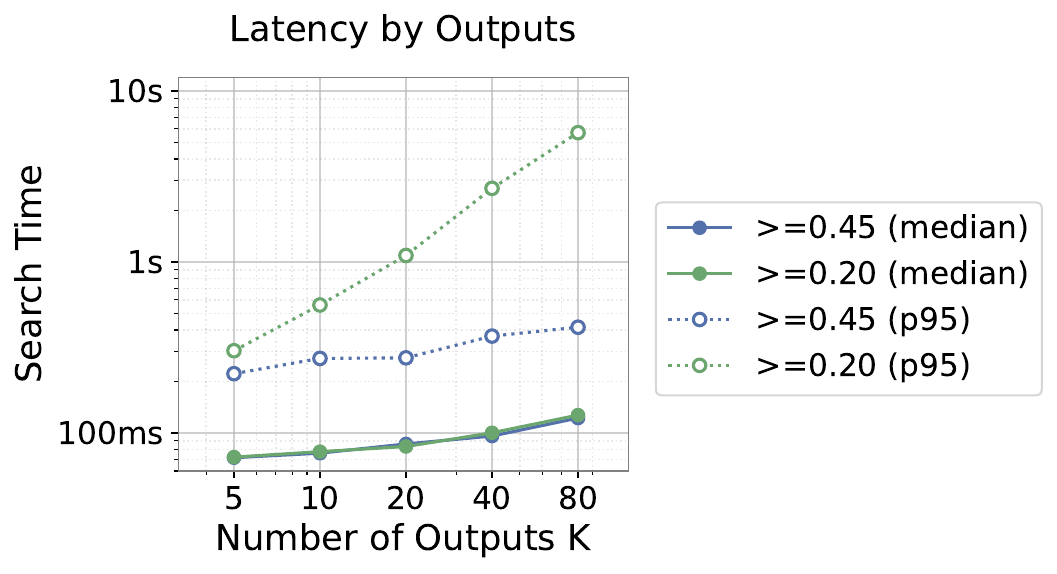}
        \caption{Latency when outputting top $\nWanted$ patterns with different similarity thresholds ($\alpha = 0.45$ or $0.20$), for FineWeb-Edu dataset.}
        \label{fig:graph4-2}
    \end{minipage}
    \hfill
    \begin{minipage}[t]{0.48\textwidth}
        \vspace{0pt}
        \centering
        \includegraphics[width=\linewidth]{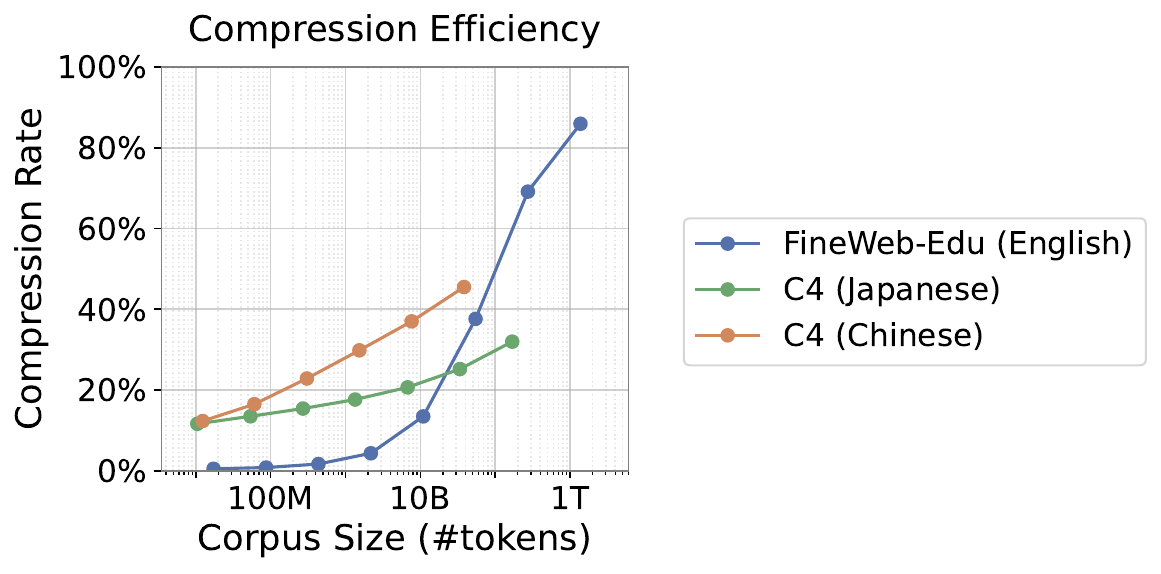}
        \caption{The compression rate of array $X$ in our fast disk-aware suffix array, for various languages and corpus sizes.}
        \label{fig:graph4-4}
    \end{minipage}
\end{figure*}

\subsection{Cache Protocol}

As described in \cref{sec:appendix-experiments}, our main experiments did not clear the OS page cache after each query (i.e., hot cache) to match the conditions of the demo tool. However, to verify whether our method also works with a cold cache, we also measured latencies when we clear the OS page cache, using the FineWeb-Edu subset of at most 273B tokens. The results are shown in \cref{tab:cache}. Except for infini-gram mini, which became drastically slower, the overall ranking did not change. In addition, our method for the soft and exact search was still \textgreater100x and \textgreater9x faster than the previous methods, respectively.

\begin{table}[t]
    \centering
    \caption{The p95 latencies (ms) for cold cache and hot cache, across 400 Gemini queries.}\label{tab:cache}
    \small
    \begin{tabular}{lrrr}
        \toprule
        Method & Size & Cold Cache & Hot Cache \\
        \midrule
        \multirow{2}{*}{Ours}
        & 8.6B  & 252.85 & 187.30 \\
        & 273B  & 280.22 & 243.39 \\
        SoftMatcha (Disk)
        & 8.6B  & 35,551.02 & 28,672.56 \\
        \midrule
        Ours (Exact)
        & \multirow{3}{*}{8.6B}
        & 0.87 & 0.35 \\
        infini-gram (Exact)
        & & 8.65 & 8.31 \\
        infini-gram-mini (Exact)
        & & 238.98 & 32.89 \\
        \bottomrule
    \end{tabular}
\end{table}

\subsection{Real Data Queries}\label{appendix:real-data}

Although we used Gemini-generated queries to test the runtime, we also conducted the experiment using the two types of real-world data: (1) ORCAS, which is the list of actual search queries related to Bing, and (2) random substrings extracted from FineWeb-Edu. For (1) ORCAS, we selected 1,000 random queries proportional to the number of documents (i.e., popular search words are picked more frequently than other words). In addition, we excluded queries whose length is 11+ tokens or containing some unknown words. As a result, we used 827 queries. For (2) FineWeb-Edu, we used 1,000 queries with the same length distribution as \cref{tab:bench-length}.

The results are shown in \cref{tab:real-data}; With a corpus of 8.6B subset of FineWeb-Edu, the p95 latency for both queries was less than 500 ms, which is within the same order of magnitude as Gemini queries.

\section{Additional Qualitative Experiments}\label{sec:appendix-qualitative}

\subsection{Quality Judging}\label{section:llm-as-a-judge}

First, we compared the quality of the output patterns among (a) SoftMatcha, (b) our method with plain minimum similarity, and (c) our method with smooth minimum similarity ($\csoftmin = 10^4$), using LLM-as-a-judge. Since SoftMatcha uses the plain minimum and does not support insertion, the only difference between (a) and (b) is insertion. Specifically, we set $\thresh = 0.45$, $\nWanted = 20$, and used GPT-5.4 with the following prompt to compare two methods A and B:

\begin{quote}
    We are developing a search tool that retrieves and displays words that are exactly or partially similar to a given query string, along with their hit counts in a corpus. When we searched for the query "machine learning" using two different algorithms, we obtained the following results.

    \begin{quote}
        [Result A] \\
        (Top-$\nWanted$ output for method A)

        [Result B] \\
        (Top-$\nWanted$ output for method B)
    \end{quote}

    Please evaluate the quality of [Result A] and [Result B] based on the semantic relevance and usefulness of the retrieved words to the query. Do not judge based on the number of matches. Rate the results by choosing one of the following options:

    \begin{quote}
        - `A' (Result A is better) \\
        - `=' (Both are of equal quality) \\
        - `B' (Result B is better)
    \end{quote}

    First, provide a brief justification for your choice, and then output your final answer as strictly `A', `=', or `B' on the last line.
\end{quote}

\begin{table}[t]
    \centering
    \caption{The median and p95 latencies (ms) for real-world data, compared with Gemini-generated queries.}
    \small
    \label{tab:real-data}
    \begin{tabular}{p{5.3cm}rr} \toprule
        Dataset & median & p95 \\ \midrule
        ORCAS \cite{craswell2020orcas} & 54.60 & 433.50 \\
        Random FineWeb-Edu Substring & 69.97 & 407.65 \\
        Gemini Queries & 66.35 & 187.30 \\ \bottomrule
    \end{tabular}
\end{table}

To ensure fairness, we only used queries in which both methods output full $\nWanted$ patterns, because it is not ideal to be strongly biased with the number of outputs, which is a less important factor\footnote{In fact, we can easily adjust the number of output patterns by slightly changing $\thresh$.}. As a result, of the 400 Gemini-generated queries, we used 274 for the comparison between SoftMatcha and Ours (a vs c) and 279 for the comparison between plain minimum and smooth minimum (b vs c). In addition, since LLMs are biased by order (i.e., the latter method B is preferred more regardless of the quality), we conducted experiments for both orderings.

The results are shown in \cref{tab:llm-as-a-judge}. For both comparisons, (c) our method with smooth minimum outperforms the other method. However, in terms of the winning rate, the insertion has a negative effect; indeed, the winning rate for (b) was lower than that for (a). We believe that one of the reasons is that some \emph{typo patterns}\footnote{The patterns include some grammar mistakes. For example, when the query is ``these are pens'', the pattern ``these are a pens'' should have high similarity, but has a typo (grammar mistake).} appear in the top-$\nWanted$ outputs if we allow insertions, and this does not mean the insertion lowers the quality of the output.

\begin{table}[t]
    \centering
    \caption{The quality comparison between methods using LLM-as-a-judge for both input orders, where (a) is SoftMatcha, (b) is our method with plain minimum similarity, and (c) is our method with smooth minimum similarity. The numbers are written in the following format: [the number of wins for other than (c)]--[the number of wins for (c)]. The ties (`=') are not counted in both numbers.}
    \small
    \label{tab:llm-as-a-judge}
    \begin{tabular}{p{1.8cm}rrr} \toprule
        & (c) is first & (c) is second & Win Rate of (c) \\ \midrule
       (a) and (c) & 127--147 & 85--189 & 61.3\% \\
       (b) and (c) & 40--222 & 72--191 & 78.7\% \\ \bottomrule
    \end{tabular}
\end{table}

\subsection{Information Retrieval}\label{sec:retrieval}

Second, as a practical case study, we evaluated the effectiveness of our method for information retrieval tasks. For information retrieval tasks, BM25 \cite{Robertson2009-fp} is a well-known and strong baseline that computes the \emph{similarity score} between the search query and each document using term frequency (TF) and the inverted document frequency (IDF).

SoftMatcha \cite{Deguchi-2025-softmatchaSoftPatternMatcherBillion-scaleCorpusSearches-on} extended the idea of BM25 by adding a partial score for similar terms. For example, if the query is ``flight route'' and the term ``airline route'' is included in a document, we add some partial score to this document\footnote{Since the similarity between ``flight route'' and ``airline route'' is 61.8\%, we add a partial score which is $f(0.618)$x compared to the complete-match case for some increasing function $f$.}. With this expansion, the precision and recall for information retrieval tasks increased. Now, our interest is whether applying our method instead of SoftMatcha is effective or not. This means that we also add partial scores for patterns with insertions/deletions, and for the similarity function between two patterns, we use the smooth minimum instead of the plain minimum.

\myparagraph{Setup.} As datasets, we used TREC-COVID \cite{roberts2021searching} (171k documents and 50 queries) and SciFact \cite{wadden2020fact} (5.2k documents and 300 queries). Since the queries in the original datasets are long sentences and are not suitable for queries in BM25, SoftMatcha, and Ours, we manually prepared \emph{pattern queries} for this experiment. For example, if the query is ``The temperature in Tokyo is 43 degrees'', we transform it into pattern queries such as [``temperature'', ``Tokyo'', ``43 degrees''].

In each algorithm, we used $\thresh = 0.45$ and $\nWanted = 20$ as thresholds for SoftMatcha and Ours. That is, we do not add partial scores for pattern occurrences with similarity less than 0.45 or below rank \#21. To evaluate the retrieval performance, we used precision and recall following previous work \cite{bendersky2020rrf102, Deguchi-2025-softmatchaSoftPatternMatcherBillion-scaleCorpusSearches-on}.

\myparagraph{Results.} The results are shown in \cref{tab:retrieval}. For both datasets, our method achieved the best performance, which is even better than SoftMatcha. This means that our method is also effective in information retrieval tasks.

\begin{table}[t]
    \centering
    \small
\caption{The retrieval performance for TREC-COVID and SciFact datasets. P and R indicate precision and recall.}
    \label{tab:retrieval}
    \begin{tabular}{lrrrr}
        \toprule
        \multirow{2}{*}{Method} & \multicolumn{2}{c}{TREC-COVID} & \multicolumn{2}{c}{SciFact} \\
        & P@20 & R@1000 & P@1 & R@5 \\
        \midrule
        BM25 & 33.8 & 14.9 & \textbf{42.7} & 58.1 \\
        SoftMatcha & 35.4 & 17.7 & 42.3 & 60.5 \\
        Ours & \textbf{36.0} & \textbf{18.0} & \textbf{42.7} & \textbf{60.7} \\ \bottomrule
    \end{tabular}
\end{table}

\subsection{Paraphrases}\label{section:paraphrase}

Third, for another practical case study, we evaluated the effectiveness of our method in paraphrase detection tasks. More precisely, we consider the task that inputs a \emph{paraphrased} random sentence in a set of documents $S$, and then investigates the document from which the paraphrased text originates. This is useful in plagiarism checks, especially when the plagiarist slightly alters the original text to avoid detection.

\myparagraph{Setup.} For the set of documents $S$, we used 10k random documents from Fineweb-Edu. For the input sentence, we first randomly extracted a sentence between 20 and 50 words from $S$, and paraphrased it by GPT-5.4 with the following prompt:

\begin{quote}
    You are an expert copyeditor. Please rewrite the following text to avoid direct plagiarism. Change the vocabulary and phrasing to be unique, but keep the exact original meaning, details, and logical flow intact. Output ONLY the rewritten text without any introductory words.
\end{quote}

\myparagraph{Evaluation Metrics.} For the evaluation metrics, we selected Precision@1. It means that just like \cref{sec:retrieval}, we first measure \emph{similarity score} for each document, and the score is 100\% if and only if the document with the highest similarity score exactly matches the original document, and 0\% otherwise. We tested 400 queries and calculated the average score.

\myparagraph{Methods and Baselines.} We compared the performance of BM25, infini-gram, SoftMatcha, and Ours. For Ours, we also tested the version that considers IDF when calculating the similarity score (we denote it as Ours+IDF). This means that we compare 5 methods in total.

\newcommand{\len}{\ell}
\myparagraph{Similarity Score.} To calculate the similarity score, we need to transform the paraphrased sentence into pattern queries. For BM25, we take each word as a query, which is standard. For other methods, we take each substring of 4 words as a query\footnote{That is, when the sentence length is $\len$, the list of pattern queries has the size $\len - 3$.}. For hyperparameters, we selected $\thresh = 0.35$ and $\nWanted = 20$ for SoftMatcha and Ours\footnote{Although $\thresh$ is small, most search queries finish within 200 ms due to the low $\nWanted$ (see \cref{fig:graph4-2}).}. For infini-gram, we selected $\thresh = 1.00$ because it only supports exact match.

\myparagraph{Results.} The results are shown in \cref{tab:paraphrase}. Our method outperforms all other methods, including BM25, even without considering IDF. In particular, there is a 9.8-point difference between SoftMatcha and Ours, indicating that insertion/deletion and smooth minimum are effective for paraphrase detection tasks.

\begin{table}[t]
    \centering
    \caption{The performance of each method on the paraphrase detection task, expressed as percentages.}
    \small
    \label{tab:paraphrase}
    \begin{tabular}{p{2.0cm}r} \toprule
        Method & Precision@1 \\ \midrule
        infini-gram & 65.5 \\
        SoftMatcha & 74.5 \\
        Ours & \textbf{84.3} \\ \midrule
        BM25 & 82.8 \\
        Ours+IDF & \textbf{85.8} \\ \bottomrule
    \end{tabular}
\end{table}

\begin{table*}[t]
    \centering
    \footnotesize
    \tabcolsep 5pt
    \caption{The top-10 search results when the search query is ``paris to beijing'', for various $\csoftmin$ values. Note that our default setting in experiments is $\csoftmin = 10^4$.}
    \label{tab:search-result-beta}
    \begin{tabular}{rlrlrlrlr}
        \toprule
        & \multicolumn{2}{c}{$\csoftmin = 1.001$} & \multicolumn{2}{c}{$\csoftmin = 10$} & \multicolumn{2}{c}{$\csoftmin = 10^4$} & \multicolumn{2}{c}{$\beta = 10^{20}$}\\

        \# &
        \textbf{Pattern} & \textbf{Sim.} &
        \textbf{Pattern} & \textbf{Sim.} &
        \textbf{Pattern} & \textbf{Sim.} &
        \textbf{Pattern} & \textbf{Sim.} \\
        \midrule

        1 &
        paris to beijing & 100.0\% &
        paris to beijing & 100.0\% &
        paris to beijing & 100.0\% &
        paris to beijing & 100.0\% \\

        2 &
        paris to chinese & 69.1\% &
        paris to chinese & 69.1\% &
        paris to chinese & 69.1\% &
        paris to chinese & 69.1\% \\

        3 &
        paris to shanghai & 64.7\% &
        paris to shanghai & 64.7\% &
        paris to shanghai & 64.7\% &
        france to china & 65.8\% \\

        4 &
        brussels to beijing & 59.1\% &
        brussels to beijing & 59.1\% &
        france to china & 62.9\% &
        paris to shanghai & 64.7\% \\

        5 &
        paris to seoul & 59.1\% &
        paris to seoul & 59.1\% &
        brussels to beijing & 59.1\% &
        french to chinese & 59.8\% \\

        6 &
        london to beijing & 57.8\% &
        london to beijing & 57.8\% &
        paris to seoul & 59.1\% &
        french wanted china & 59.2\% \\

        7 &
        paris . beijing & 57.3\% &
        paris . beijing & 57.3\% &
        london to beijing & 57.8\% &
        brussels to beijing & 59.1\% \\

        8 &
        paris to taiwan & 56.7\% &
        paris to taiwan & 56.7\% &
        paris . beijing & 57.3\% &
        paris to seoul & 59.1\% \\

        9 &
        paris to hong & 53.6\% &
        france to china & 54.0\% &
        paris to taiwan & 56.7\% &
        london to beijing & 57.8\% \\

        10 &
        paris and beijing & 53.1\% &
        paris to hong & 53.6\% &
        london to china & 56.3\% &
        london to china & 57.8\% \\
        \bottomrule
    \end{tabular}
\end{table*}

\subsection{Sensitivity of Search Quality to $\csoftmin$ Value}\label{section:ablation_beta}

Fourth, we tested how the output quality changes as $\csoftmin$ varies. Note that our similarity function in \cref{sec:methods} generalizes the plain minimum (used in SoftMatcha) and the simple summation. Indeed, $\csoftmin \to \infty$ corresponds to the plain minimum, and $\csoftmin \approx 1$ corresponds to the simple summation.

First, \cref{tab:search-result-beta} shows the search results for the query ``paris to beijing'' on an 8.6B subset of FineWeb-Edu, for 4 values of $\csoftmin$: $\{1.001, 10, 10^4, 10^{20}\}$.

\begin{itemize}
\item When $\csoftmin$ is small, multiple small deviations are cumulatively penalized. Therefore, the ranks of similar patterns in which two or more words are different tend to be lower. For example, the rank of ``france to china'', which we believe should be higher, was only \#9 when $\csoftmin = 10$, and \#28 when $\csoftmin = 1.001$. In addition, patterns in which only one word differs tend to rank higher. For example, ``paris . beijing'' ranks \#7 when $\csoftmin \leq 10$, even though the second words ``to'' and ``.'' are not very similar.

\item When $\csoftmin$ is close to infinity, only the worst-matching word determines similarity, and all other positions are ignored. Therefore, the similarity between ``london to beijing'' (\#9, 57.8\%) and ``london to china'' (\#10, 57.8\%) is identical, although the third word differs only in the latter pattern. In addition, unrelated patterns like ``french wanted china'' (\#6) are likely to get a higher rank.

\item However, when $\csoftmin = 10^4$, both problems are resolved and lead to the best quality.
\end{itemize}

We also tested many other examples and observed a similar trend, indicating that $\csoftmin = 10^4$ is the best choice.

\begin{itemize}
    \item When $\csoftmin = 1.001$ (summation), completely unrelated patterns with some exact-matched words received higher ranks. For ``historical data'', unrelated patterns such as ``examples data'' (\#37) and ``historical suggest'' (\#47) received higher ranks (the ranks were only \#79 and \#96 for the $\csoftmin = 10^4$ case). In addition, for ``shortest flight route'', the pattern ``longest airline routes'', which we believe should be similar, ranks below \#100 when $\csoftmin = 1.001$, although it ranks \#5 when $\csoftmin = 10^4$.

    \item When $\csoftmin = 10^{20}$ (plain min.), if the worst-matching word is the same, the scores are nearly identical. For example, for ``olympics gold medalist'', ``olympics gold medal'' (65.4\%) and ``olympic silver medal'' (65.4\%) receive nearly identical scores despite very different semantic changes (65.4\% and 63.2\% for the $\csoftmin = 10^4$ case). In addition, for ``homemade bombs'' and when $\csoftmin = 10^{20}$, the top-20 results are dominated by patterns like ``bombs exploded'' (56.1\%, \#16) and ``bombs detonated'' (56.0\%, \#19), where the first word ``homemade'' is entirely lost; the minimum is determined solely by the second word.

    \item However, when $\csoftmin = 10^4$, similar results are included without excess or deficiency. For ``homemade bombs'', results correctly include ``home-made bombs'' (57.2\%, \#8) and ``handmade bomb'' (52.9\%, \#18), all semantically relevant while excluding unrelated patterns that appear at extreme $\csoftmin$ values. For ``stock price change'', the top-20 includes ``market price change'' (67.4\%, \#8), ``share price change'' (61.0\%, \#15), and inflectional variants, providing a coherent set.
\end{itemize}

\section{Subword Tokenizers}
\label{sec:appendix-subword}

In recent large language models (LLMs), subword tokenizers such as GPT-2 and LLaMA are often used; these differ significantly from word-level tokenizers such as GloVe. Accordingly, we also conducted experiments using \texttt{meta-llama/Llama-2-7b-hf} model \cite{touvron2023llama}, a subword tokenizer, on FineWeb-Edu/sample-10BT dataset (8.6B tokens with the GloVe tokenizer, 11.2B tokens with the LLaMA-2 tokenizer).

\myparagraph{Qualitative Evaluation.} \cref{tab:subword} shows the results when we searched for ``gold medal'' and ``chocolate''. For the first query, the results are as expected, although the similarity is lower than that of the GloVe tokenizer, and many case-substitutions are displayed because the LLaMA-2 tokenizer is case-sensitive. For the second query, however, the output includes many spurious matches. This is because, in LLaMA-2, the word ``chocolate'' is tokenized as [``ch'', ``oc'', ``olate''], and the top-$\nWanted$ patterns are determined by the cosine similarities of individual tokens. However, more than 83\% of word occurrences in this corpus were not split into multiple tokens, and cases such as ``chocolate'' are rare. Therefore, we expect our method to work reasonably well (especially after aligning uppercase and lowercase letters).

\myparagraph{Latency.} When we set the similarity threshold to 0.1, we achieved a median latency of 57.86 ms and a p95 latency of 145.90 ms for outputting the top $\nWanted=20$ patterns, which is comparable to that of the word-level tokenizer. Of the 400 cases, 8 failed because the number of tokens exceeded the upper limit of 12 (see \cref{sec:appendix-fast-suffix-array}). Of the remaining 392 cases, the program returned the maximum of 20 patterns in 199 cases (51\%) and at least 1 pattern in 341 cases (87\%).

\begin{table}[t]
    \centering
    \small
    \tabcolsep 6pt
    \caption{Example search results on FineWeb-Edu/sample-10BT using LLaMA-2 tokenizer.}
    \label{tab:subword}
    \begin{tabular}{rp{1.9cm}rrp{1.5cm}r}
        \toprule
        \multicolumn{3}{c}{Query: gold medal} & \multicolumn{3}{c}{Query: chocolate}\\
        \# & Pattern & Sim. & \# & Pattern & Sim. \\
        \midrule

        1 & \_gold \_medal & 100\% & 1 & \_ch oc olate & 100\% \\
        2 & \_Gold \_medal & 53.7\% & 2 & \_Ch oc olate & 41.3\% \\
        3 & \_gold \_Medal & 47.3\% & 3 & Ch oc olate & 26.0\% \\
        4 & \_Gold \_Medal & 42.5\% & \rotatebox[origin=c]{90}{$\cdot\mkern-4mu\cdot\mkern-4mu\cdot$} & & \\
        5 & \_silver \_medal & 27.3\% & 8 & \_Ch oc ocoa & 18.4\% \\[-.3em]
        \rotatebox[origin=c]{90}{$\cdot\mkern-4mu\cdot\mkern-4mu\cdot$} & & & \rotatebox[origin=c]{90}{$\cdot\mkern-4mu\cdot\mkern-4mu\cdot$} & & \\
        20 & \_steel \_medal & 15.2\% & 20 & \_C oc olate & 11.4\% \\
        \bottomrule
    \end{tabular}
\end{table}

\section{Details of Dynamic Corpus-Aware Pruning in \cref{sec:dynamic-pruning}}
\label{sec:appendix-details-techniques}

As explained in \cref{sec:dynamic-pruning}, we introduced two additional techniques, in addition to iterative pruning.

\subsection{$k$-gram Pruning: Caching 2-gram and 3-gram Existence}

To reduce the number of corpus-existence checks that require disk access, we store precomputed 2-gram and 3-gram existence tables in RAM. Specifically, we record (1) all 2-grams both of whose constituent words rank among the top 100K most frequent words and (2) all 3-grams whose three constituent words have frequency ranks that sum to 10K or less. When calculating $\Cand_1, \Cand_2, \dots$ (see \cref{sec:dynamic-pruning}), if a 2-gram or 3-gram meets the frequency-rank conditions above (and thus would appear in the table if it existed in the corpus), we skip the corpus lookup when it is absent from the tables.

\subsection{Last-Bits Pruning: For Patterns with Low Frequencies}

Suppose that in \cref{alg:search}, the current pattern $\word \in \Cand_{i-1}$ appears only once in the corpus. In this case, it is inefficient to search for $\word \cdot \word'$ for all $\word' \in \Vocab^*$ with similarity $\thresh$ or more. Instead, it is better to look at the corpus directly and find the next tokens. We applied this method when the number of occurrences (after run-length compression of array $X$ in \cref{sec:appendix-fast-suffix-array}) is 50 or less.

\subsection{Effects on Pruning}
\label{sec:appendix-details-effects-pruning}

Next, we show the effectiveness of each pruning in terms of latency. \cref{fig:graph4-6} shows the p95 and p99 latencies (1) when we only applied iterative pruning; (2) when we applied iterative pruning and $k$-gram pruning; and (3) when we applied all of iterative pruning, $k$-gram pruning, and last-bits pruning. The $k$-gram and last-bits prunings work well in smaller corpora; the p99 latencies improved by approximately 2x for corpora smaller than 1B tokens. However, the improvement in latency was not as strong as the improvement in the number of searches (see \cref{sec:eval-soft}), mainly due to the page cache. Note that in this experiment, we specially tested on 2,000 Gemini queries instead of the normal 400 queries. This is because, to evaluate p99 latencies, 400 samples are too few.

\begin{figure}[t]
  \begin{center}
    \centerline{\includegraphics[width=\columnwidth]{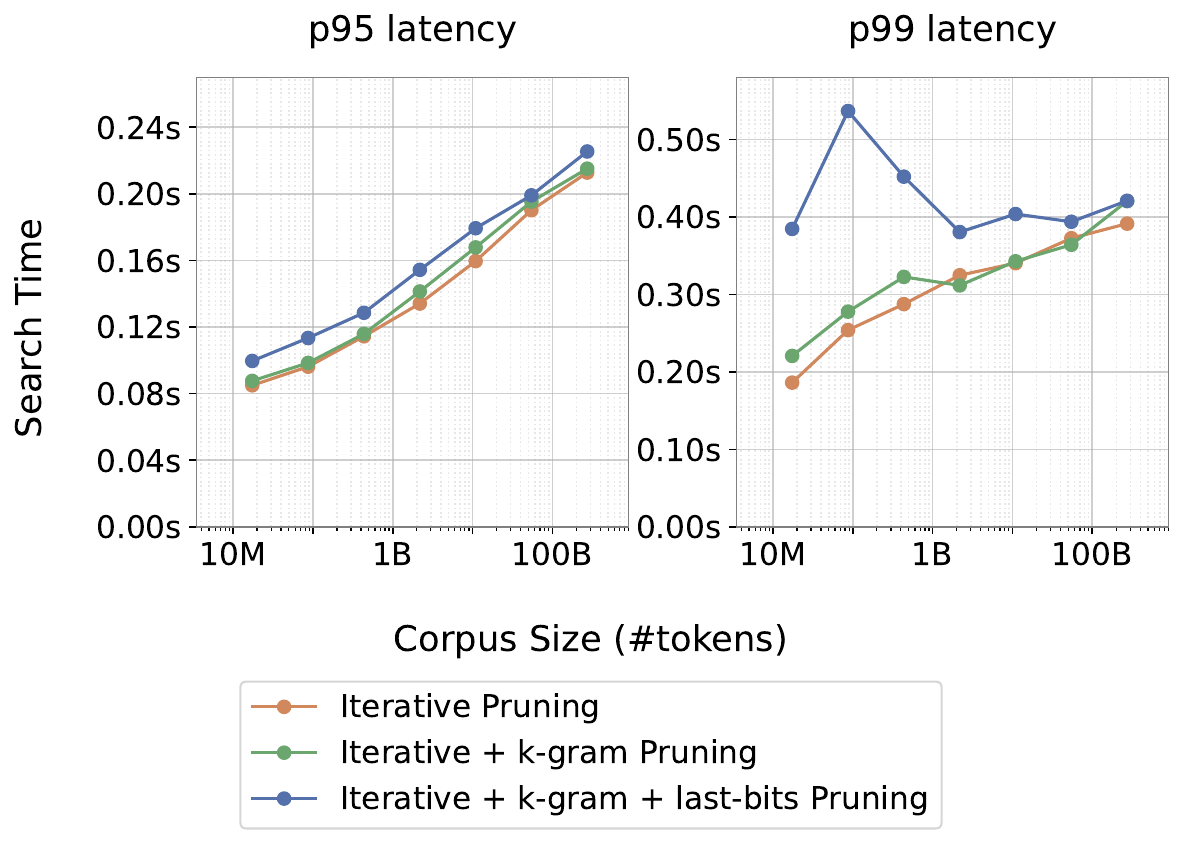}}
    \caption{The improvement on p95 and p99 latencies when we applied pruning for FineWeb-Edu dataset (n = 2,000).}
    \label{fig:graph4-6}
  \end{center}
\end{figure}

\section{Details of the Fast Disk-Aware Suffix Array in \cref{sec:fast-suffix-array}}
\label{sec:appendix-fast-suffix-array}

As explained in \cref{sec:fast-suffix-array}, we also improve the suffix array to further improve the efficiency of exact lookup. The core ideas are (1) to exploit the fact that queries are usually short (10 tokens or fewer) and (2) to store some indices in RAM to reduce the number of random disk accesses.

Let $\Lenmax$ be the maximum query length (in our experiments, we used $\Lenmax = 12$). The first step is to record a sorted array of all contiguous $\Lenmax$ patterns in the corpus (i.e., the array has length $\abs{\Corpus} -\Lenmax+1$), instead of the suffix array; we denote this array by $X$. By binary search on $X$, we can count the number of occurrences of the query string $\queryseq$ in the corpus. However, it still requires $\Order{\log\, \abs{\Corpus}}$ random disk accesses. We further reduce the number of accesses using a two-level indexing strategy, similar to Google's BigTable~\cite{chang2008bigtable}, described below.

Let $\Bigsize$ be a constant.\footnote{In our experiments, we use $\Bigsize$ between $128$ and $256$.} In addition to $X$, we construct a sparse array $Y = [X_0, X_B, X_{2B}, \dots]$, where $X_i$ is the $i$-th element of the array $X$. This array is stored in RAM. When we search for the position of an occurrence of query $\queryseq$, we first find the approximate position (i.e., such $i$ that $\queryseq$ occurs in $Z = [X_{iB}, X_{iB+1}, \dots, X_{(i+1)\Bigsize-1}]$ if it exists in $X$) using $Y$, and then find the exact position in $X$ using $Z$. Surprisingly, only a single random disk access is required, significantly fewer than the $\Order{\log\, \abs{\Corpus}}$ accesses required by a standard suffix array if $\Bigsize$ is set appropriately so that the $\Bigsize/2$ entries are smaller than the page size of operating systems. This is because the range of the array $X$ examined (in the second step of binary search) is at most $\Bigsize/2$ entries.

\myparagraph{Compression.} This method requires substantial disk space. Let $V$ be the vocabulary size; then the size of $X$ is $\abs{\Corpus} \lceil (\Lenmax \lceil \log_2 V \rceil) / 8 \rceil$ bytes (e.g., $29\abs{\Corpus}$ bytes when $\Lenmax = 12$ and $V = 400\text{K}$). To address this problem, we applied run-length compression to $X$, exploiting the many duplicate sequences that arise primarily from citations.

\section{Empirical Validation for \cref{sec:theory}}
\label{experiment:assumption}

In this section, we empirically validate assumptions of our theoretical analysis in \cref{sec:theory}.

\subsection{Validation of \cref{assumption:precand-amp}}

To validate \cref{assumption:precand-amp}, we measured $\Exp{\abs{\PreCand_i}} / \Exp{\abs{\Cand_{i-1}}}$, which is, the mean value of $\abs{\PreCand_i}$ divided by the mean value of $\abs{\Cand_{i-1}}$. We set $\alpha = 0.50$, and for corpus and queries, we used an 8.6B-subset of FineWeb-Edu and 400 Gemini-generated queries.

\begin{table}[t]
    \centering
    \small
\caption{The estimated value of $\Exp{\abs{\PreCand_i}} / \Exp{\abs{\Cand_{i-1}}}$ with a corpus of an 8.6B subset of FineWeb-Edu, when $\alpha = 0.50$.}
    \label{tab:assumption-amp-exp}
    \begin{tabular}{p{3.1cm}rrrrr}
        \toprule
        $i$ & 1 & 2 & 3 & 4 & 5 \\
        \midrule
        Without Insert/Delete & 16.9 & 8.1 & 5.7 & 5.5 & 8.8 \\
        With Insert/Delete & 16.9 & 26.2 & 10.2 & 13.6 & 34.5 \\
        \bottomrule
    \end{tabular}
\end{table}

\begin{table*}[t]
    \centering
    \small
\caption{The log-shifted geometric means of $\abs{\Cand_i}$ and $\abs{\Sim_i}$, where $\Sim_i$ is the set of patterns (including those not in the corpus) whose similarity to the $i$-token prefix $q_1 q_2 \cdots q_i$ of the query is $\alpha$ or more,
    and $\Cand_i$ is the set of patterns in $\Sim_i$ that actually occur in the corpus.
    The table also shows $\ratio_i$, the corpus occurrence proportion among all possible $i$-token patterns.}
    \label{tab:assumption-similar-exp}
    \begin{tabular}{cr|rrr|rrr|rrr|r}
        \toprule
        \multicolumn{2}{c}{\multirow{2}{*}{$i$}} &
        \multicolumn{3}{c}{$\alpha = 0.70$} &
        \multicolumn{3}{c}{$\alpha = 0.60$} &
        \multicolumn{3}{c}{$\alpha = 0.50$} &
        \multirow{2}{*}{$\ratio_i$} \\

        &
        & $\abs{\Cand_i}$ & $\abs{\Sim_i}$ & $\abs{\Cand_i} / \abs{\Sim_i}$
        & $\abs{\Cand_i}$ & $\abs{\Sim_i}$ & $\abs{\Cand_i} / \abs{\Sim_i}$
        & $\abs{\Cand_i}$ & $\abs{\Sim_i}$ & $\abs{\Cand_i} / \abs{\Sim_i}$ & \\

        \midrule

        \multirow{5}{*}{\makecell{Without\\Insert/Delete}}
        & 1
        & 1.8 & 1.8 & 0.998
        & 4.4 & 4.4 & 0.998
        & 16.8 & 16.9 & 0.998
        & 0.957 \\

        & 2
        & 2.8 & 2.9 & 0.971
        & 10.4 & 11.5 & 0.904
        & 108.3 & 136.7 & 0.792
        & 0.001 \\

        & 3
        & 3.0 & 4.2 & 0.714
        & 11.8 & 27.6 & 0.428
        & 103.1 & 727.8 & 0.142
        & 0.000 \\

        & 4
        & 2.5 & 5.7 & 0.435
        & 8.1 & 56.2 & 0.144
        & 44.6 & 2,681.3 & 0.017
        & 0.000 \\

        & 5
        & 2.3 & 8.1 & 0.288
        & 6.4 & 118.3 & 0.054
        & 29.4 & 10,665.1 & 0.003
        & 0.000 \\

        \midrule
        
        \multirow{5}{*}{\makecell{With\\Insert/Delete}}
        & 1
        & 1.9 & 1.9 & 0.998
        & 4.4 & 4.4 & 0.998
        & 16.9 & 16.9 & 0.998
        & 0.957 \\

        & 2
        & 6.9 & 14.4 & 0.483
        & 21.0 & 60.3 & 0.348
        & 149.4 & 445.6 & 0.335
        & 0.001 \\

        & 3
        & 7.3 & 33.7 & 0.216
        & 21.2 & 198.1 & 0.107
        & 136.6 & 2,622.9 & 0.052
        & 0.000 \\

        & 4
        & 6.0 & 78.0 & 0.077
        & 14.9 & 670.9 & 0.022
        & 64.4 & 12,513.0 & 0.005
        & 0.000 \\

        & 5
        & 5.9 & 267.7 & 0.022
        & 13.7 & 2,980.3 & 0.005
        & 48.7 & 74,976.4 & 0.001
        & 0.000 \\
        \bottomrule
    \end{tabular}
\end{table*}

\myparagraph{Validation.} \cref{tab:assumption-amp-exp} shows the estimated value of $\Exp{\abs{\PreCand_i}} / \Exp{\abs{\Cand_{i-1}}}$ with and without insertions/deletions. Even with insertions and deletions, we believe that $\Amp \leq 40$ is enough.

\subsection{Validation of \cref{assumption:cand-bound}}

To validate \cref{assumption:cand-bound} with respect to \cref{remark:sim}, we measured the log-shifted mean\footnote{We used log-shifted geometric means because a simple mean can be affected too much by a single outlier.} of $\abs{\Cand_i}$ and $\abs{\Sim_i}$ when $\nWanted = 20$. For a corpus, we used an 8.6B-token subset of FineWeb-Edu. For queries, we used 400 Gemini-generated queries.

\myparagraph{Validation.} \cref{tab:assumption-similar-exp} shows the log-shifted geometric mean of $\abs{\Cand_i}$ and $\abs{\Sim_i}$ for each $i$, with and without insertions and deletions. In this table, $\ratio_i$ described in \cref{remark:sim}, which is the proportion of the $i$-grams in the corpus among all possible $i$-token patterns, is also described.

Since the vocabulary size is large, $\abs{\Cand_i}$ begins to deviate from $\ratio\abs{\Sim_i}$ at $i = 2$ (\cref{remark:sim}). However, \cref{assumption:cand-bound} still holds at $(a, r) = (300, 0.7)$ for the case with insertions/deletions and $\alpha = 0.50$.

The numbers described in the table are just a log-shifted geometric mean over the queries; they do not mean that $r \approx 0.7$ for \emph{all} queries. In some cases, $\abs{\Cand_i}$ does not converge, that is, $r \geq 1$.

For example, when the search query is ``play an important role in the development of'', the values $[\abs{\Cand_1}, \dots, \abs{\Cand_8}]$ are $[37, 262, 277, 150, 551, 1040, 193, 230]$, and the trendline is $r = 1.029$ even if we exclude $\abs{\Cand_1}$.
Note however that the exponential explosion of $\Cand_i$ is much milder than that of $\Sim_i$, which accounts for the efficiency of our algorithm.

\section{Details of the Theoretical Analysis}
\label{app:theory}

This section provides the details of the theoretical analysis in
\cref{sec:theory}.
We give the full proofs of \cref{theorem:sublinear-corpus} and \cref{theorem:estimate} (\cref{app:theory:proof-sublinear} and \cref{app:theory:proof-estimate}).
We also discuss how our analysis relates to the adaptive-threshold procedure and the
additional pruning heuristics used in our implementation (\cref{app:theory:adaptive}).

\subsection{Proof of \cref{theorem:sublinear-corpus}}
\label{app:theory:proof-sublinear}

Now we prove \cref{theorem:sublinear-corpus}, the sublinearity of $\ExpTotal$ with respect to $\abs{\Corpus}$.
The proof uses the standard Zipf-to-Heaps calculation:
if the $n$-gram distribution follows Zipf's law with exponent $\expzipf > 1$, then the number of distinct observed $n$-grams grows sublinearly in corpus size (Heaps' law).

\newcommand{\probzipf}{p}
\newcommand{\coeffzipf}{c}
\begin{lemma}[Zipf's law to Heaps' law]
\label{lem:Zipf-to-Heaps}
Assume that the frequency distribution of $n$-grams satisfies Zipf's law with $\expzipf > 1$ for each $n$;
more precisely, assume that the $k$-th most frequent $n$-gram occurs in the corpus approximately independently at each position with a probability $\probzipf_{n,k}$ such that
$\probzipf_{n,k} \le \coeffzipf_n k^{-\expzipf}$ for some constant $\coeffzipf_n > 0$ for each $n$ and $k$.
Then the expected number of distinct $n$-grams observed in $\Corpus$ is
$
  \Exp{\abs{\CorpusGram_n}}
  \approx
  \Order{\abs{\Corpus}^{1/\expzipf}},
$
where the hidden constant may depend on $n$, $\coeffzipf_n$ and $\expzipf$, but not on $\abs{\Corpus}$.
\end{lemma}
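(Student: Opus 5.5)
We follow the standard Zipf-to-Heaps computation, treating $\abs{\Corpus}$ as the number of positions (up to the additive $n-1$, which is irrelevant). Fix $n$ and write $N = \abs{\Corpus}$. By linearity of expectation,
\[
  \Exp{\abs{\CorpusGram_n}} = \sum_{k \ge 1} \Pr\sqbr{\text{the $k$-th $n$-gram occurs in } \Corpus}.
\]
Under the approximate-independence assumption, the probability that the $k$-th $n$-gram occurs at least once is $1 - (1-\probzipf_{n,k})^{N}$. We bound this term by $\min\paren{1, N \probzipf_{n,k}}$. The first alternative is trivial. The second is the union bound, or equivalently Bernoulli's inequality $(1-x)^N \ge 1 - Nx$. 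Hence
\[
  \Exp{\abs{\CorpusGram_n}} \le \sum_{k \ge 1} \min\paren{1, \coeffzipf_n N k^{-\expzipf}}.
\]

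Next we split the sum at the threshold $k_0 = \floor{(\coeffzipf_n N)^{1/\expzipf}}$, which is where $\coeffzipf_n N k^{-\expzipf}$ crosses $1$. For $k \le k_0$ we use the bound $1$, so these terms contribute at most $k_0 \le \coeffzipf_n^{1/\expzipf} N^{1/\expzipf}$. For $k > k_0$ we use the tail bound and compare the sum with an integral. Since $\expzipf > 1$, the tail converges:
\[
  \sum_{k > k_0} \coeffzipf_n N k^{-\expzipf} \le \coeffzipf_n N \int_{k_0}^{\infty} x^{-\expzipf}\,dx = \frac{\coeffzipf_n N\, k_0^{1-\expzipf}}{\expzipf - 1}.
\]
Substituting $k_0 \approx (\coeffzipf_n N)^{1/\expzipf}$ shows this is $\Order{(\coeffzipf_n N)^{1/\expzipf}}$ as well. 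Adding the two parts gives
\[
  \Exp{\abs{\CorpusGram_n}} \le \paren*{1 + \tfrac{1}{\expzipf-1}} (\coeffzipf_n N)^{1/\expzipf} + \Order{1} = \Order{N^{1/\expzipf}},
\]
with the hidden constant depending only on $n$, $\coeffzipf_n$, and $\expzipf$.

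The main obstacle is the bookkeeping around the floor and the small-$N$ regime. If $k_0 = 0$, the integral from $k_0$ diverges at the origin. We would handle this by starting the tail sum at $\max(k_0,1)+1$ and absorbing the term $k=1$ into the $\Order{1}$. A subtler point is the status of "$\approx$": independence across positions is only approximate, and $\probzipf_{n,k}$ may itself vary with $N$. We would state explicitly that the bound is derived under the idealized independent-occurrence model, with $\coeffzipf_n$ fixed as $N$ grows. We would also remark that the matching lower bound of order $N^{1/\expzipf}$ is not needed for \cref{theorem:sublinear-corpus}, which only requires the upper bound. That theorem then follows by combining this lemma with its hypothesis $\Exp{\abs{\Cand_i}} = \Order{\Exp{\abs{\CorpusGram_i}}}$, \cref{assumption:precand-amp}, and $\Total = \sum_i \abs{\PreCand_i}$.
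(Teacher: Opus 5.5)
Your proposal is correct and follows essentially the same route as the paper's proof: linearity of expectation, then the union bound to get $\min(1, c_n N k^{-\delta})$, then a split at $\lfloor (c_n N)^{1/\delta} \rfloor$, then an integral comparison for the tail, which converges because $\delta > 1$. You also handle two points the paper's proof passes over: the regime $k_0 = 0$, and the floor, where it is really a lower bound on $k_0$ that controls $k_0^{1-\delta}$.
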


\begin{proof}
Let $I_{n,k}$ be the indicator variable for the event that the $k$-th most frequent $n$-gram occurs at least once in $\Corpus$. Then
$
  \abs{\CorpusGram_n}
  =
  \sum_{k \ge 1} I_{n,k}.
$
By linearity of expectation,
$
  \Exp{\abs{\CorpusGram_n}}
  =
  \sum_{k\ge 1}\Pr[I_{n,k}=1].
$
By the union bound,
letting $N_n \triangleq \abs{\Corpus} - n + 1$ be the number of $n$-gram positions in $\Corpus$,
$
  \Pr[I_{n,k}=1]
  \lesssim
  \min(1, N_n \probzipf_{n,k})
  \le
  \min(1, \abs{\Corpus} \coeffzipf_n k^{-\expzipf}).
$
Therefore,
$
  \Exp{\abs{\CorpusGram_n}}
  \lesssim
  \sum_{k\ge 1}\min(1,\abs{\Corpus} \coeffzipf_n k^{-\expzipf}).
$

Let
$
  K \triangleq \floor{(\abs{\Corpus} \coeffzipf_n)^{1/\expzipf}}.
$
Note that $\min(1,\abs{\Corpus} \coeffzipf_n k^{-\expzipf})=1$ for $k \le K$ and $\min(1,\abs{\Corpus} \coeffzipf_n k^{-\expzipf})=\abs{\Corpus} \coeffzipf_n k^{-\expzipf}$ for $k > K$.
Using $\expzipf>1$ and $K = \Order{\abs{\Corpus}^{1/\expzipf}}$, we have
\begin{align*}
    \sum_{k > K} \abs{\Corpus} \coeffzipf_n k^{-\expzipf}
    \ \ &\le\ \
    \abs{\Corpus} \coeffzipf_n\int_K^\infty x^{-\expzipf}\,dx
\\
    \ \ &=\ \
    \frac{\abs{\Corpus} \coeffzipf_n}{\expzipf-1}K^{1-\expzipf}
\\
    \ \ &=\ \
    \Order{\abs{\Corpus} \cdot \paren{\abs{\Corpus}^{1/\expzipf}}^{1 - \expzipf}}
\\
    \ \ &=\ \
    \Order{\abs{\Corpus}^{1/\expzipf}}.
\end{align*}
We also have
$
  \sum_{k\le K} 1
  =
  \Order{K}
  =
  \Order{\abs{\Corpus}^{1/\expzipf}}
$.
Therefore, combining the two parts $k \le K$ and $k > K$, we obtain
$$
  \Exp{\abs{\CorpusGram_n}}
  \ \,\approx\,\
  \Order{\abs{\Corpus}^{1/\expzipf}} + \Order{\abs{\Corpus}^{1/\expzipf}}
  \ \,=\,\
  \Order{\abs{\Corpus}^{1/\expzipf}}.
$$
\end{proof}

We also introduce the following utility lemma: \begin{lemma}
\label{lem:total-bound}[Bound $\Total$ with $\abs{\Cand_i}$]
    Under \cref{assumption:precand-amp},
$\ExpTotal \le \Amp \sum_{i=0}^{\lenquery-1}\Exp{\abs{\Cand_i}}$.
\end{lemma}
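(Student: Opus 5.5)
The argument starts from the identity $\Total = \sum_{i=1}^{\lenquery} \abs{\PreCand_i}$, which holds because \cref{alg:search} performs exactly one exact lookup per element of $\PreCand_i$ in the filtering step of iteration $i$. We then take expectations, apply \cref{assumption:precand-amp} termwise, and reindex.

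First, linearity of expectation gives
\begin{align*}
  \ExpTotal = \sum_{i=1}^{\lenquery} \Exp{\abs{\PreCand_i}}.
\end{align*}
This requires no independence, only that each $\abs{\PreCand_i}$ has finite expectation. That holds automatically: $\Cand_{i-1}$ is a set of patterns occurring in the finite corpus, and the similarity threshold leaves only finitely many extensions $\word'$ for each $\word$, so every $\abs{\PreCand_i}$ is bounded.

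Second, \cref{assumption:precand-amp} holds for every $i > 0$, so it applies to each index $1 \le i \le \lenquery$ and gives $\Exp{\abs{\PreCand_i}} \le \Amp\, \Exp{\abs{\Cand_{i-1}}}$. Summing these inequalities and substituting $j = i-1$ yields
\begin{align*}
  \ExpTotal \le \Amp \sum_{i=1}^{\lenquery} \Exp{\abs{\Cand_{i-1}}} = \Amp \sum_{j=0}^{\lenquery-1} \Exp{\abs{\Cand_j}},
\end{align*}
which is the claimed bound.

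The only point needing attention is the boundary term $j = 0$. There $\Cand_0 = \{\emppat\}$, so $\Exp{\abs{\Cand_0}} = 1$, and the $i=1$ instance of the assumption bounds $\Exp{\abs{\PreCand_1}}$ by $\Amp$. One should also confirm that the lookup count is exactly $\sum_i \abs{\PreCand_i}$. The $k$-gram and last-bits pruning heuristics can only remove lookups, so this count is an upper bound on the lookups actually performed, and the inequality remains valid if $\Total$ counts only the lookups actually made. There is no real obstacle; the lemma is bookkeeping for the later proofs of \cref{theorem:sublinear-corpus} and \cref{theorem:estimate}.
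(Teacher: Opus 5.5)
Your proposal is correct and matches the paper's proof. Both arguments start from $\Total = \sum_{i=1}^{\lenquery} \abs{\PreCand_i}$, apply linearity of expectation, bound each term with \cref{assumption:precand-amp}, and reindex. Your side remarks on the $i=1$ boundary term and on pruning only removing lookups are sound, though the finiteness check is unnecessary because linearity of expectation holds for nonnegative variables regardless.
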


\begin{proof}
By the fact $\Total = \sum_{i=1}^\lenquery \abs{\PreCand_i}$ and \cref{assumption:precand-amp}, we have
\begin{align*}
  \ExpTotal
  \ \ =\ \
  \sum_{i=1}^\lenquery \Exp{\abs{\PreCand_i}}
  \ \ &\le\ \
  \sum_{i=1}^\lenquery \Amp\mkern2mu \Exp{\abs{\Cand_{i - 1}}}
\\[-.2em]
  &=\ \
  \Amp \sum_{i=0}^{\lenquery-1} \Exp{\abs{\Cand_i}}.
\end{align*}
The first equation holds by linearity of expectation.
\end{proof}

\begin{proof}[Proof of \cref{theorem:sublinear-corpus}]
By \cref{lem:Zipf-to-Heaps}, fed with the first assumption, and the second assumption of the theorem, we obtain
\[
  \Exp{\abs{\Cand_i}} \,=\,
  \Order{\Exp{\abs{\CorpusGram_i}}} \,\approx\,
  \Order{\abs{\Corpus}^{1 / \expzipf}}
\]
for each fixed $i \le \lenquery$.
Therefore, by \cref{lem:total-bound}, we have
\begin{align*}
    \ExpTotal
    \ \,&\le\,\
    \Amp \sum_{i=0}^{\lenquery-1} \Exp{\abs{\Cand_i}}
\\
    \ \,&\approx\,\
    \Amp \sum_{i=0}^{\lenquery-1} \Order{\abs{\Corpus}^{1 / \expzipf}}
    \ \,=\,\
    \Order{\abs{\Corpus}^{1 / \expzipf}}.
\end{align*}
The last equation holds since $\lenquery$ is finitely fixed.
\end{proof}

\subsection{Proof of \cref{theorem:estimate}}
\label{app:theory:proof-estimate}

\Cref{theorem:estimate} can be proved by straightforward calculation, under \cref{assumption:cand-bound}.

\begin{proof}[Proof of \cref{theorem:estimate}]
    By \cref{lem:total-bound} and \cref{assumption:cand-bound}, under $r \ne 1$, we have
    \begin{align*}
        \ExpTotal
        \ \ &\le\ \
        \Amp \sum_{i=0}^{\lenquery-1} \Exp{\abs{\Cand_i}}
    \\
        \ \ &\le\ \
        \Amp \sum_{i=0}^{\lenquery-1} \Coeff \rate^i
        \ \ =\ \
        \Amp \Coeff (1 - \rate^{\lenquery}) / (1 - \rate).
    \end{align*}
\end{proof}

\subsection{Relation to Adaptive Thresholding and Additional Pruning}
\label{app:theory:adaptive}

\newcommand{\nAdapt}{N}
The analysis above is stated for a single fixed threshold $\thresh$.  The
implemented system uses adaptive thresholding: it runs the same core search for
a finite sequence of thresholds
$
  \thresh_1 > \thresh_2 > \cdots > \thresh_\nAdapt
$
until enough results are found or a floor threshold is reached.  If $\nAdapt$ is
bounded and the assumptions above hold uniformly for all thresholds above the
floor, then the adaptive version incurs only a constant-factor overhead:
\[
  \Total_{\mathrm{adapt}}
  \le
  \sum_{j=1}^\nAdapt \Total(\thresh_j).
\]
Thus the $\Order{1}$ query-length bound is preserved under adaptive thresholding,
provided the constants in the assumptions can be chosen uniformly over the
thresholds considered by the adaptive procedure.

The additional pruning techniques used in the implementation, such as
$k$-gram pruning and last-bits pruning, only remove exact lookups that the
core algorithm would otherwise perform.  Therefore they cannot increase
$\Total$, and all upper bounds above continue to apply.

\section{Details of Contamination Detection in \cref{sec:application}}
\label{sec:appendix-contamination-details}

The list of 7 benchmarks we used is shown in \cref{tab:contamination-citation}. In addition, \cref{tab:contamination1,tab:contamination2,tab:contamination3,tab:contamination4} show 36 examples of benchmark contamination that were flagged as dirty only by soft search, along with manual verification results (for a discussion of benchmark contamination, see \cref{sec:application}).

\begin{table*}[t]
    \centering
    \small
\caption{List of 36 examples of benchmark contamination which were flagged as dirty only by soft search, along with manual verification results (example 1--10). The bolded parts indicate the soft matches, and that of the Problem section spans 10 tokens. The \textbf{\textcolor{blue}{blue}}, \textbf{\textcolor{magenta}{magenta}}, and \textbf{\sout{\textcolor{lightgray}{[gray]}}} parts indicate substitution, insertion, and deletion.}
    \label{tab:contamination1}
    \begin{tabular}{p{0.7cm}p{2cm}p{5cm}p{5cm}p{2cm}}
        \toprule
        \# & Dataset & Problem & One of the most similar texts in corpus & Result \\
        \midrule
        1  & MMLU & The difference between dc and \textbf{\textcolor{teal}{ac in electric circuits is that in dc, charges}} flow &  ... Please click on the link below to read Our Vision Statement. The difference between DC and \textbf{\textcolor{teal}{AC \sout{\textcolor{lightgray}{[in]}} electric circuits is that in DC, charges}} flow in one direction ... & Semantic contamination \\

        2 & MMLU & This is a part of the communication process where the sender selects a combination of \textbf{\textcolor{teal}{appropriate words, pictures, symbols, and music to}} represent a message to be transmitted: & ... Question 12 This is a part of the communication process where the sender selects a combination of \textbf{\textcolor{teal}{appropriate words, pictures, symbols \sout{\textcolor{lightgray}{[,]}} and music to}} represent a message to be transmitted: a) Encoding. b) Decoding. c) Transfer. d) Feedback. ... & Semantic contamination \\

        3 & MMLU & \textbf{\textcolor{teal}{Which of the following conditions is caused by a trinucleotide}} (triplet) repeat expansion? & ... 7. \textbf{\textcolor{teal}{Which of the following conditions is caused by \sout{\textcolor{lightgray}{[a]}} trinucleotide}} (triplet) repeat expansion? a) Cystic fibrosisb) Duchenne muscular Dystrophyc) Huntington diseased) ... & Semantic Contamination \\

        4 & MMLU & The two most common causes \textbf{\textcolor{teal}{of foodborne illness in the United States and Europe are}}: & ... Foodborne pathogenic bacteria such as, pathogenic E. coli, Listeria and Salmonella and enteric viruses such as human norovirus, hepatitis A and E viruses are the most common causes \textbf{\textcolor{teal}{of foodborne illness in the United States and Europe \sout{\textcolor{lightgray}{[are]}}}}. However, bacterial and viral microorganisms differ ... & Semantic Contamination \\

        5 & MMLU & There are two games involving flipping a fair coin. In the first game you win a prize if you can throw between 45\% and 55\% heads. In the second game you win if you can \textbf{\textcolor{teal}{throw more than \textcolor{blue}{80\%} heads. For each game}} would you rather flip the coin 30 times or 300 times? & ... 3) Whiteboard Review There are two games involving flipping a fair coin. In the first game, you win a prize if you can throw between 45\% and 55\% heads; in the second game, you win if you can \textbf{\textcolor{teal}{throw more than \textcolor{blue}{60\%} heads. For each game}}, would you rather flip the coin 30 times or 300 times? A.30 times for each game. B.300 times for each game ... & Template Leakage \\

        6 & MMLU & \textbf{\textcolor{teal}{What factors increase the \textcolor{blue}{likelihood} of burnout?}} & ... \textbf{\textcolor{teal}{What Factors Increase the \textcolor{blue}{Risk} of Burnout?}} The risk of burnout is different for everyone, and it depends & Semantic Contamination ... \\

        7 & MMLU & \textbf{\textcolor{teal}{Electrical stimulation of the brain for the treatment of chronic}} pain & ... Forty-eight patients underwent \textbf{\textcolor{teal}{electrical stimulation of the brain for \sout{\textcolor{lightgray}{[the]}} treatment of chronic}} pain between 1978 and 1983. Average pain duration prior to treatment was 4.5 years. ... & False Positive \\

        8 & MMLU & The ethics of conducting archaeological research on sites with human remains of indigenous North American ancestors is addressed by \textbf{\textcolor{teal}{the Native American Graves Protection and Repatriation Act. \textcolor{blue}{Problems}}} often arise when: & ... to see that they were properly buried in accordance with \textbf{\textcolor{teal}{the Native American Graves Protection and Repatriation Act. \textcolor{magenta}{The} \textcolor{blue}{trouble}}} is that Kennewick Man does not appear to have ... & Semantic Contamination \\

        9 & MMLU & \textbf{\textcolor{teal}{What is \textcolor{blue}{123} / \textcolor{blue}{8}?}} & ... How to Find the Value of 163/4 \textbf{\textcolor{teal}{What Is \textcolor{blue}{163}/\textcolor{blue}{4}?}} 163/4 is an algebraic expression. It is a fraction, which means it is composed of two parts, a numerator and a denominator. ... & Template Leakage \\

        10 & MMLU & \textbf{\textcolor{teal}{What do \textcolor{blue}{you} call three consecutive strikes in bowling?}} & ... a. - Virginia limestone b. - white marble c. - sandstone d. - Virginia granite 3. \textbf{\textcolor{teal}{What do \textcolor{blue}{we} call three consecutive strikes in bowling ?}} ... & Semantic Contamination \\
        \bottomrule
    \end{tabular}
\end{table*}

\begin{table*}[t]
    \centering
    \small
\caption{List of 36 examples of benchmark contamination which were flagged as dirty only by soft search, along with manual verification results (example 11--20). The bolded parts indicate the soft matches, and that of the Problem section spans 10 tokens. The \textbf{\textcolor{blue}{blue}}, \textbf{\textcolor{magenta}{magenta}}, and \textbf{\sout{\textcolor{lightgray}{[gray]}}} parts indicate substitution, insertion, and deletion.}
    \label{tab:contamination2}
    \begin{tabular}{p{0.7cm}p{2cm}p{5cm}p{5cm}p{2cm}}
        \toprule
        \# & Dataset & Problem & One of the most similar texts in corpus & Result \\
        \midrule
        11 & MMLU & \_\_\_\_\_\_\_\_\_\_\_\_\_\_\_ is a \textbf{\textcolor{teal}{popular tool used for network analysis in multiprotocol diverse \textcolor{blue}{network}}}. & ... Which of the following is a \textbf{\textcolor{teal}{popular tool used for network analysis in multiprotocol diverse \textcolor{blue}{networks}}}? ... & Semantic Contamination \\

        12 & MMLU & \textbf{\textcolor{teal}{In order to go from national income to GDP one must}} & ... See All test questions 1. \textbf{\textcolor{teal}{In order to go from national income to GDP\textcolor{magenta}{,} one must}} 2. Which of the following creates the trade-offdepicted by the Phillips curve? ... & Semantic Contamination \\

        13 & MMLU & \textbf{\textcolor{teal}{Find the GCD of \textcolor{blue}{25} and \textcolor{blue}{55}.}} & ... Step 1: Firstly, we need to \textbf{\textcolor{teal}{find the GCD of \textcolor{blue}{30} and \textcolor{blue}{56}.}} The GCD is the largest number that divides evenly into both 30 and 56. In this case, the GCD of 30 and 56 is 2. Step 2: Dividing both the numerator and the denominator by the GCD will simplify the fraction. ... & False Positive \\

        14 & MMLU-Pro & Suppose the economy is in long-run equilibrium when a temporary expansionary supply shock is felt in the economy. This changes the short-run \textbf{\textcolor{teal}{Phillips curve the short-run unemployment rate and the long-run unemployment}} rate in which of the following ways? SHORT-RUN PHILLIPS CURVE ... & ... 53 . Suppose the economy is in long-run equilibrium when a temporary expansionary supply shock is felt in the economy. This changes the short-run \textbf{\textcolor{teal}{Phillips curve\textcolor{magenta}{,} the short-run unemployment}} rate, and the long-run unemployment rate in which of the following ways? ... & Semantic Contamination \\

        15 & MMLU-Pro & Find the smallest positive integer that leaves a remainder of 2 when divided by 3, a remainder of 3 when divided by 5, \textbf{\textcolor{teal}{and a remainder of 1 when divided by \textcolor{blue}{7}.}} & ... Suppose you wished to know, for no apparently useful reason, what number had a remainder of 2 when divided by 3; a remainder of 3 when divided by 4 \textbf{\textcolor{teal}{and a remainder of 1 when divided by \textcolor{blue}{5}.}} We'll give the workings in a moment, but for our less patient readers the answer is 71. ... & Template Leakage \\

        16 & MMLU-Pro & If households are more optimistic about \textbf{\textcolor{teal}{the future how would the consumption function be affected?}} & ... 2. If households are more optimistic about \textbf{\textcolor{teal}{the future\textcolor{magenta}{,} how would the consumption function be affected?}} 3. U.S. real GDP most likely falls when ... & Semantic Contamination \\

        17 & MMLU-Pro & Let P be a procedure that for some inputs calls itself (i.e., is recursive). If P is guaranteed to terminate, which of the following statements must be true? I. P has a local \textbf{\textcolor{teal}{variable. II. P has an execution path where}} it does not call itself. III. P either refers to a global variable or has at least one parameter. & ... If P is guaranteed to terminate, which of the following statement(s) must be true? I. P has a local \textbf{\textcolor{teal}{variable \sout{\textcolor{lightgray}{[.]}} II. P has an execution path where}} it does not call itself III. P either refers to a global variable or has at least one parameter (A) I only (B) II only (C) III only (D) II and III only ... & Semantic Contamination \\

        18 & MMLU-Pro & Use the list of numbers: 22, 25, 14, 11, \textbf{\textcolor{teal}{23, 27, 46. What is the mode}}? & ... Use the list of numbers to answer the question below. 22, 25, 14, 11, \textbf{\textcolor{teal}{23, 27, 46 \sout{\textcolor{lightgray}{[.]}} What is the mode}}? A. 23 B. 24 C. 35 D. No mode ... & Semantic Contamination \\

        19 & MMLU-Pro & “Natural law is based on the nature of man and on his inward need \textbf{\textcolor{teal}{of living in society.” Who said it?}} & ... 12. “Natural law is based on the nature of man and on his inward need \textbf{\textcolor{teal}{of living in society”\textcolor{magenta}{.} Who said it?}} (a) Hugo Grotius Answers: 1.(c) ... & Semantic Contamination \\

        20 & MMLU-Pro & ...“and of their alterations as observed more precisely later on we shall give a description here. Also I measured \textbf{\textcolor{teal}{the distances between them by means of the telescope.}}... & ... and of their alterations as observed more precisely later on we shall give a description here. Also I measured \textbf{\textcolor{teal}{the distances between them by means of the telescope \sout{\textcolor{lightgray}{[.]}}}}, using the method explained before. Moreover I recorded ... & Semantic Contamination \\

        \bottomrule
    \end{tabular}
\end{table*}

\begin{table*}[t]
    \centering
    \small
\caption{List of 36 examples of benchmark contamination which were flagged as dirty only by soft search, along with manual verification results (example 21--29). The bolded parts indicate the soft matches, and that of the Problem section spans 10 tokens. The \textbf{\textcolor{blue}{blue}}, \textbf{\textcolor{magenta}{magenta}}, and \textbf{\sout{\textcolor{lightgray}{[gray]}}} parts indicate substitution, insertion, and deletion.}
    \label{tab:contamination3}
    \begin{tabular}{p{0.7cm}p{2cm}p{5cm}p{5cm}p{2cm}}
        \toprule
        \# & Dataset & Problem & One of the most similar texts in corpus & Result \\
        \midrule
        21 & MMLU-Pro & For matrix A = [[3, 1, 1], [2, 4, 2], \textbf{\textcolor{teal}{[1, \textcolor{blue}{1}, 3]], what}} are its eigen values? & ... If we wanted to sort this list by the second element in each list so that the result would be [[4, 0, 1], [2, 1, 3], \textbf{\textcolor{teal}{[1, \textcolor{blue}{2}, 3]], what}} function would we write to pass as the key value to the sort() method? ... & False Positive \\

        22 & MMLU-Pro & ... German history reached its turning-point and failed to turn. This was the fateful essence of 1848. A. J. P. \textbf{\textcolor{teal}{Taylor, The Course of German History, 1945 The}} subject of Taylor's analysis in this quotation is & ... This was the fateful essence of 1848. A. J. P. \textbf{\textcolor{teal}{Taylor, The Course of German History, 1945 \sout{\textcolor{lightgray}{[The]}}}} 21. What is the subject of Taylor’s analysis? A. The Industrial Revolution in the context of German history B. The failure of the revolutions of 1848 ... & Semantic Contamination\\

        23 & MMLU-Pro & ... divided into so many sovereign and independent communities, under a conviction of the absolute necessity of uniting all our councils and all our strength, to maintain and defend \textbf{\textcolor{teal}{our common liberties…." Journals of the Continental}} Congress, 1777 The most notable achievement of the United States under the Articles of Confederation was & ... under a conviction of the absolute necessity of uniting all our councils and all our strength, to maintain and defend \textbf{\textcolor{teal}{our common liberties… \sout{\textcolor{lightgray}{[.]}}” Journals of the Continental}} Congress ... & Semantic Contamination\\

        24 & GSM8K & A football team played 22 games. They won \textbf{\textcolor{teal}{\textcolor{blue}{8} more than they lost. How many did they}} win? & ... I might say, "In a season of 74 games, the basketball team lost some games, but they won \textbf{\textcolor{teal}{\textcolor{blue}{28} more than they lost. How many did they}} win?" Then, assign variables. If possible, assign a variable to the value the problem asks you to find. Here, let: W = number of wins ... & Template Leakage\\

        25 & MATH & How many times does the digit 8 appear in \textbf{\textcolor{teal}{the list of all integers from 1 to \textcolor{blue}{1000}?}} & ... How many times does the digit 9 appear in \textbf{\textcolor{teal}{the list of all integers from 1 to \textcolor{blue}{500}?}} (The number \$ 99 \$, for example, is counted twice, because ... & Template Leakage \\

        26 & MATH & The sum of two numbers is 25 and \textbf{\textcolor{teal}{their difference is \textcolor{blue}{11}. What is the larger of}} the two numbers? & ... The sum of two numbers is 139 while \textbf{\textcolor{teal}{their difference is \textcolor{blue}{19}. What is the larger of}} two numbers? Let the two numbers be x and y. x + y = 139 ... & Template Leakage\\

        27 & MATH & The sum of two numbers is 40 and \textbf{\textcolor{teal}{their difference is \textcolor{blue}{12}. What is their product?}} & ... increased by 9 times a number. What is the number? - The sum of two numbers is 10 and \textbf{\textcolor{teal}{their difference is \textcolor{blue}{4}. What is their product?}} ... & Template Leakage\\

        28 & MATH & Let A = 1, B = 2, C = 3, ..., Z = 26. The product value of a word is equal to the product of the values of its letters. For example, CAB has a product value \textbf{\textcolor{teal}{of \textcolor{blue}{3} \$\texttt{\textbackslash}times\$ 1 \$\texttt{\textbackslash}times}}\$ 2 = 6. What common English word has a product value of 715? It does not have to be of length 3. & ... such as 3 were really helpful because they only had one option \textbf{\textcolor{teal}{of \textcolor{blue}{1} \$\texttt{\textbackslash}times\$ 1 \$\texttt{\textbackslash}times}} \$ 3. I also realised that the products that had 5 in them would have to have a factor of 5 otherwise it would be impossible to make that product. ... & False Positive\\

        29 & MATH & The equation \$y = -16t\verb|^|2 + 26t \textbf{\textcolor{teal}{+ \textcolor{blue}{105}\$ describes the height (in feet)}} of a ball tossed up in the air at 26 feet per second from a height of 105 feet above the ground. In how many seconds will the ball hit the ground? Express your answer as a decimal rounded to the nearest tenth. & ... The equation \$h = -16t\verb|^|2 - 24t \textbf{\textcolor{teal}{+ \textcolor{blue}{160}\$ describes the height (in feet)}} of the ball. In how many seconds will the ball hit the ground? Express your answer as a decimal. ... & Template Leakage \\

        \bottomrule
    \end{tabular}
\end{table*}

\begin{table*}[t]
    \centering
    \small
\caption{List of 36 examples of benchmark contamination which were flagged as dirty only by soft search, along with manual verification results (example 30--36). The bolded parts indicate the soft matches, and that of the Problem section spans 10 tokens. The \textbf{\textcolor{blue}{blue}}, \textbf{\textcolor{magenta}{magenta}}, and \textbf{\sout{\textcolor{lightgray}{[gray]}}} parts indicate substitution, insertion, and deletion.}
    \label{tab:contamination4}
    \begin{tabular}{p{0.7cm}p{2cm}p{5cm}p{5cm}p{2cm}}
        \toprule
        \# & Dataset & Problem & One of the most similar texts in corpus & Result \\
        \midrule
        30 & MATH & It took Lara five days to read a novel. Each day after the first day, Lara read half as many pages as the day before. \textbf{\textcolor{teal}{If the novel was \textcolor{blue}{248} pages long, how many}} pages did she read on the first day? & ... Each day after the first day, Lara read half as many pages as the day before. \textbf{\textcolor{teal}{If the novel was \textcolor{blue}{310} pages long, how many}} pages did she read on the first day? x + 1/2 x + 1/4 x + 1/8x + 1/16x = 310 31/16 x = 310 x = first day = 160 pages ... & Template Leakage \\

        31 & MATH & If \$3x\textbf{\textcolor{teal}{+\textcolor{blue}{5}=\textcolor{blue}{29}\$, what is the value}} of \$x\$? & ... By manipulating equations containing variables to find their values. In the equation \$x \textbf{\textcolor{teal}{+ \textcolor{blue}{7} = \textcolor{blue}{19}\$, what is the value}} of \$x\$? ... & False Positive \\

        32 & MATH & A pyramid has \textbf{\textcolor{teal}{6 vertices and \textcolor{blue}{6} faces. How many edges does}} it have? & ... Example 2: A polyhedron has \textbf{\textcolor{teal}{6 vertices and \textcolor{blue}{8} faces. How many edges does}} it have? We can use formula of euler ⇒ F + V - E = 2 Faces(F) = 8 ... & Template Leakage \\

        33 & ARC-Challenge & \textbf{\textcolor{teal}{Which form of energy \textcolor{blue}{can} travel through a vacuum?}} & ... Which of these can travel through a vacuum? \textbf{\textcolor{teal}{Which form of energy \textcolor{blue}{Cannot} travel through vacuum?}} Heat light and sound are all forms of energy. Heat can be transferred by ... & Semantic Contamination \\

        34 & ARC-Challenge & \textbf{\textcolor{teal}{Which \textcolor{blue}{describes} a chemical change?}} (choices) 'sugar dissolving in tea', 'sunshine warming a sidewalk', 'scissors cutting paper into pieces', 'acid in the stomach digesting food' & ... Q7 7. Chemical change is a change in its color, smell, or odor and the formation of bubbles. \textbf{\textcolor{teal}{Which \textcolor{blue}{describe} a chemical change?}} ice turning into water crushing a can mixing oil and water ripening of mango ... & False Positive \\

        35 & ARC-Challenge & What is the mass \textbf{\textcolor{teal}{of a carbon atom that has 6 protons, \textcolor{blue}{7}}} neutrons, and 6 electrons? & ... Atomic mass unit (u or amu) Unit of measurement for the masses of particles; 1/12 the mass \textbf{\textcolor{teal}{of a carbon atom that has 6 protons, \textcolor{blue}{6}}} neutrons, and 6 electrons. ... & False Positive \\

        36 & ARC-Challenge & An atom of tin has an atomic number of 50 and \textbf{\textcolor{teal}{a mass number of \textcolor{blue}{119}. How many protons,}} electrons, and neutrons are found in one neutral atom of tin? & ... Source: Read Full Article Xenon has an atomic number of 54. A particular isotope of xenon has \textbf{\textcolor{teal}{a mass number of \textcolor{blue}{131}. How many protons \sout{\textcolor{lightgray}{[.]}}}} and how many neutrons does each atom of that isotope have? ... & Template Leakage \\

        \bottomrule
    \end{tabular}
\end{table*}

\begin{table*}[t]
    \centering
    \small
\caption{List of 7 benchmarks we used for the contamination experiment.}
    \label{tab:contamination-citation}
    \begin{tabular}{p{2cm}p{4cm}p{9.8cm}}
        \toprule
        Genre & Benchmark & URL \\
        \midrule
        \multirow{3}{2cm}{Knowledge \& Reasoning}
          & MMLU \cite{hendrycks2020measuring} & \url{https://huggingface.co/datasets/cais/mmlu} \\
          & MMLU-Pro \cite{wang2024mmlu} & \url{https://huggingface.co/datasets/TIGER-Lab/MMLU-Pro} \\
          & GPQA \cite{rein2024gpqa} & \url{https://huggingface.co/datasets/Idavidrein/gpqa} \\
        \midrule
        \multirow{2}{2cm}{Math}
          & GSM8K \cite{cobbe2021training} & \url{https://huggingface.co/datasets/openai/gsm8k} \\
          & MATH \cite{hendrycks2021measuring} & \url{https://huggingface.co/datasets/EleutherAI/hendrycks_math} \\
        \midrule
        Coding & HumanEval \cite{chen2021evaluating} & \url{https://huggingface.co/datasets/openai/openai_humaneval} \\
        \midrule
        Commonsense Understanding & ARC-Challenge \cite{clark2018think} & \url{https://huggingface.co/datasets/allenai/ai2_arc} \\
        \bottomrule
    \end{tabular}
\end{table*}

\ifdefined\VersionWithComments \setcounter{tocdepth}{2}
\listoftodos{}
\fi

\end{document}